\title{On UMAP's true loss function}
\author{%
  Sebastian Damrich \hspace{0.5cm}   Fred A. Hamprecht \\
  HCI/IWR\\
  Heidelberg University\\
   69117 Heidelberg, Germany \\
  \texttt{\{sebastian.damrich, fred.hamprecht\}@iwr.uni-heidelberg.de}
}
\begin{document}

\maketitle
\newtheorem{theorem}{Theorem}[section]
\begin{abstract}
UMAP has supplanted $t$-SNE as state-of-the-art for visualizing high-dimensional datasets in many disciplines, but the reason for its success is not well understood. In this work, we investigate UMAP's sampling based optimization scheme in detail. We derive UMAP's effective loss function in closed form and find that it differs from the published one. As a consequence, we show that UMAP does not aim to reproduce its theoretically motivated high-dimensional UMAP similarities. Instead, it tries to reproduce  similarities that only encode the shared $k$ nearest neighbor graph, thereby challenging the previous understanding of UMAP's effectiveness. Instead, we claim that the key to UMAP's success is its implicit balancing of attraction and repulsion resulting from negative sampling. This balancing in turn facilitates optimization via gradient descent. We corroborate our theoretical findings on toy and single cell RNA sequencing data.
\end{abstract}

\section{Introduction}
Today's most prominent methods for non-parametric, non-linear dimension reduction are \mbox{$t$-Distributed} Stochastic Neighbor Embedding ($t$-SNE) \cite{van2008visualizing, van2014accelerating} and Uniform Manifold Approximation and Projection for Dimension Reduction (UMAP)~\cite{mcinnes2018umap}. The heart of UMAP is claimed to be its sophisticated method for extracting the high-dimensional similarities, motivated in the language of algebraic topology and category theory. However, the reason for UMAP's excellent visualizations is not immediately obvious from this approach. In particular, UMAP's eponymous uniformity assumption is arguably difficult to defend for the wide variety of datasets on which UMAP performs well. Therefore, it is not well understood what exactly about UMAP is responsible for its great visualizations. 

Both $t$-SNE and UMAP have to overcome the computational obstacle of considering the quadratic number of interactions between all pairs of points. The breakthrough for $t$-SNE came with a Barnes-Hut approximation~\cite{van2014accelerating}. Instead, UMAP employs a sampling based approach to avoid a quadratic number of repulsive interactions. Other than~\cite{bohm2020unifying} little attention has been paid to this sampling based optimization scheme. In this work, we fill this gap and analyze UMAP's optimization method in detail. In particular, we derive the effective, closed form loss function which is truly minimized by UMAP's optimization scheme. While UMAP's use of negative sampling was intended to avoid quadratic complexity, we find, surprisingly, that the resulting effective loss function differs significantly from UMAP's purported loss function. The weight of the loss function's repulsive term is drastically reduced. As a consequence, UMAP is not actually geared towards reproducing the clever high-dimensional similarities. In fact, we show that most information beyond the shared $k$NN graph connectivity is essentially ignored as UMAP actually approximates a binarized version of the high-dimensional similarities. These theoretical findings underpin some empirical observations in~\cite{bohm2020unifying} and demonstrate that the gist of UMAP is not in its high-dimensional similarities. This resolves the disconnect between UMAP's uniformity assumption and its success on datasets of varying density. From a user's perspective it is important to gain an intuition for deciding which features of a visualization can be attributed to the data and which ones are more likely artifacts of the visualization method. With our analysis, we can explain UMAP's tendency to produce crisp, locally one-dimensional substructures as a side effect of its optimization.

Without the motivation of reproducing sophisticated high-dimensional similarities in embedding space, it seems unclear why UMAP performs well. We propose an alternative explanation for UMAP's success: The sampling based optimization scheme balances the attractive and repulsive loss terms despite the sparse high-dimensional attraction. Consequently, UMAP can leverage the connectivity information of the shared $k$NN graph via gradient descent effectively. 

\section{Related Work}\label{sec:related_work}

For most of the past decade $t$-SNE~\cite{van2008visualizing, van2014accelerating} was considered state-of-the-art for non-linear dimension reduction. In the last years 
UMAP~\cite{mcinnes2018umap} at least ties with $t$-SNE. In both cases, points are embedded so as to reproduce high-dimensional similarities; but the latter are sparse for UMAP and do not need to be normalized over the entire dataset. Additionally, $t$-SNE adapts the local scale of high-dimensional similarities by achieving a predefined perplexity, while UMAP uses its uniformity assumption. The low-dimensional similarity functions also differ. Recently, B\"{o}hm et al.~\cite{bohm2020unifying} placed both UMAP and $t$-SNE on a spectrum of dimension reduction methods that mainly differ in the amount of repulsion employed. They argue that UMAP uses less repulsion than $t$-SNE. A parametric version of UMAP was proposed in~\cite{sainburg2020parametric}.

UMAP's success, in particular in the biological community~\cite{becht2019dimensionality, packer2019lineage}, sparked interest in understanding UMAP more deeply. The original paper~\cite{mcinnes2018umap} motivates the choice of the high-dimensional similarities using concepts from algebraic topology and category theory and thus justifies UMAP's transition from local similarities $\mu_{i \to j}$ to global similarities $\mu_{ij}$. The authors find that while the algorithm focuses on reproducing the local similarity pattern similar to $t$-SNE, it achieves better global results than $t$-SNE. In contrast, \citet{kobak2021initialization} attribute the better global properties of UMAP visualizations to the more informative initialization and show that $t$-SNE manages to capture more global structure if initialized in a similar way.

\citet{narayan2021assessing} observe that UMAP's uniformity assumption leads to visualizations in which denser regions are more spread out while sparser regions get overly contracted. They propose an additional loss term that aims to reproduce the local density around each point and thus spaces sparser regions out. We provide an additional explanation for overly contracted visualizations: UMAP does not reproduce the high-dimensional similarities but exaggerates the attractive forces over the repulsive ones, which can result in overly crisp visualizations, see Figures~\ref{subfig:toy_ring_UMAP} and \ref{subfig:c_elegans_UMAP}.

Our work aligns with \citet{bohm2020unifying}. The authors conjecture that the sampling based optimization procedure of UMAP prevents the minimization of the supposed loss function, thus not reproducing the high-dimensional similarities in embedding space. They substantiate this hypothesis by qualitatively estimating the relative size of attractive and repulsive forces. In addition, they implement a Barnes-Hut approximation to the loss function~\eqref{eq:UMAP_obj_embd} and find that it yields a diverged embedding. We analyze UMAP's sampling procedure more closely and compute UMAP's true loss function in closed form and contrast it against the exact supposed loss in Section~\ref{sec:eff_loss}. Based on this analytic effective loss function, we can further explain \citet{bohm2020unifying}'s empirical finding that the specific high-dimensional similarities provide little gain over the binary weights of a shared $k$NN graph,\footnote{The shared $k$ nearest neighbor graph contains an edge $ij$ if $i$ is among $j$'s $k$ nearest neighbors or vice versa.} see Section~\ref{sec:target_sims}. Finally, our theoretical framework leads us to a new tentative explanation for UMAP's success discussed in Section~\ref{sec:discussion}.

\section{Background: UMAP}\label{sec:UMAP_overview}
The key idea of UMAP~\cite{mcinnes2018umap} is to compute pairwise similarities in high-dimensional space which inform the optimization of the low-dimensional embedding. Let $x_1, ..., x_n \in \mathbb{R}^D$ be high-dimensional, mutually distinct data points for which low-dimensional embeddings $e_1, ..., e_n \in \mathbb{R}^d$ shall be found, where $d\ll D$, often $d=2$ or $3$. First, UMAP computes high-dimensional similarities between the data points. To do so, the $k$ nearest neighbor ($k$NN) graph is computed, so that $i_1, ..., i_k$ denote the indices of $x_i$'s $k$ nearest neighbors in increasing order of distance to $x_i$.  Then, using its uniformity assumption, UMAP fits a local notion of similarity for each data point $i$ by selecting a scale $\sigma_i$ such that the total similarity of each point to its $k$ nearest neighbors is normalized, i.e. find $\sigma_i$ such that
\begin{equation}
    \sum_{\kappa=1}^k \exp\left(-(d(x_i, x_{i_\kappa}) - d(x_i, x_{i_1})) / \sigma_i\right) = \log_2(k).
\end{equation}
This defines the directed high-dimensional similarities
\begin{equation}
    \mu_{i\to j} = 
\begin{cases}
    \exp\left(-(d(x_i, x_{j}) - d(x_i, x_{i_1})) / \sigma_i\right) \text{ for } j \in \{i_1, \dots, i_k\}\\
    0 \text{ else.}
\end{cases}
\end{equation}
Finally, these are symmetrized to obtain undirected high-dimensional similarities or input similarities between items $i$ and $j$
\begin{equation}\mu_{ij} = \mu_{i\to j} + \mu_{j\to i} - \mu_{i\to j}\mu_{j\to i} \in [0, 1].
\end{equation}
While each node has exactly $k$ non-zero directed similarities $\mu_{i\to j}$ to other nodes which sum to $\log_2(k)$, this does not hold exactly after symmetrization. Nevertheless, typically the $\mu_{ij}$ are highly sparse, each node has positive similarity to on about $k$ other nodes and the degree of each node \mbox{$d_i = \sum_{j=1}^n \mu_{ij}$} is approximately constant and close to $\log_2(k)$, see Figures~\ref{fig:degree_hist} and~\ref{fig:degree_hist_kNN}. For convenience of notation, we set $\mu_{ii} = 0$ and define the total similarity as $\mu_\text{tot} =\frac{1}{2} \sum_{i=1}^n d_i$.

Distance in embedding space is transformed to low-dimensional similarity by a smooth approximation to the high-dimensional similarity function, $\phi(d; a,b) = (1+ad^{2b})^{-1}$, using the same slack and scale for all points. The shape defining parameters $a, b$ are essentially hyperparameters of UMAP. We will overload notation and write 
\begin{equation}
    \nu_{ij} = \phi(||e_i - e_j||) = \phi(e_i, e_j)
\end{equation}
for the low-dimensional similarities or embedding similarities and usually suppress their dependence on $a$ and $b$. 

With this setup, UMAP supposedly optimizes the following objective function with respect to the embeddings $\{e_1, \dots, e_n\}$ approximately:
\begin{alignat}{2}
    \mathcal{L}(\{e_i\}| \{\mu_{ij}\}) 
    &= - 2\sum_{1\leq i < j \leq n} \mu_{ij} \log(\nu_{ij}) &+ &(1-\mu_{ij}) \log(1-\nu_{ij}) \label{eq:UMAP_obj_sim}\\
    &= - 2\sum_{1\leq i < j \leq n} \mu_{ij}\underbrace{ \log(\phi(e_i, e_j))}_{-\mathcal{L}^a_{ij}} &+ & (1-\mu_{ij}) \underbrace{\log(1-\phi(e_i, e_j))}_{-\mathcal{L}^r_{ij}}.
    \label{eq:UMAP_obj_embd}
\end{alignat}
While the high-dimensional similarities $\mu_{ij}$ are symmetric, UMAP's implementation does consider their direction during optimization. For this reason, our losses in equations~\eqref{eq:UMAP_obj_sim} and~\eqref{eq:UMAP_obj_embd} differ by a factor of $2$ from the one given in~\cite{mcinnes2018umap}.
Viewed through the lens of a force-directed model, the derivative of the first term in each summand of the loss function, $-\partial \mathcal{L}^a_{ij} / \partial e_i$, captures the attraction of $e_i$ to $e_j$ due to the high-dimensional similarity $\mu_{ij}$ and the derivative of the second term, $-\partial \mathcal{L}^r_{ij} / \partial e_i$, represents the repulsion that $e_j$ exerts on $e_i$ due to a lack of similarity in high dimension, $1-\mu_{ij}$.
Alternatively, the loss can be seen as the sum of binary cross entropy losses for each pairwise similarity. Thus, it is minimized if the low-dimensional similarities $\nu_{ij}$ exactly match their high-dimensional counterparts $\mu_{ij}$, that is, if UMAP manages to perfectly reproduce the high-dimensional similarities in low-dimensional space. 

UMAP uses a sampling based stochastic gradient descent to optimize its low-dimensional embedding typically starting from a Laplacian Eigenmap initialization~\cite{belkin2001laplacian, kobak2021initialization}. The main contribution of this paper is to show that the sampling based optimization in fact leads to a different objective function, so that UMAP does not reproduce the high-dimensional similarities in low-dimensional space, see Sections~\ref{sec:UMAP_not_reprod} to \ref{sec:target_sims}.
\begin{figure}[tb]
    \begin{subfigure}[t]{0.33\textwidth}
        \centering
        \includegraphics[width=.9\linewidth]{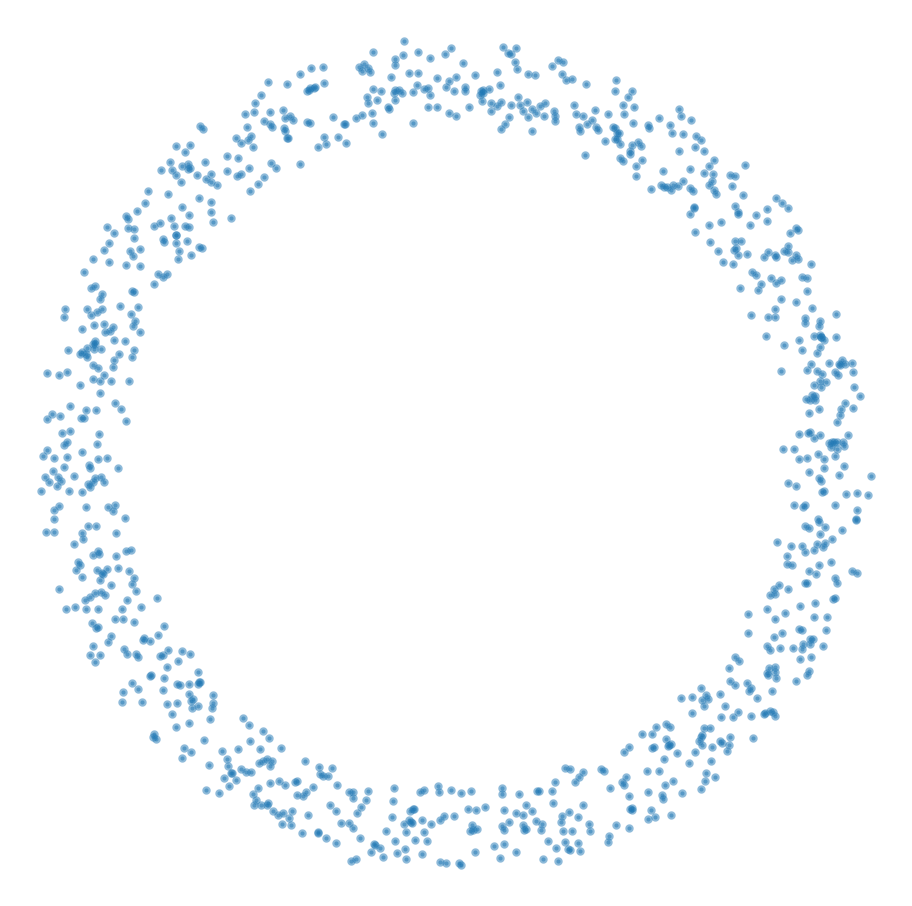}
        \caption{Original data}
        \label{subfig:toy_ring_original}
    \end{subfigure}%
    \begin{subfigure}[t]{0.33\textwidth}
        \centering
        \includegraphics[width=.9\linewidth]{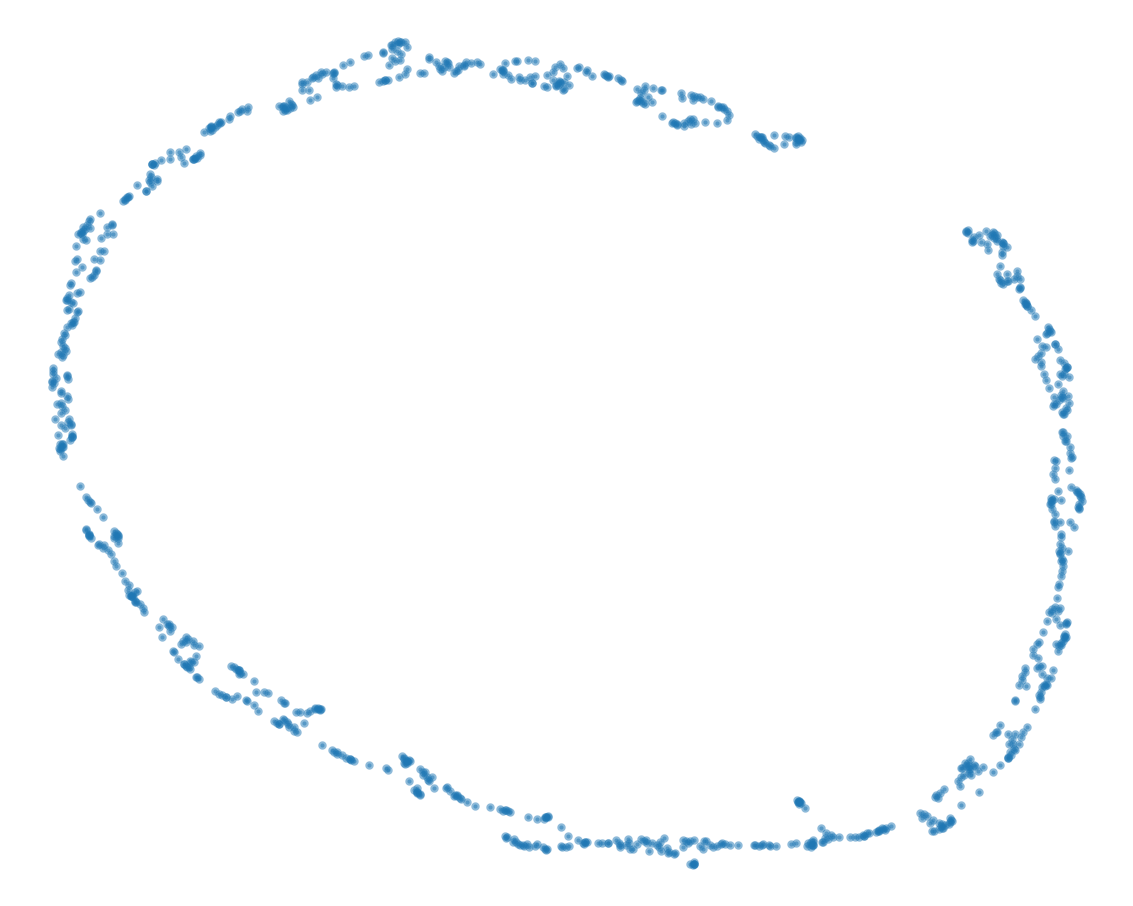}
        \caption{UMAP}
        \label{subfig:toy_ring_UMAP}
    \end{subfigure}%
    \begin{subfigure}[t]{0.33\textwidth}
        \centering
        \includegraphics[width=.9\linewidth]{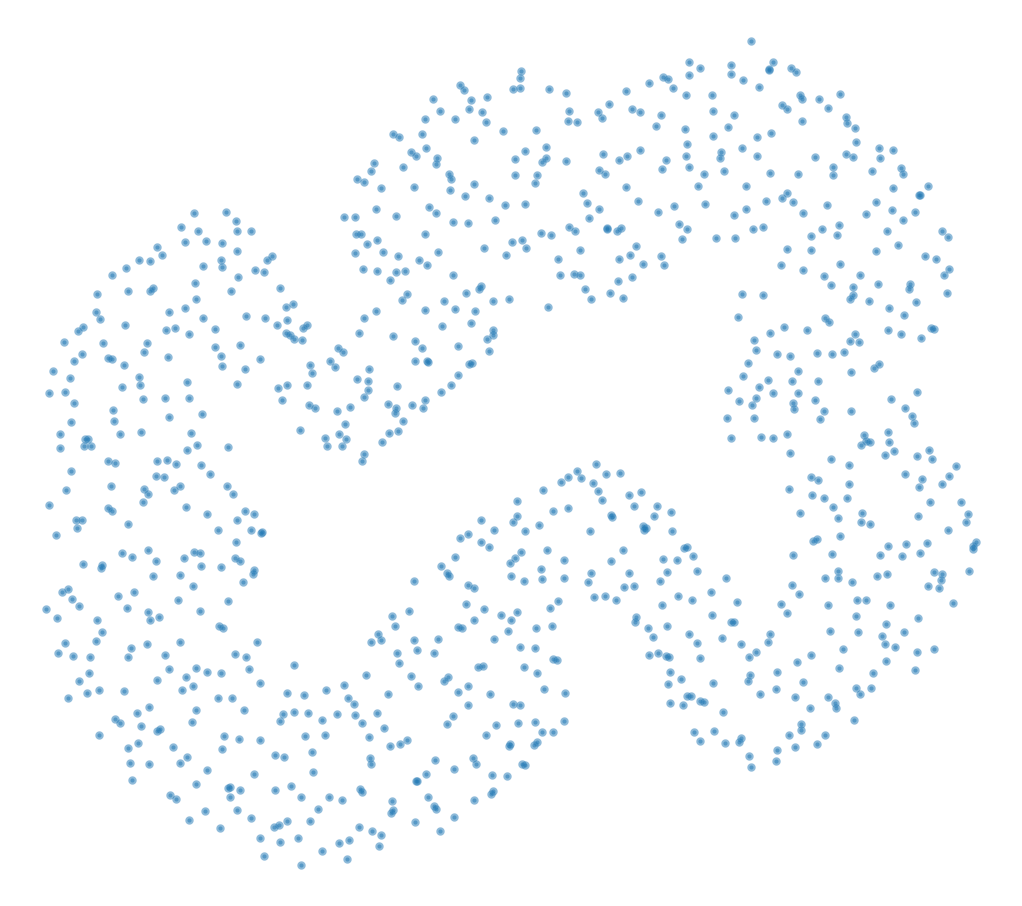}
        \caption{UMAP from dense similarities}
        \label{subfig:toy_ring_dense}
    \end{subfigure}
    \caption{UMAP does not preserve the data even when no dimension reduction is required. \ref{subfig:toy_ring_original}~Original data consisting of 1000 points sampled uniformly from a ring in 2D. \ref{subfig:toy_ring_UMAP}~Result of UMAP after 10000 epochs, initialized with the original data. The circular shape is visible but the ring width is nearly completely contracted. \ref{subfig:toy_ring_dense}~Result of UMAP after 10000 epochs for dense input similarities computed from the original data with $\phi$, initialized at the original embedding. No change from the initialization would be optimal in this setting. Instead the output has spurious curves and larger width. Additional figures with the default number of epochs and initialization can be found in Figure~\ref{fig:add_toy_ring} in the Appendix.}
    \label{fig:toy_ring}
\end{figure}

\section{UMAP does not reproduce high-dimensional similarities}\label{sec:UMAP_not_reprod}
UMAP produces scientifically useful visualizations for several domains and is fairly robust to its hyperparameters. Since any visualization of intrinsically high-dimensional data must be somehow unfaithful, it is not straightforward to check the visualization quality other than by its downstream use. Some quantitative measures exist such as the Pearson correlation between high- and low-dimensional distances used e.g. in ~\cite{kobak2021initialization, becht2019dimensionality}. We follow a different route to show unexpected properties of UMAP. Consider the toy example of applying UMAP to data that is already low-dimensional, such that no reduction in dimension is required: $D=d=2$. Ideally, the data would be preserved in this situation. One might also expect that UMAP achieves its aim of perfectly reproducing the high-dimensional similarities. Surprisingly, neither of these expectations is met. In Figure~\ref{fig:toy_ring}, we depict $2$D UMAP visualizations of a two-dimensional uniform ring dataset. To ease UMAP's task, we initialized the embedding with the original data, hoping that UMAP's optimization would just deem this layout to be optimal. We also used a longer run time to ensure convergence of UMAP's optimization procedure. The results with the default number of optimization epochs and initialization are qualitatively similar and shown in Figure~\ref{fig:add_toy_ring} in the Appendix. UMAP manages to capture the ring shape of the data, but changes its appearance significantly. As observed on many real-world datasets, confer also Figure~\ref{subfig:c_elegans_UMAP}, UMAP contracts the width of the ring nearly to a line, see Figure~\ref{subfig:toy_ring_UMAP}. Whether this exaggeration of the ring shape is useful depends on the use case. Note that this finding goes beyond \citet{narayan2021assessing}'s observation that over-contraction happens in regions of low density since our toy dataset is sampled uniformly from a circular ring. 

As described in Section~\ref{sec:UMAP_overview}, UMAP employs different methods in input and embedding space to transform distances to similarities. In particular, in input space similarities are zero for all but the closest neighbors, while in embedding space they are computed with the heavy-tailed function $\phi$. To test whether this prevents the reproduction of the input data, we also computed the dense similarities on the original data with $\phi$ and used this as input similarities for the embedding in Figure~\ref{subfig:toy_ring_dense}. Since we also initialize with the original dataset, a global optimum of the objective function~\eqref{eq:UMAP_obj_embd} for this choice of input similarity, one would expect no change by UMAP's optimization scheme. However, we observe that in this setting, UMAP produces spurious curves and increases the width of the ring. 

\citet{bohm2020unifying} implemented a Barnes-Hut approximation of UMAP's objective function~\eqref{eq:UMAP_obj_embd}, which produced a diverged embedding. Inspired by this finding, we compute the loss values according to equation~\eqref{eq:UMAP_obj_sim} for various input and embedding similarities $\mu_{ij}$ and $\nu_{ij}$ in our toy example, see Table~\ref{tab:toy_ring}. %
Consider the row with the usual input similarities ($\mu_{ij} = \mu(\{x_1, \dots, x_n\})$). In a completely diverged embedding, all self similarities are one and all others zero ($\nu_{ij} = \mathds{1}(i == j)$). We find that the loss for such an embedding is lower than for the optimized UMAP embedding. This is in accordance with \citet{bohm2020unifying}'s Barnes-Hut experiment and shows that UMAP does not optimize its supposed objective function~\eqref{eq:UMAP_obj_embd} as a diverged embedding is approximately feasible in two dimensions. This discrepancy is not just due to the fact that input and embedding similarities are computed differently: The second row of Table~\ref{tab:toy_ring} contains loss values for the setting in which we use the dense similarities as input similarities, as in Figure~\ref{subfig:toy_ring_dense}. We initialize the embedding at the optimal loss value ($\nu_{ij} = \phi(\{x_1, \dots, x_n\}) = \mu_{ij}$), but UMAP's optimization moves away from this layout and towards an embedding with higher loss ($\nu_{ij} = \phi(\{e_1, \dots, e_n\})$) although we always compute similarity in the same manner. Clearly, UMAP's optimization yields unexpected results.

\begin{table}[tb]
  \caption{UMAP loss value for various combinations of input and embedding similarities, $\mu_{ij}$, $\nu_{ij}$, of the toy example in Figure~\ref{fig:toy_ring}. The loss for the UMAP embedding (middle column) is always higher than for another two-dimensional layout (\textbf{bold}). Hence, UMAP does not minimize its purported loss. Results are averaged over 7 runs, see also Appendix~\ref{subapp:stability}.}
  \label{tab:toy_ring}
  \centering
  \begin{tabular}{lccc}
    \toprule
    &\multicolumn{3}{c}{Embedding similarities $\nu_{ij}$}                   \\
    \cmidrule(r){2-4}
     &   $\mathds{1}(i == j)$ & $\phi(\{e_1, \dots, e_n\})$& $\phi(\{x_1, \dots, x_n\})$ \\
    Input similarities $\mu_{ij}$& (diverged layout) & (UMAP result) & (input layout) \\ 
    \midrule
    $\mu(\{x_1, \dots, x_n\})$& $\mathbf{62959 \pm 82} $ & $70235 \pm 1301$ & $136329 \pm 721$\\
    $\phi(\{x_1, \dots, x_n\}) $& $902757 \pm 2788$ & $331666\pm 1308$& $\mathbf{224584 \pm 8104}$ \\
    \bottomrule
  \end{tabular}
\end{table}

\section{UMAP's sampling strategy and effective loss function}\label{sec:eff_loss}
UMAP uses a sampling based approach to optimize its loss function, in order to reduce complexity. A simplified version of the sampling procedure can be found in Algorithm~\ref{alg:npUMAP_sample}.  Briefly put, an edge $ij$ is sampled according to its high-dimensional similarity and the embeddings $e_i$ and $e_j$ of the incident nodes are pulled towards each other. For each such sampled edge $ij$, the algorithm next samples $m$ negative samples $s$ uniformly from all nodes and the embedding of $i$ is repelled from that of each negative sample. Note that the embeddings of the negative samples are not repelled from that of $i$, see commented line~\ref{algline:push_tail}. So there are three types of gradient applied to an embedding $e_i$ during an epoch:
\begin{enumerate}[wide, labelindent=0pt]
    \item $e_i$ is pulled towards $e_j$ when edge $ij$ is sampled, see line~\ref{algline:pull_head}
    \item $e_i$ is pulled towards $e_j$ when edge $ji$ is sampled, see line~\ref{algline:pull_tail}
    \item $e_i$ is pushed away from embeddings of negative samples when some edge $ij$ is sampled, see line~\ref{algline:push_head}.
\end{enumerate}
The full gradient on embedding $e_i$ during epoch $t$ is, according to UMAP's implementation, given by
\begin{equation}
    g_i^t =  \sum_{j=1}^n X_{ij}^t \cdot \frac{\partial \mathcal{L}^a_{ij}}{\partial e_i}   +X_{ji}^t \cdot \frac{\partial \mathcal{L}^a_{ji}}{\partial e_i}  + X_{ij}^t \cdot \sum_{s=1}^n Y^t_{ij, s} \cdot \frac{\partial \mathcal{L}^r_{is}}{\partial e_i},
\end{equation}
where $X_{ab}^t$ is the binary random variable indicating whether edge $ab$ was sampled in epoch $t$ and $Y^t_{ab, s}$ is the random variable for the number of times $s$ was sampled as negative sample for edge $ab$ in epoch $t$ if $ab$ was sampled in epoch $t$ and zero otherwise. By construction, $\mathbb{E}(X_{ab}^t) = \mu_{ab}$ and $\mathbb{E}(Y_{ab,s}^t|X_{ab}^t = 1) = m/n$. Taking the expectation over of the random events in an epoch, we obtain the expected gradient of UMAP's optimization procedure, see Appendix \ref{app:exp_UMAP_grad} for more details:
\begin{align}
\mathbb{E}\left(g_i^t\right) &= 2 \sum_{j=1}^n \mu_{ij} \cdot \frac{\partial \mathcal{L}^a_{ij}}{\partial e_i} + \frac{d_im}{2n} \cdot \frac{\partial \mathcal{L}^r_{ij}}{\partial e_i}.\label{eq:exp_UMAP_grad_no_push_tail}
\intertext{Comparing the above closed formula for the expectation of the gradients, with which UMAP updates the low-dimensional embeddings, to the gradient of UMAP's loss function}
    \frac{\partial \mathcal{L}}{\partial e_i} &= 2\sum_{j=1}^n \mu_{ij}\cdot \frac{\partial \mathcal{L}^a_{ij}}{\partial e_i} + (1-\mu_{ij}) \cdot \frac{\partial \mathcal{L}^r_{ij}}{\partial e_i}
\end{align}
we find that the sampling procedure yields the correct weight for the attractive term in expectation, as designed. However, as noticed by~\citet{bohm2020unifying}, the negative sampling changes the weight for the repulsive term significantly. Our closed formula helps to make their qualitative arguments precise: Instead of $1-\mu_{ij}$, we have a term  $\frac{d_im}{2n}$, which depends on the hyperparameter $m$ or \texttt{negative\_sample\_rate}. Contrary to the intention of~\cite{mcinnes2018umap}, the repulsive weights are not uniform but vary with the degree of each point $d_i$, which is typically close to $\log_2(k)$ see Appendix~\ref{app:deg_distr}.
More practically, since the non-zero high-dimensional similarities are sparse, $1-\mu_{ij}$ is equal to $1$ for most $ij$. In contrast, the expected repulsive weight is typically small for large datasets as $d_i$ is of the order of $\log_2(k)$ independent of the dataset size.

Another effect of the negative sampling is that in general the expected gradient~\eqref{eq:exp_UMAP_grad_no_push_tail} does not correspond to any loss function, see Appendix~\ref{app:no_obj_function}. We remedy this by additionally pushing the embedding of a negative sample $i$ away from the embedding $e_j$, whenever $i$ was sampled as negative sample to some edge $jk$, see line~\ref{algline:push_tail} in Algorithm~\ref{alg:npUMAP_sample}. This yields the following gradient at epoch $t$
\begin{equation}\label{eq:UMAP_grad_push_tail}
    \tilde{g}^t_i= \sum_{j=1}^n \left(X_{ij}^t \cdot \frac{\partial \mathcal{L}^a_{ij}}{\partial e_i}  +X_{ji}^t \cdot \frac{\partial \mathcal{L}^a_{ji}}{\partial e_i} + X_{ij}^t \cdot \sum_{s=1}^n Y^t_{ij, s} \cdot \frac{\partial \mathcal{L}^r_{is}}{\partial e_i} + \sum_{k=1}^n X_{jk}^t Y_{jk, i}^t \cdot \frac{\partial \mathcal{L}^r_{ji}}{\partial e_i}\right),
\end{equation}
corresponding to a loss in epoch $t$ of 
\begin{equation}\label{eq:actual_loss}
    \tilde{\mathcal{L}}^t = \sum_{i,j=1}^n \left( X_{ij}^t \cdot \mathcal{L}^a_{ij} + \sum_{s=1}^n X_{ij}^tY_{ij, s}^t \cdot \mathcal{L}^r_{is}\right).
\end{equation}
Using the symmetry of $\mu_{ij}$, $\mathcal{L}^a_{ij}$ and $\mathcal{L}^r_{ij}$ in $i$ and $j$, we compute the effective loss
\begin{equation}\label{eq:eff_loss}
    \tilde{\mathcal{L}} =\mathbb{E}(\tilde{\mathcal{L}}^t) = 2 \sum_{1\leq i < j \leq n} \mu_{ij} \cdot \mathcal{L}^a_{ij} + \frac{(d_i+d_j)m}{2n} \cdot \mathcal{L}^r_{ij}.
\end{equation}
In fact, pushing also the negative samples does not affect the behavior of UMAP qualitatively, see for instance Figures~\ref{fig:toy_ring_push_tail} and~\ref{fig:c_elegans_perturbed_push_tail}.\footnote{In fact, the parametric version of UMAP~\cite{sainburg2020parametric} does include the update of negative samples.} In this light, we can treat $\tilde{\mathcal{L}}$ as the effective objective function that is optimized via SGD by UMAP's optimization procedure. It differs from UMAP's loss function~\eqref{eq:UMAP_obj_embd}, by having a drastically reduced repulsive weight of $\frac{(d_i+d_j)m}{2n}$ instead of $1- \mu_{ij}$.

\begin{figure}
    \begin{subfigure}[b]{0.43\textwidth}
        \centering
        \includegraphics[width=\linewidth]{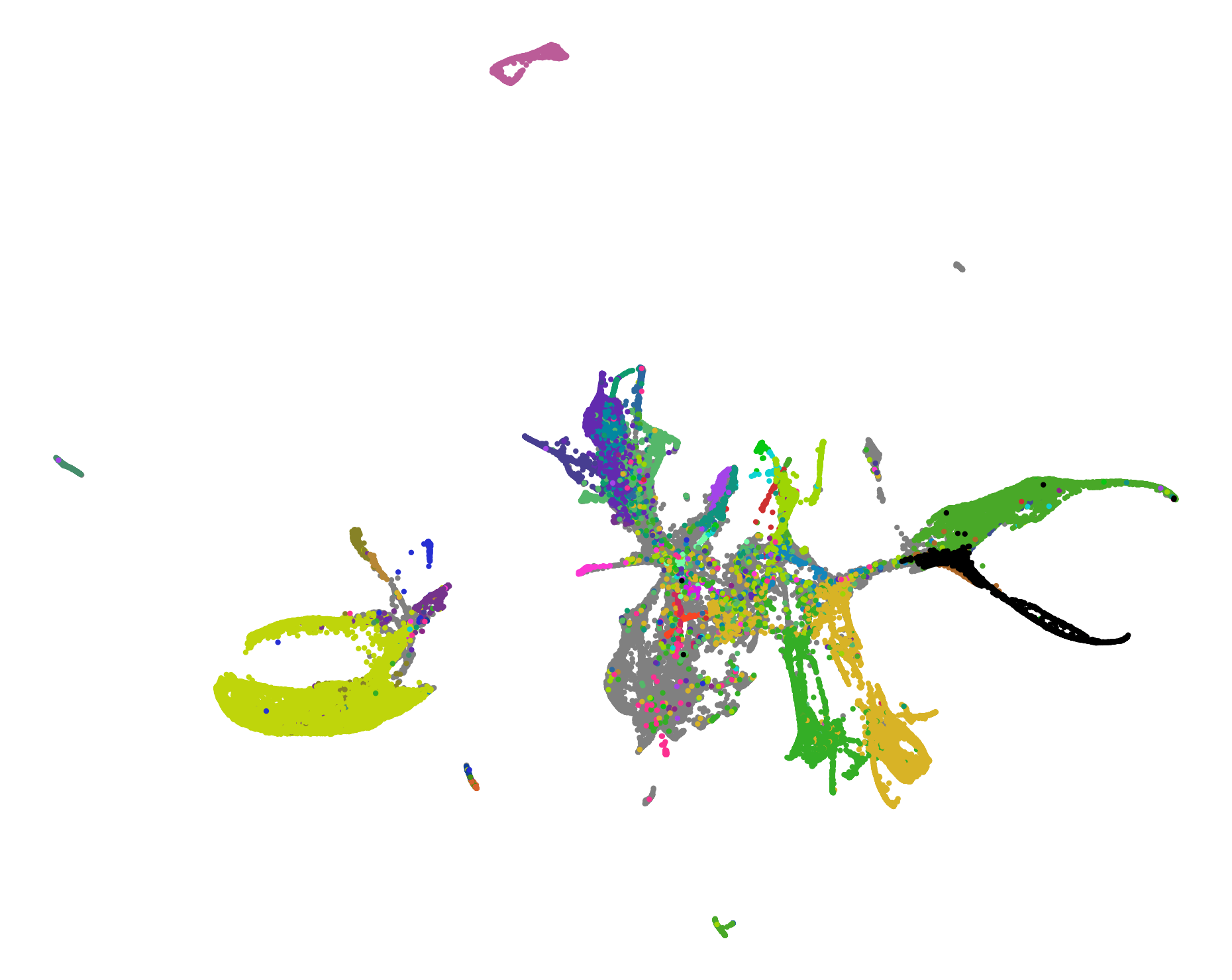}
        \caption{UMAP}
        \label{subfig:c_elegans_UMAP}
    \end{subfigure}%
        \begin{subfigure}[b]{0.32\textwidth}
        \centering
        \includegraphics[width=\linewidth]{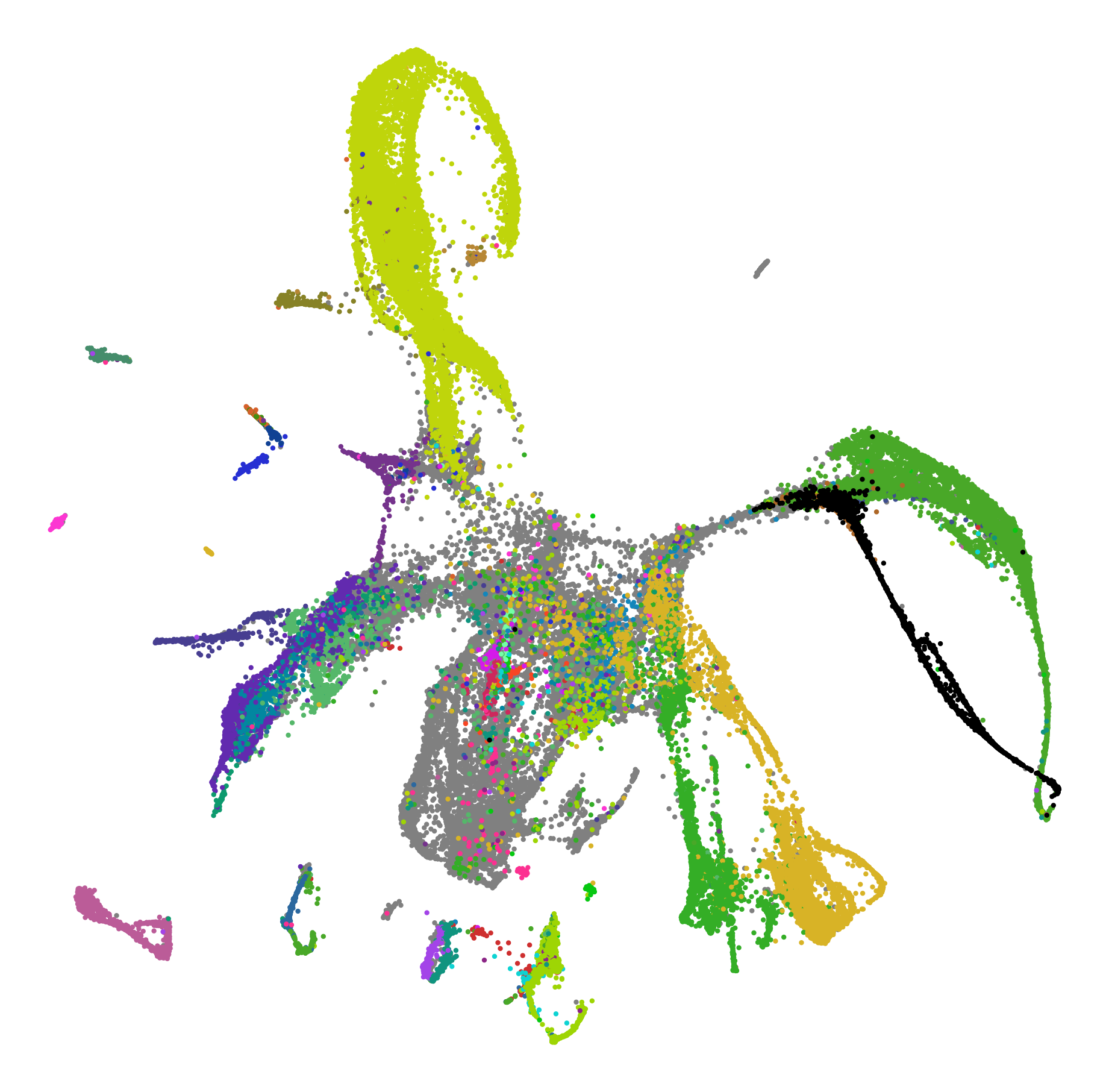}
        \caption{Inverted weights UMAP}
        \label{subfig:c_elegans_inv_UMAP}
    \end{subfigure}%
    \begin{subfigure}[b]{0.25\textwidth}
        \centering
        \includegraphics[width=\linewidth]{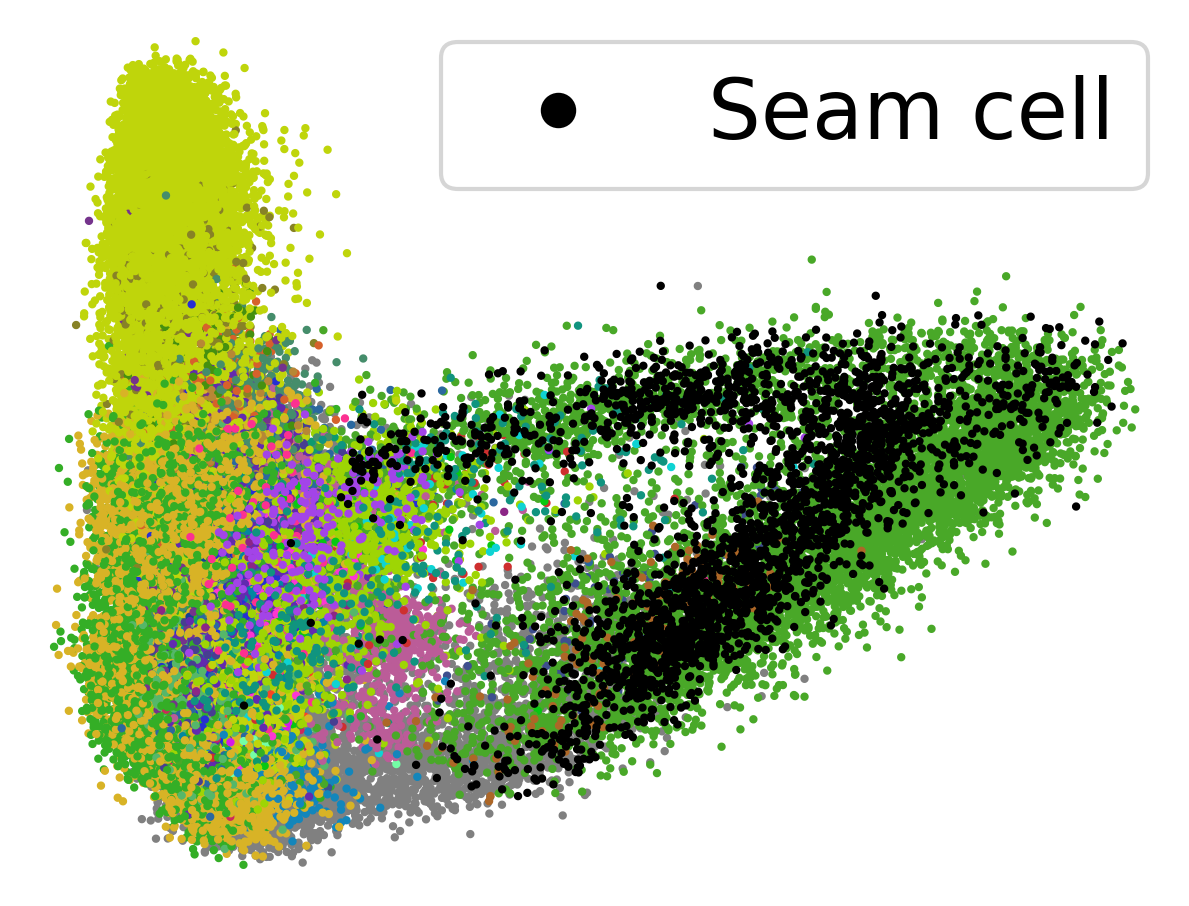}
        \caption{PCA}
        \label{subfig:c_elegans_PCA}
    \end{subfigure}

    \caption{UMAP on C. elegans data from~\cite{packer2019lineage, narayan2021assessing}. \ref{subfig:c_elegans_UMAP}~UMAP visualization with the hyperparameters of~\cite{narayan2021assessing}. Several parts of the embedding appear locally one-dimensional, for instance the seam cells. \ref{subfig:c_elegans_inv_UMAP}~Same as~\ref{subfig:c_elegans_UMAP} but with inverted positive high-dimensional similarities. The result is qualitatively similar, if not better.  \ref{subfig:c_elegans_PCA}~Two dimensional PCA of the dataset. Highlighted seam cells clearly have two dimensional variance in the PCA plot, but are over contracted to nearly a line in the UMAP plots~\ref{subfig:c_elegans_UMAP} and~\ref{subfig:c_elegans_inv_UMAP}.  Full legend with all cell types and further information can be found in Figure~\ref{fig:c_elegans_perturbed}.}
    \label{fig:c_elegans}
\end{figure}

We illustrate our analysis on the C. elegans dataset~\cite{packer2019lineage, narayan2021assessing}. We start out with a 100 dimensional PCA of the data\footnote{\label{foot:c_elegans} obtained from \url{http://cb.csail.mit.edu/cb/densvis/datasets/}. We informed the authors of our use of the dataset, which they license under CC BY-NC 2.0.} and use the cosine metric in high-dimensional space, consider local neighborhoods of 30 data points and optimize for 750 epochs as done in~\cite{narayan2021assessing}. The resulting visualization is depicted in Figure~\ref{subfig:c_elegans_UMAP}. On this dataset the average value of $1-\mu_{ij}$ is $0.9999$ but the maximal effective repulsive weight $\max_{ij}\frac{(d_i+d_j)m}{2n}$ is $0.0043$, showing the dramatic reduction of repulsion due to negative sampling. After each optimization epoch, we log our effective loss $\tilde{\mathcal{L}}$~\eqref{eq:eff_loss}, the actual loss $\tilde{\mathcal{L}^t}$~\eqref{eq:actual_loss} of each epoch computed based on the sampled (negative) pairs as well the purported UMAP loss $\mathcal{L}$~\eqref{eq:UMAP_obj_embd}.\footnote{Our code is publicly available at \url{https://github.com/hci-unihd/UMAPs-true-loss}.} We always consider the embeddings at the end of each epoch. Note that UMAP's implementation updates each embedding $e_i$ not just once at the end of the epoch but as soon as $i$ is incident to a sampled edge or sampled as a negative sample.  This difference does not change the actual loss much, see Appendix~\ref{app:implementation}. The result is plotted in Figure~\ref{fig:c_elegans_losses}. We can see that our predicted loss matches its actual counterpart nearly perfectly. While both, $\tilde{\mathcal{L}}$ and $\tilde{\mathcal{L}}^t$, agree with the attractive part of the supposed UMAP loss, its repulsive part and thus the total loss are two orders of magnitude higher. Furthermore, driven by the repulsive part, the total intended UMAP loss increases during much of the optimization process, while the actual and effective losses decrease, exemplifying that UMAP really optimizes our effective loss $\tilde{\mathcal{L}}$ ~\eqref{eq:eff_loss} instead of its purported loss $\mathcal{L}$~\eqref{eq:UMAP_obj_embd}. 

After having deduced the effective loss function of Non-Parametric UMAP, we conclude this Section with the corresponding but slightly different result for Parametric UMAP~\cite{sainburg2020parametric}:
\begin{theorem}
The effective loss function of Parametric UMAP is 
\begin{align}
    - \frac{1}{(m+1)\mu_\text{tot}}  \sum_{1\leq i < j\leq n}\kern-0.9em \mu_{ij} \log\Big(\phi\big(f_\theta(x_i), f_\theta(x_j)\big)\Big)
    +m\frac{b-1}{b} \frac{d_i d_j}{2\mu_\text{tot}} \log\Big(1-\phi\big(f_\theta(x_i), f_\theta(x_j)\big)\Big).
\end{align}
\end{theorem}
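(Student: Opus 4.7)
The plan is to mirror the derivation of the effective loss $\tilde{\mathcal{L}}$ in Equation~\eqref{eq:eff_loss} from Section~\ref{sec:eff_loss}, adapted to Parametric UMAP's batch-based sampling procedure. Since Parametric UMAP optimizes over network parameters $\theta$ with $e_i = f_\theta(x_i)$, the chain rule shows that it suffices to identify the effective loss as a function of the embeddings: the coefficients weighting $\mathcal{L}^a_{ij}$ and $\mathcal{L}^r_{ij}$ in the effective loss on embeddings transfer unchanged to the effective loss on $\theta$.

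First, I would write down Parametric UMAP's sampling law explicitly, defining $X^t_{ij}$ as the indicator that edge $ij$ is drawn as a positive in step $t$ and $Y^t_{ij,s}$ as the multiplicity with which $s$ is drawn as a negative for edge $ij$, in analogy with Section~\ref{sec:eff_loss}. In the parametric variant, a batch is assembled by sampling $b$ positive edges with probabilities $\propto \mu_{ij}$ and by then drawing $m$ negatives from within the batch; the per-step loss of the form of Equation~\eqref{eq:actual_loss} is then averaged over the $m+1$ slots associated with each positive edge. Taking expectations, the attractive coefficient of $\mathcal{L}^a_{ij}$ immediately becomes $\mu_{ij}/\mu(E)$, with an additional $1/(m+1)$ coming from the per-positive averaging, producing the prefactor $-\frac{1}{2(m+1)\mu(E)}$ exactly as stated.

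For the repulsive term, the key observation is that negatives are no longer sampled uniformly from $\{1,\dots,n\}$ as in the non-parametric case but from the current batch. A node $s$ appears in a given batch slot with probability proportional to its degree $d_s$, since it enters a batch through one of its incident positive edges, whose sampling mass is $\mu_{ij}/\mu(E)$ and whose degree marginal is $d_s/(2\mu(E))$. Consequently, the probability that a pair $(i,s)$ occurs as a (positive endpoint, negative sample) combination factorizes in expectation as $d_i/(2\mu(E)) \cdot d_s/(2\mu(E))$, except for the diagonal $i=s$, whose removal within a batch of $b$ edges gives rise to the $(b-1)/b$ correction. Multiplying by $m$ for the $m$ negatives per positive and folding the double sum into $\sum_{i<j}$ via the symmetry of $\mu_{ij}$ and $\mathcal{L}^r_{ij}$ then yields the claimed repulsive weight $m\frac{b-1}{b}\frac{d_id_j}{2\mu(E)}$.

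The hard part will be the careful probabilistic bookkeeping for the in-batch negative sampling. Because the positive endpoint and the negative partner are both drawn from the same batch, their joint distribution is conditioned on the batch composition, and one must verify that the induced single-node marginals are really proportional to $d_i$ and that correlations between different slots in a single batch average out at the level of the expected loss. This, rather than any fundamentally new mechanism, is what converts the non-parametric $d_im/n$ coefficient into the symmetric $d_id_j$ coupling and is also the source of the $(b-1)/b$ factor, which would otherwise be absent.
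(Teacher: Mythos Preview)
Your overall plan is sound and tracks the paper's argument closely: compute the expectation of the per-batch loss, identify the attractive coefficient as $\mu_{ij}/(2\mu(E))$ up to the $1/(m+1)$ normalization, and show that the repulsive coefficient picks up degree-proportional marginals because negatives are drawn from within the batch. Where the proposal goes wrong is the mechanism you give for the $(b-1)/b$ factor.

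You attribute $(b-1)/b$ to ``removal of the diagonal $i=s$''. That is not what happens in Parametric UMAP's sampler, and following this reasoning will not produce the stated coefficient. In the paper's Algorithm~\ref{alg:PUMAP_sample}, negatives are formed by drawing a uniformly random permutation $\pi$ of the $mb$ repeated tail slots and pairing head-slot $k$ with tail-slot $\pi(k)$; nothing excludes $i=s$. The $(b-1)/b$ arises instead from the dependence between the head and the matched tail through the shared batch. Concretely, the batch counts $P_{ab}$ are multinomial with $\text{Cov}(P_{ab},P_{a'b'})=-b\,\mu_{ab}\mu_{a'b'}/(4\mu(E)^2)$, so that $\mathbb{E}_B[P_{ib}P_{aj}]=b(b-1)\,\mu_{ib}\mu_{aj}/(4\mu(E)^2)$, and summing over $a,b$ gives $\mathbb{E}_B[H_iT_j]=m^2 b(b-1)\,d_id_j/(4\mu(E)^2)$. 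Equivalently, at the level of a single negative pair: with probability $1/b$ the permutation lands the tail in the \emph{same original batch slot} as the head (so head and tail come from the same sampled edge and are correlated through $\mu_{ij}$), and with probability $(b-1)/b$ it lands in a different slot (so head and tail are independent and factorize as $d_id_j/(4\mu(E)^2)$). It is this slot-level distinction, not any node-level exclusion of $i=s$, that produces the $(b-1)/b$ weight on the $d_id_j$ term.

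Two smaller points. First, because the batch is assembled with replacement, an indicator $X^t_{ij}$ is not the right primitive; you need multiplicities $P_{ij}$ as in the paper. Second, since the permutation scheme (hypergeometric matching) rather than i.i.d.\ negative draws is used, you cannot simply reuse the $Y^t_{ij,s}$ bookkeeping from Section~\ref{sec:eff_loss}; the conditional law of $N_{ij}$ given the batch is hypergeometric in $H_i$ and $T_j$, and the second moment $\mathbb{E}_B[H_iT_j]$ is where all the work sits.
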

\begin{proof}
The proof can be found in Appendix~\ref{app:pUMAP} and differs from the Non-Parametric case mostly because the negative samples come from the current batch instead of the full dataset.
\end{proof}
While the exact formula differs from $\tilde{\mathcal{L}}$~\eqref{eq:eff_loss} the same analysis holds unless explicitly mentioned.

\begin{minipage}{\textwidth}
\vspace{2ex}
\begin{minipage}[t]{0.4\linewidth}
\LinesNumbered
\begin{algorithm}[H]
\SetAlgoLined
\SetAlgoNoEnd
\SetInd{0.25em}{0.25em}
\SetKwInOut{Input}{input}\SetKwInOut{Output}{output}
\DontPrintSemicolon
\Input{input similarities $\mu_{ij}$,\\ 
       initial embeddings $e_i$, \\
       number of epochs T, \\
       learning rate $\alpha$}
\Output{final embedding $e_i$}

\For{$t = 0$ \KwTo $T$}{
    \For{$ij \in \{1, \dots, n\}^2$}{
    $r \sim \text{Uniform}(0,1)$\;
        \If{$r < \mu_{ij}$}{
            $e_i = e_i - \alpha \cdot \frac{\partial \mathcal{L}^a_{ij}}{\partial e_i}$\label{algline:pull_head}\;
            $e_j = e_j - \alpha \cdot \frac{\partial \mathcal{L}^a_{ij}}{\partial e_j}$\label{algline:pull_tail}\;
            
            \For{$l=1$ \KwTo $m$}{
                $s \sim \text{Uniform}(\{1, \dots, n\})$\label{algline:neg_sample}\;
                $e_i = e_i - \alpha \cdot \frac{\partial \mathcal{L}^r_{is}}{\partial e_i}$\label{algline:push_head}\;
                \tcp*[l]{Next line is omitted in UMAP implementation, but in\-cluded for our analysis}
                /* $e_s = e_s - \alpha \cdot \frac{\partial \mathcal{L}^r_{is}}{\partial e_s}$ \label{algline:push_tail}*/ 
            }
          } 
        
    }
}
 \caption{UMAP's optimization}
 \label{alg:npUMAP_sample}
\end{algorithm}
\end{minipage}
\begin{minipage}[t]{0.59\linewidth}
   \vspace{-2ex}
        \includegraphics[width=\linewidth]{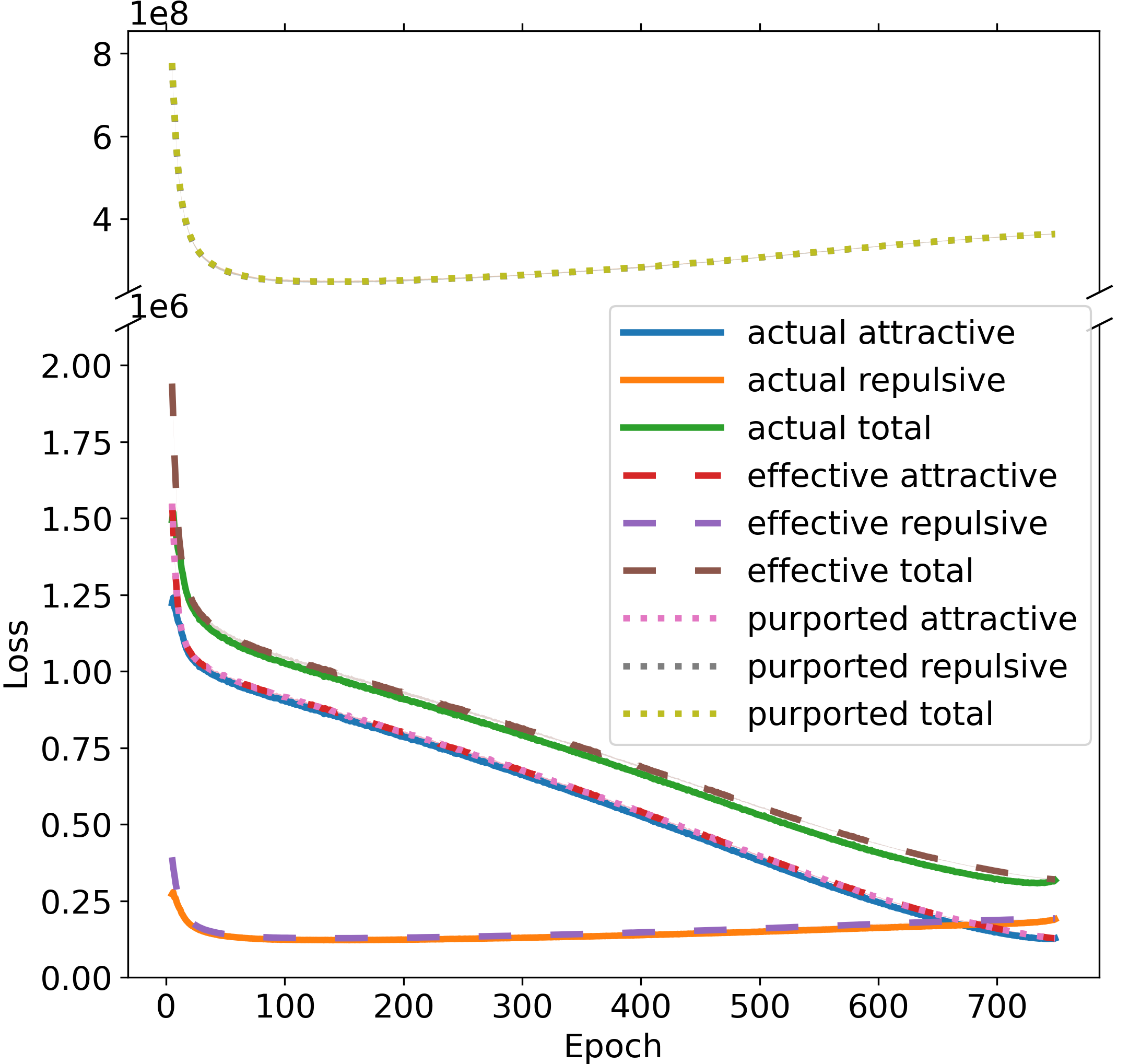}
        \captionof{figure}{Loss curves for the optimization leading to Figure~\ref{subfig:c_elegans_UMAP}. Our effective loss closely matches the actual loss on the sampled pairs, while the supposed UMAP loss~\ref{eq:UMAP_obj_embd}, which would reproduce the high-dimensional similarities, is two orders of magnitude higher. The repulsive purported loss is overlaid by the total purported loss. Average over 7 runs is plotted, barely visible shared area is one standard deviation, see also Appendix~\ref{subapp:stability}.}
        \label{fig:c_elegans_losses}
\end{minipage}

\end{minipage}

\section{True target similarities}\label{sec:target_sims}
Since the effective objective function $\tilde{\mathcal{L}}$~\eqref{eq:eff_loss} that UMAP optimizes is different from $\mathcal{L}$~\eqref{eq:UMAP_obj_embd}, we cannot hope that UMAP truly tries to find a low-dimensional embedding whose similarities reproduce the high-dimensional similarities. Nevertheless, using the effective loss $\tilde{\mathcal{L}}$, we can compute the true target similarities $\nu_{ij}^*$ which UMAP tries to achieve in embedding space. 
The effective loss $\tilde{\mathcal{L}}$ is a sum of non-normalized binary cross entropy loss functions
\begin{equation}
 -\left(\mu_{ij} \cdot \log(\nu_{ij}) + \frac{(d_i+d_j)m}{2n} \cdot \log(1- \nu_{ij})\right)
\end{equation}
which is minimal for 
\begin{equation}
    \nu_{ij}^* = \frac{\mu_{ij}}{\mu_{ij} + \frac{(d_i+d_j)m}{2n}}  
    \begin{cases}
        = 0 \text{ if } \mu_{ij} =0 \\
        \approx 1 \text{ if } \mu_{ij} >0.
    \end{cases}
\end{equation}
The approximation holds in the typical case in which $\frac{(d_i+d_j)m}{2n} \approx 0$, discussed above.
In other words, the reduced repulsion weight essentially binarizes the high-dimensional similarities. UMAP's high-dimensional similarities are non-zero exactly on the shared $k$-nearest neighbor graph edges of the high-dimensional data. Therefore, the binarization explains why~\citet{bohm2020unifying} find that using the binary weights of the shared $k$ nearest neighbor graph does not deteriorate UMAP's performance much.\footnote{\citet{bohm2020unifying} used a scaled version of the $k$NN graph, but the scaling factor cancels for the target weights.} The binarization even helps UMAP to overcome disrupted high-dimensional similarities, as long as only the edges of the shared $k$NN graph have non-zero weight. In Figure~\ref{subfig:c_elegans_inv_UMAP} we invert the original positive high-dimensional weights on the C. elegans dataset. That means that the $k$-th nearest neighbor will have higher weight than the nearest neighbor. The resulting visualization even improves on the original by keeping the layout more compact. This underpins B\"{o}hm et al.~\cite{bohm2020unifying}'s claim that the elaborate theory used to compute the high-dimensional similarities is not the reason for UMAP's practical success. In fact, we show that UMAP's optimization scheme even actively ignores most information beyond the shared $k$NN graph. In Figure~\ref{fig:c_elegans_hists}, we show histograms of the various notions of similarity for the C. elegans dataset. We see in panel~\ref{subfig:c_elegans_hist_no_inv_inv} how the binarization equalizes the positive target similarities for the original and the inverted high-dimensional similarities. 

The binary cross entropy terms in the effective loss $\tilde{\mathcal{L}}$~\eqref{eq:eff_loss} are not normalized. This leads to a different weighing of the binary cross entropy terms for each pair $ij$
\begin{align}
   \tilde{\mathcal{L}} 
   &= 2 \sum_{1\leq i < j \leq n} \mu_{ij} \cdot \mathcal{L}^a_{ij} + \frac{(d_i+d_j)m}{2n} \cdot \mathcal{L}^r_{ij}\\
   &= -2 \sum_{1\leq i < j \leq n} \left(\mu_{ij} + \frac{(d_i+d_j)m}{2n}\right) \cdot \left( \nu_{ij}^* \log(\nu_{ij}) + (1- \nu_{ij}^*) \log(1-\nu_{ij})\right). \label{eq:weighted_BCE}
 \end{align}
As $\frac{(d_i + d_j)m}{2n}$ is very small for large datasets, the term $\mu_{ij} + \frac{(d_i+d_j)m}{2n}$ is dominated by $\mu_{ij}$. Hence, the reduced repulsion not only binarizes the high-dimensional similarities, it also puts higher weight on the positive than the zero target similarities. Therefore, we can expect that the positive target similarities are better approximated by the embedding similarities, than the zero ones. Indeed, panel~\ref{subfig:c_elegans_hist_pos} shows that the low-dimensional similarities match the positive target similarities very well, as expected from the weighted BCE reading of the effective loss function~\eqref{eq:weighted_BCE}. 

\subsection{Explaining artifacts in UMAP visualizations}\label{subsec:interpret_toy_ring}
We conclude this section by explaining the observed artifacts of UMAP's visualization in Figures~\ref{fig:toy_ring} and~\ref{fig:c_elegans} in the light of the above analysis. The normal UMAP optimization contracts the ring in Figure~\ref{subfig:toy_ring_UMAP} even when initialized at the original layout (Figure~\ref{subfig:toy_ring_original}) because the reduced repulsion yields nearly binary target similarities. All pairs that are part of the $k$NN graph not only want to be sufficiently close that their high-dimensional similarity is reproduced, but so close that their similarity is one. The fact that the effective loss weighs the terms with target similarity near one much more than those with target similarity near zero reinforces this trend. As a result, the ring gets contracted to a circle. The same argument applies to the over contracted parts of the UMAP visualization of the C. elegans dataset in Figure~\ref{fig:c_elegans}. Our framework can also explain the opposite behavior of UMAP when the dense similarities are used as input similarities, see Figure~\ref{subfig:toy_ring_dense}. In this setting, the average degree of a node is about $100$. With a \texttt{negative\_sample\_rate} of $5$ and a dataset size of $n=1000$ this yields repulsive weights of about $\frac{(d_i+d_j)m}{2n} \approx 0.5$. Thus, we increase the repulsion on pairs with high input similarity, but decrease it on pairs with low input similarity. The target similarities are lower (larger) than the input similarities if the latter are larger (lower) than $0.5$. Consequently, we can expect embedding points to increase their distance to nearest neighbors, but distant points to move closer towards each other. This is what we observe in Figure~\ref{subfig:toy_ring_dense}, where the width of the ring has increased and the ring curves to bring distant points closer together.

\begin{figure}
    \begin{subfigure}{0.33\textwidth}
        \centering
        \includegraphics[width=0.9\linewidth]{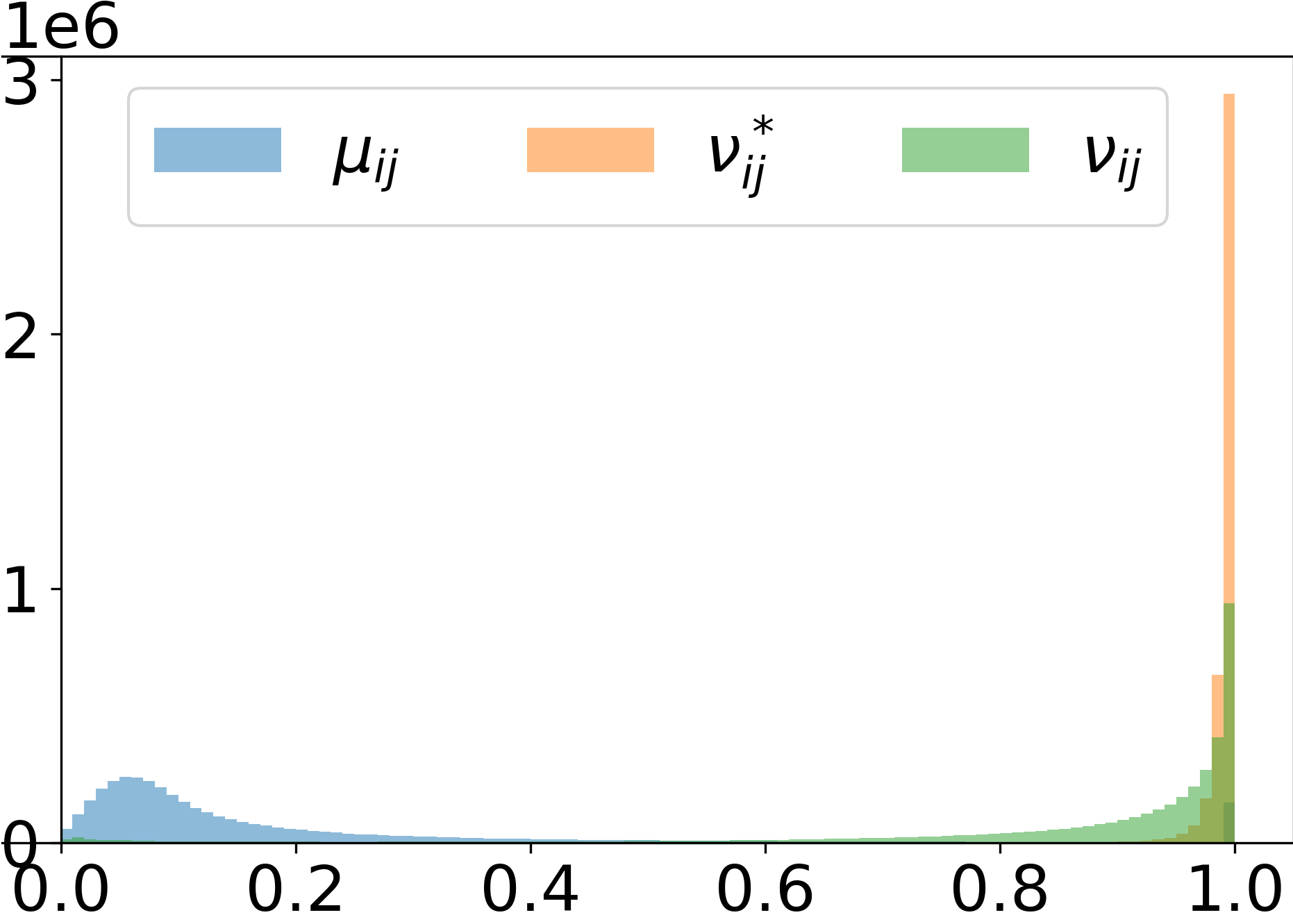}
        \caption{Similarities for $\mu_{ij}>0$}
        \label{subfig:c_elegans_hist_pos}
    \end{subfigure}%
    \begin{subfigure}{0.33\textwidth}
        \centering
        \includegraphics[width=0.9\linewidth]{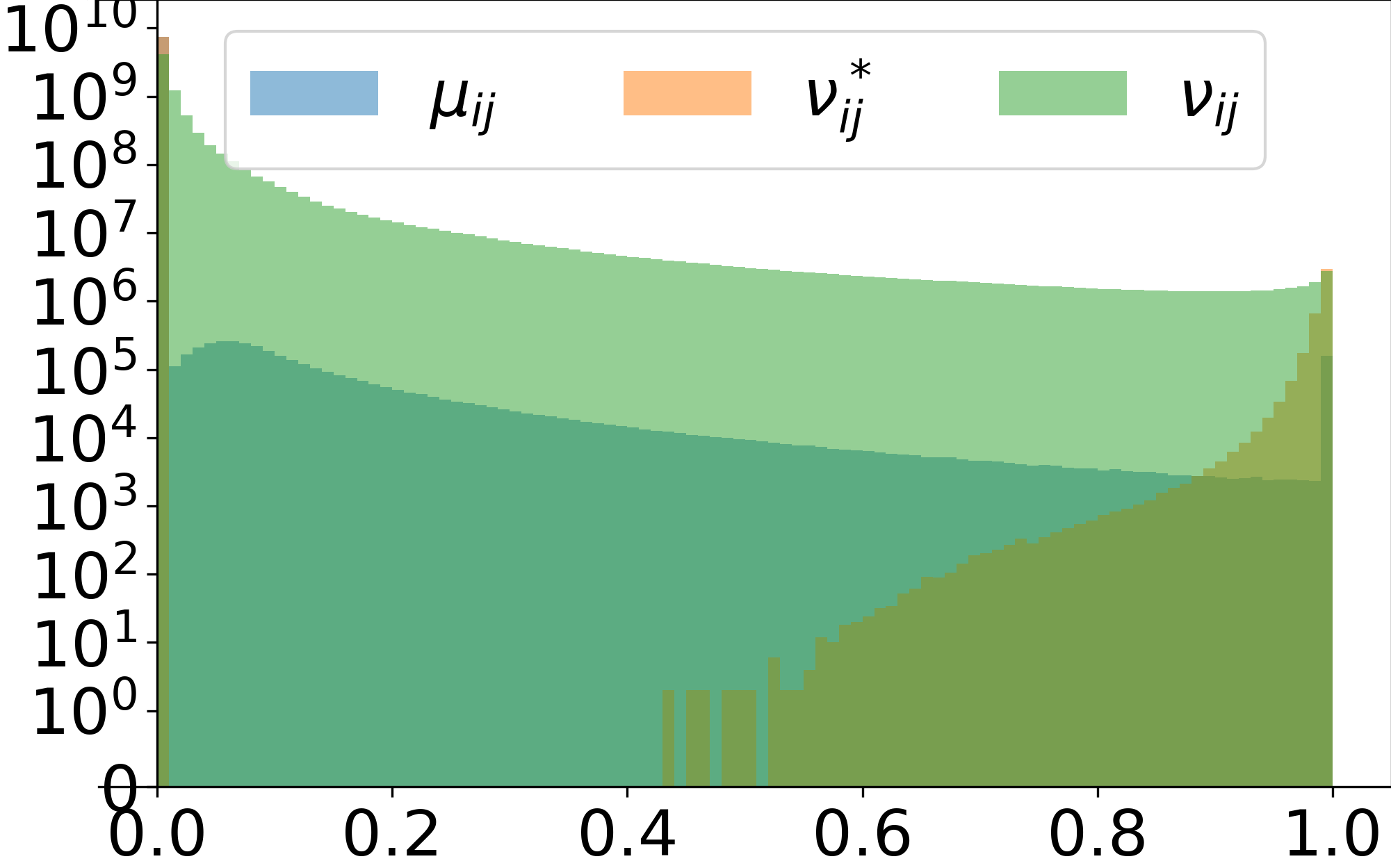}
        \caption{All similarities}
        \label{subfig:c_elegans_hist_all}
    \end{subfigure}%
    \begin{subfigure}{0.33\textwidth}
        \centering
        \includegraphics[width=0.9\linewidth]{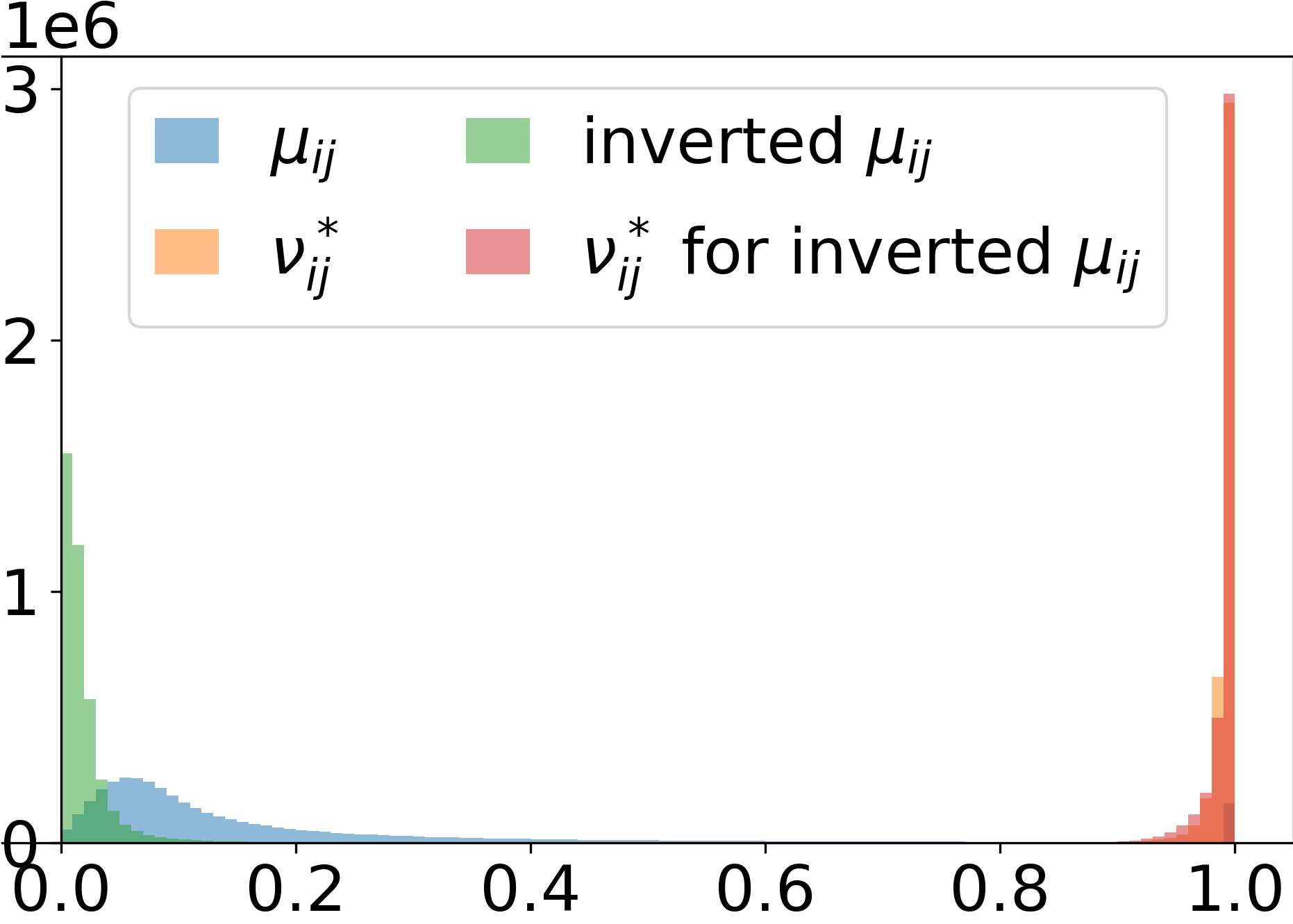}
        \caption{Original and inverted similarities for $\mu_{ij} >0$}
        \label{subfig:c_elegans_hist_no_inv_inv}
    \end{subfigure}%
    
    \caption{Histograms of high-dimensional ($\mu_{ij}$), target ($\nu^*_{ij}$) and low-dimensional ($\nu_{ij}$) similarities on the C. elegans dataset~\cite{packer2019lineage, narayan2021assessing}. The similarities of UMAP's low-dimensional embedding reproduce the target similarities instead of the high-dimensional ones. \ref{subfig:c_elegans_hist_pos}~Only similarities for pairs with positive high-dimensional similarity are shown. Compared to the high-dimensional similarities, the target similarities are heavily skewed towards one and closely resemble the low-dimensional ones. \ref{subfig:c_elegans_hist_all}~All similarities and depicted on a logarithmic scale. There are many more pairs that have zero high-dimensional similarity than positive high-dimensional similarity. \ref{subfig:c_elegans_hist_no_inv_inv}~Comparison of similarities for pairs of positive high-dimensional similarities for the original UMAP and the inverted similarities. While the histograms of the high-dimensional similarities differ noticeably, their target similarities do not. The binarization essentially ignores all information beyond the shared $k$NN graph.}
    \label{fig:c_elegans_hists}
\end{figure}

\section{Discussion}\label{sec:discussion}
By deriving UMAP's true loss function and target similarities, we are able to explain several peculiar properties of UMAP visualizations. According to our analysis, UMAP does not aim to reproduce the high-dimensional UMAP similarities in low dimension but rather the binary shared $k$NN graph of the input data. This raises the question just what part of UMAP's optimization leads to its excellent visualization results. Apparently, the exact formula for the repulsive weights is not crucial as it differs for Non-Parametric UMAP and Parametric UMAP while both produce similarly high quality embeddings. A first tentative step towards an explanation might be the different weighing of the BCE terms in the effective loss function~\eqref{eq:weighted_BCE}. Focusing more on the similar rather than the dissimilar pairs might help to overcome the imbalance between an essentially linear number of attractive and a quadratic number of repulsive pairs. Inflated attraction was found beneficial for $t$-SNE as well, in the form of early exaggeration~\cite{linderman2019clustering}. 

Put another way, the decreased repulsive weights result in comparable total attractive and repulsive weights, which might facilitate the SGD based optimization. Indeed, the total attractive weight in UMAP's effective loss functions is $2\mu_\text{tot} = \sum_{i,j = 1}^n \mu_{ij}$ and the total repulsive weight roughly amounts to $m\mu_\text{tot} = \sum_{i,j = 1} \frac{(d_i+d_j)m}{2n}$ for Non-Parametric UMAP and to $2m\mu_\text{tot}\frac{b-1}{b}$ for Parametric UMAP. For the default value of $m=5$, the total attractive and repulsive weights are of roughly the same order of magnitude. Moreover, we observe in Figure~\ref{fig:c_elegans_losses} that the resulting attractive and repulsive losses are also of comparable size. Using UMAP's purported loss function, however, would yield dominating repulsion. A more in-depth investigation as to why exactly balanced attraction and repulsion is beneficial for a useful embedding is interesting and left for future work.

\section{Conclusion}\label{sec:conclusion}
In this work, we investigated UMAP's optimization procedure in depth. In particular, we computed UMAP's effective loss function analytically and found that it differs slightly between the non-parametric and parametric versions of UMAP and significantly from UMAP's alleged loss function. The optimal solution of the effective loss function is typically a binarized version of the high-dimensional similarities. This shows why the sophisticated form of the high-dimensional UMAP similarities does not add much benefit over the shared $k$NN graph. Instead, we conjecture that the resulting balance between attraction and repulsion is the main reason for UMAP's great visualization capability. Our analysis can explain some artifacts of UMAP visualizations.

\section*{Acknowledgements}\label{sec:acknowledgements}
Supported, in part, by Informatics for Life funded by the Klaus Tschira Foundation.

\bibliographystyle{abbrvnat}
\bibliography{references}

\clearpage
\appendix
\section{Parametric UMAP's sampling and effective loss function}\label{app:pUMAP}
In Parametric UMAP~\cite{sainburg2020parametric} the embeddings are not directly optimized. Instead a parametric function, a neural network, is trained to map the input points to embedding space. As usual, a mini-batch of data points is fed through the neural network at each training iteration; the loss is computed for this mini-batch and then the parameters of the neural network are updated via stochastic gradient descent. To avoid the quadratic complexity of the repulsive term a sampling strategy is employed, sketched in Algorithm~\ref{alg:PUMAP_sample}. There are three differences to the optimization scheme of Non-Parametric UMAP: First, since automatic differentiation is used, not only the head of a negative sample edge is repelled from the tail but both repel each other. Second, the same number of edges are sampled in each epoch. Third, since only the embeddings of the current mini-batch are available, negative samples are  produced not from the full dataset but only from within the non-uniformly assembled batch. This leads to a different repulsive weight for Parametric UMAP as described in 
\begin{theorem}
The expected loss function of Parametric UMAP is 
\begin{equation}
    - \frac{1}{2(m+1)\mu_\text{tot}}  \sum_{i,j=1}^n \mu_{ij}\cdot \log\Big(\phi\big(f_\theta(x_i), f_\theta(x_j)\big)\Big) +  m\frac{b-1}{b} \frac{d_i d_j}{2\mu_\text{tot}}\cdot \log\Big(1-\phi\big(f_\theta(x_i), f_\theta(x_j)\big)\Big),
\end{equation}
where $b$ is the batch size, $m$ the \texttt{negative\_sample\_rate} and $f_\theta$ the parametric embedding function.
\end{theorem}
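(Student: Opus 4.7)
The plan is to mirror the expectation computation of Section~\ref{sec:eff_loss}, carefully tracking the two modifications highlighted just before the theorem: a fixed number $b$ of positive edges are drawn per iteration, and negative samples are drawn from the $2b$ batch endpoints rather than from all $n$ data points. Let $E_1,\dots,E_b$ denote i.i.d.\ positive edges drawn with probability $\mu_{ij}/(2\mu(E))$ over ordered pairs $(i,j)$, let $V(B)$ denote the multiset of their $2b$ endpoints, and let $s_{k,l}$ with $l=1,\dots,m$ be the $m$ negative samples drawn uniformly from $V(B)$ for edge $E_k=(i_k,j_k)$. The per-iteration loss is
\begin{equation*}
L^{\text{it}} = -\sum_{k=1}^{b}\log\phi\bigl(f_\theta(x_{i_k}),f_\theta(x_{j_k})\bigr) -\sum_{k=1}^{b}\sum_{l=1}^{m}\log\bigl(1-\phi\bigl(f_\theta(x_{i_k}),f_\theta(x_{s_{k,l}})\bigr)\bigr),
\end{equation*}
and, to match the per-BCE-term convention used for $\tilde{\mathcal{L}}$ in~\eqref{eq:eff_loss}, I would identify the effective loss with $\mathbb{E}[L^{\text{it}}]/(b(m+1))$, the expected loss per individual summand of $L^{\text{it}}$.

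The attractive term is immediate: the $b$ positive edges are identically distributed, so $\mathbb{E}[-\sum_k \log\phi] = -b\sum_{i,j}\frac{\mu_{ij}}{2\mu(E)}\log\phi(f_\theta(x_i),f_\theta(x_j))$, and dividing by $b(m+1)$ recovers the first summand in the theorem. The real work is the repulsive part, which requires the joint probability $P(i_k=i,\, s_{k,l}=u)$. By exchangeability of the $b$ edges it suffices to condition on $E_1=(i,j)$; the remaining $b-1$ edges are still i.i.d., so the conditional expected count of $u$ in $V(B)$ is
\begin{equation*}
\mathbb{E}[c_u\mid E_1=(i,j)] = \mathds{1}(u\in\{i,j\}) + (b-1)\,\frac{d_u}{\mu(E)},
\end{equation*}
because an endpoint of a random edge equals $u$ with probability $d_u/(2\mu(E))$ at both the head and the tail slot. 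Dividing by $2b$ (uniform draw over batch endpoints) and dropping the two self-loop terms yields exactly $\frac{b-1}{b}\cdot\frac{d_u}{2\mu(E)}$, which is the source of the promised $(b-1)/b$ factor. Taking one more expectation over the head distribution $P(i_k=i)=d_i/(2\mu(E))$, multiplying by the $bm$ negative samples per iteration and normalizing by $b(m+1)$ produces the repulsive coefficient $\frac{1}{2(m+1)\mu(E)}\cdot m\frac{b-1}{b}\frac{d_id_j}{2\mu(E)}$ claimed in the theorem.

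The main obstacle is the bookkeeping around the self-loop contributions $\mathds{1}(u\in\{i,j\})$: keeping them would formally give divergent $\log(1-\phi(f_\theta(x_i),f_\theta(x_i)))$ terms and would also contaminate the $(b-1)/b$ factor with an $O(1/b)$ correction. The clean way around this is to argue that such self-draws are excluded from the negative-sample pool in the implementation (the head is not a valid negative sample for itself), or equivalently that they form an $O(1/b)$ additive correction that is immaterial for the batch sizes used in practice; either argument returns the exact coefficient in the statement. A subsidiary but easy-to-miss detail is the choice of normalization $b(m+1)$ rather than $b$ alone: this is the natural per-BCE-term convention that makes the attractive and repulsive coefficients of Parametric UMAP directly comparable to those of $\tilde{\mathcal{L}}$~\eqref{eq:eff_loss}, and it is precisely what produces the $1/(m+1)$ factor in the overall prefactor $\frac{1}{2(m+1)\mu(E)}$.
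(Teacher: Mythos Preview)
Your approach is sound and reaches the stated coefficient, but it follows a genuinely different path from the paper's proof in two respects.

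First, your model of the negative sampling differs from Algorithm~\ref{alg:PUMAP_sample}. There, negative pairs are formed by repeating the heads $B_h$ and the tails $B_t$ each $m$ times and applying a single uniformly random permutation $\pi$ to the repeated tails; so negative tails come only from $B_t$, without replacement, not i.i.d.\ from the multiset $V(B)$ of all $2b$ endpoints. Accordingly, the paper models $N_{ij}$ as hypergeometric given the batch, $\mathbb{E}_\pi(N_{ij})=H_iT_j/(mb)$ with $H_i=m\sum_b P_{ib}$ and $T_j=m\sum_a P_{aj}$, and then extracts the $(b-1)$ factor from the multinomial off-diagonal second moment $\mathbb{E}_B(P_{ib}P_{aj})=b(b-1)\mu_{ib}\mu_{aj}/(4\mu(E)^2)$. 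Your conditioning argument---fix one edge, the other $b-1$ remain i.i.d.---is a more elementary route to the same $(b-1)$; and because $\mu$ is symmetric, drawing from all endpoints has the same marginal as drawing from tails, so the leading term $m(b-1)d_id_j/(4\mu(E)^2)$ coincides in both models. Strictly, though, you are computing the expected loss for a slightly different sampler than the one the theorem is about.

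Second, your handling of the $\mathds{1}(u\in\{i,j\})$ indicator is a bit loose: only $u=i$ is a genuine self-loop with a divergent $\log(1-\phi)$; the case $u=j$ is a well-defined false-negative pair (the positive edge reappearing as repulsive), not a self-loop. Both contributions are $O(1/b)$ relative to the leading term and are equally dropped in the paper's computation (implicitly, by applying the off-diagonal covariance formula uniformly), so this does not affect the final statement. Your conditioning argument has the virtue of making the origin of the $(b-1)/b$ factor transparent; the paper's hypergeometric--multinomial computation has the virtue of matching the actual permutation mechanics of Algorithm~\ref{alg:PUMAP_sample}.
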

\begin{proof}
Let $P_{ij}$ be the random variable for the number of times that edge $ij$ is sampled into the batch $B$ of some iteration $t$. Let further $N_{ij}$ be the random variable holding the number of negative sample pairs $ij$ in that epoch. Then the loss at iteration $t$ is given by 
\begin{equation}
    \mathcal{L}^t = - \frac{1}{(m+1)b} \sum_{i,j=1}^n P_{ij}\cdot \log(\phi(f_\theta(x_i), f_\theta(x_j))) + N_{ij} \cdot \log(1-\phi(f_\theta(x_i), f_\theta(x_j)))
\end{equation}
To compute the expectation of this loss, we need to find the expectations of the $P_{ij}$'s and $N_{ij}$'s. The edges in batch $B$ are sampled independently with replacement from the categorical distribution over all edges with probability proportional to the high-dimensional similarities. Thus, $P_{ij}$ follows the multinomial distribution $\text{Mult}(b,\{\frac{\mu_{ab}}{2\mu_\text{tot}}\}_{a,b=1, \dots, n}\})$ and $\mathbb{E}(P_{ij}) = \frac{b \mu_{ij}}{2\mu_\text{tot}}$.

To get the negative sample pairs, each entry of the heads $B_h$ and tails $B_t$ in $B$ is repeated $m$ times. We introduce the random variables $H_a$ and $T_a$ for $a=1,\dots, n$, representing the number of occurrences of $a$ among the repeated heads and tails. $N_{ij}$ counts how often the sampled permutation of the repeated tails assigns a tail $j$ to a head $i$. This can be viewed as selecting a tail from $mB_t$ (tails repeated $m$ times) for each of the $H_i$ heads $i$ without replacement. There are $T_j$ tails that lead to a negative sample pair $ij$. Therefore, $N_{ij}$ follows a hypergeometric distribution $\text{Hyp}(mb, H_i, T_j)$. So, $\mathbb{E}_\pi(N_{ij})= \frac{H_iT_j}{mb}$. We have 
\begin{equation}
H_i = m\cdot \sum_b P_{ib} \text{ and } T_j = m\cdot \sum_a P_{aj}.
\end{equation}
Since the multinomially distributed $P_{ab}$'s have covariance $\text{Cov}(P_{ab}, P_{a', b'}) = -b\frac{\mu_{ab}\mu_{a'b'}}{4\mu_\text{tot}^2}$, we get 
\begin{equation}
    \mathbb{E}_B(P_{ab}P_{a'b'})= \text{Cov}(P_{ab}, P_{a'b'}) + \mathbb{E}_B(P_{ab})\mathbb{E}_B(P_{a'b'}) = b(b-1)\frac{\mu_{ab}\mu_{a'b'}}{4\mu_\text{tot}^2}.
\end{equation}
With this we compute the expectation of $\mathbb{E}_\pi(N_{ij})$ with respect to the batch assembly as
\begin{align}
    \mathbb{E}_B(\mathbb{E}_\pi(N_{ij})) &= \frac{1}{mb}\mathbb{E}_B(H_iT_j) \notag \\
                                      &= \frac{1}{mb}\mathbb{E}_B\left( m\sum_{b=1}^n P_{ib} \cdot m\sum_{a=1}^n P_{aj}\right) \notag \\
                                      &= \frac{m}{b} \sum_{a,b =1}^n \mathbb{E}_B(P_{ib}P_{aj}) \notag \\
                                      &= \frac{m}{b} \sum_{a,b =1}^n b(b-1)\frac{\mu_{ib}\mu_{aj}}{4\mu_\text{tot}^2} \notag\\
                                      &= m(b-1) \frac{d_i d_j}{4\mu_\text{tot}^2}.
\end{align}
Finally, as the random process of the batch assembly is independent of the choice of the permutation, we can split the total expectation up and get the expected loss 
\begin{align}
    &\mathbb{E}_{(B,\pi)}(\mathcal{L}^t) \notag \\
    &\quad= \mathbb{E}_B\mathbb{E}_\pi\left(- \frac{1}{(m+1)b} \sum_{i,j=1}^n P_{ij}\cdot \log(\phi(f_\theta(x_i), f_\theta(x_j))) + N_{ij} \cdot \log(1-\phi(f_\theta(x_i), f_\theta(x_j)))\right) \notag \\
    &\quad=- \frac{1}{(m+1)b}  \sum_{i,j=1}^n \mathbb{E}_B(\mathbb{E}_\pi(P_{ij}))\cdot \log(\phi(f_\theta(x_i), f_\theta(x_j))) \notag \\
    & \hspace{2.9cm}+ \mathbb{E}_B\mathbb{E}_\pi(N_{ij}) \cdot \log(1-\phi(f_\theta(x_i), f_\theta(x_j)))\notag \\
    &\quad = - \frac{1}{(m+1)b}  \sum_{i,j=1}^n \frac{b\mu_{ij}}{2\mu_\text{tot}}\cdot \log(\phi(f_\theta(x_i), f_\theta(x_j))) \notag \\
    & \hspace{2.9cm}+  m(b-1)\frac{d_i d_j}{4\mu_\text{tot}^2}\cdot \log(1-\phi(f_\theta(x_i), f_\theta(x_j))) \notag \\
    &\quad = - \frac{1}{2(m+1)\mu_\text{tot}}  \sum_{i,j=1}^n \mu_{ij}\cdot \log\Big(\phi\big(f_\theta(x_i), f_\theta(x_j)\big)\Big) \notag \\
    & \hspace{2.9cm}+  m\frac{b-1}{b} \frac{d_i d_j}{2\mu_\text{tot}}\cdot \log\Big(1-\phi\big(f_\theta(x_i), f_\theta(x_j)\big)\Big).
\end{align}
\end{proof}

\LinesNumbered
\begin{algorithm}[tb]
\SetAlgoLined
\SetAlgoNoEnd
\SetKwInOut{Input}{input}\SetKwInOut{Output}{output}
\DontPrintSemicolon
\Input{high-dimensional similarities $\mu_{ij}$, number of epochs T, learning rate $\alpha$, embedding network $f_\theta$, batch size $b$}
\Output{final embeddings $e_i$}
\For{$\tau = 0$ \KwTo $T$}{
    \emph{Assemble batch}\;
    $B_h, B_t = [\,],[\,] $\tcp*{Initialize empty mini-batches for heads and tails }
    \For(\tcp*[f]{Sample edge by input similarity and add to batch}){$\beta=1$ \KwTo $b$}{
        $ij \sim \text{Cat}(\{1, \dots, n\}^2, \{\frac{\mu_{ab}}{2\mu_\text{tot}}\}_{a,b=1, \dots, n}\})$\;
        $B_h\text{.append}(f_\theta(x_i))$\;
        $B_t\text{.append}(f_\theta(x_j))$\;
    }
    
    \emph{Compute loss}\;
    $l = 0$\;
    \For(\tcp*[f]{Add attractive loss for sampled edges} ){$\beta=1$ \KwTo $b$}{
        $l = l + \mathcal{L}^a(B_h[\beta], B_t[\beta])$\;
    }
    $\pi \sim \text{Uniform}(\text{permutations of }\{1, \dots, m\cdot b\})$\;
    \For(\tcp*[f]{Add repulsive loss between negative samples}){$\beta=1$ \KwTo $mb$}{
        $l = l + \mathcal{L}^r(mB_h[\beta], mB_t[\pi(\beta)])$\tcp*{$mB$ repeats  $B$ $m$ times}
    }
    $l = \frac{l}{(m+1)b}$\;
    \emph{Update parameters}\;
    $\theta =  \theta - \alpha \cdot \nabla_\theta l$\;
        
}
\KwRet{$f_\theta(x_1), \dots, f_\theta(x_n)$}
\caption{Parametic UMAP's sampling based optimization}
 \label{alg:PUMAP_sample}
\end{algorithm}

\section{UMAP degree distributions}\label{app:deg_distr}
Before symmetrization, the degree of each node $\vec{d}_i = \sum_{j=1}^n \mu_{i \to j}$ equals $\log_2(k)$  due to UMAP's uniformity assumption. For UMAP's default value of $k=15$ this is $\approx 3.9$, for $k=30$ as for the C.elegans dataset $\approx 4.9$. Symmetrizing changes the degree in a dataset-dependent way. Since  $\max(a, b) \leq a +b - ab$ for $a,b \in [0, 1]$, the symmetric degrees $d_i = \sum_{j=1}^n \mu_{ij}$ are lower bounded by $\log_2(k)$. Empirically, we find that the degree distribution is fairly peaked close to this lower bound, see Figure~\ref{fig:degree_hist}.

In the shared $k$NN graph each node has degree at least $k$. Empirically, the degree distribution is fairly peaked at this lower bound, see Figure~\ref{fig:degree_hist_kNN}.

\begin{figure}
    \centering
    \includegraphics[width=0.5\linewidth]{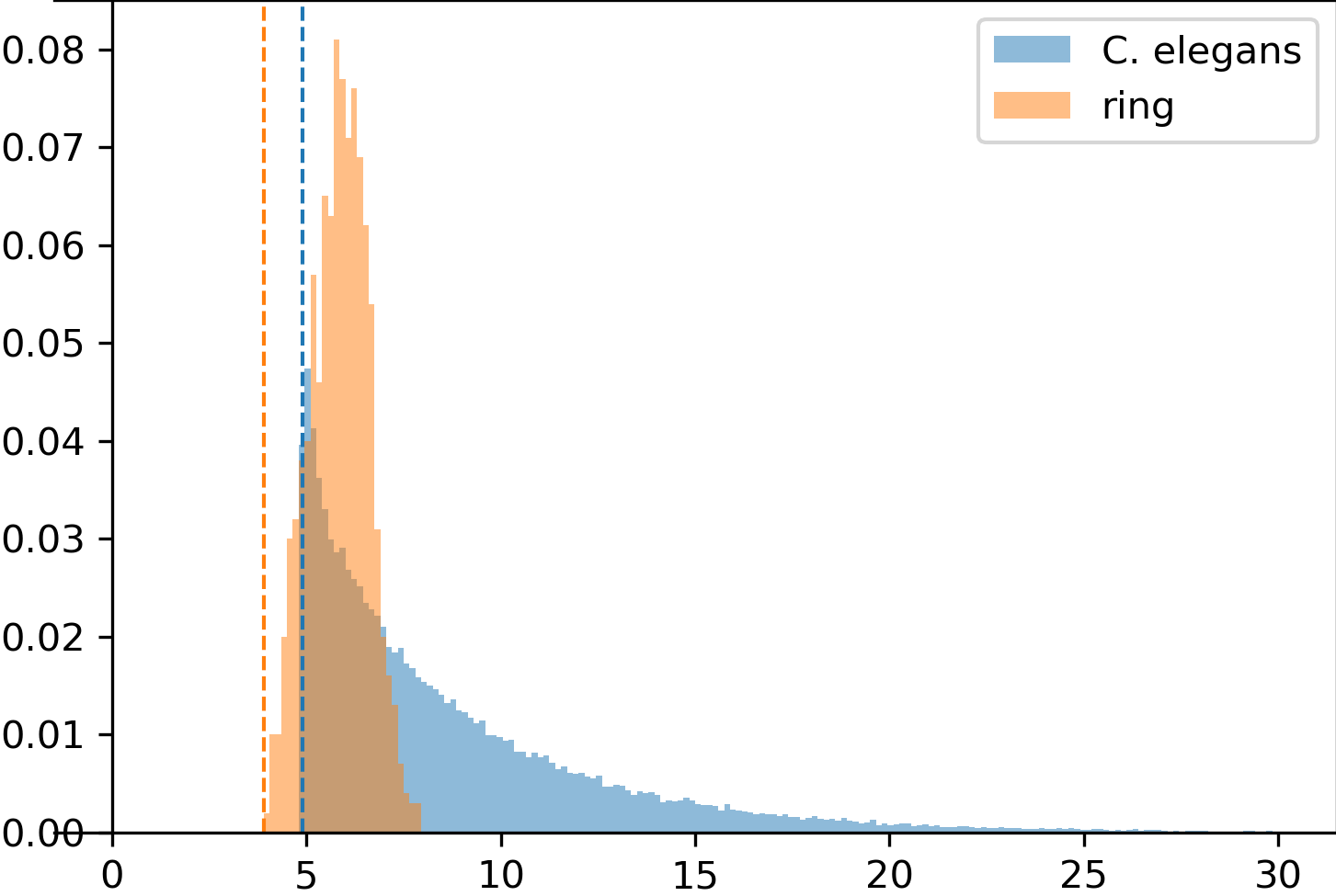}
    \caption{Histogram over the UMAP degree distributions for the toy ring and the C. elegans datasets. Both distributions are fairly peaked close to their lower bound $\log_2(k)$, highlighted as dashed line.}
    \label{fig:degree_hist}
\end{figure}

\begin{figure}
    \centering
    \includegraphics[width=0.5\linewidth]{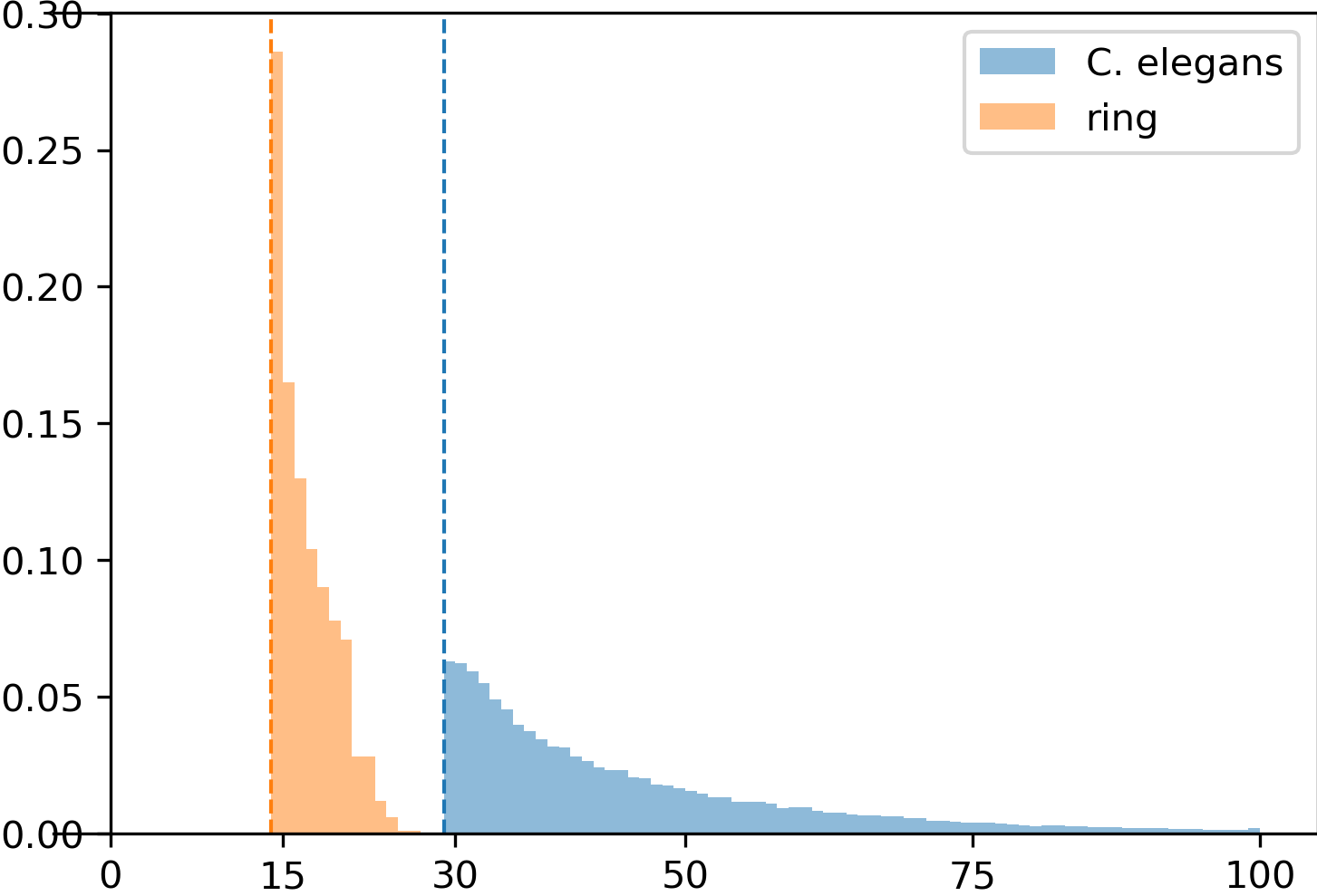}
    \caption{Histogram over the degree distribution in the shared $k$NN graph for the toy ring and the C. elegans datasets. Both distributions are fairly peaked close to their lower bound $k-1$, highlighted as dashed line. Since UMAP's implementation considers a points its first nearest neighbor, but the $\mu_{ii}$ are set to zero, the degree is one lower than the intended number of nearest neighbors $k$.}
    \label{fig:degree_hist_kNN}
\end{figure}

\section{Computing the expected gradient of UMAP's optimization procedure}\label{app:exp_UMAP_grad}
In this appendix, we elaborate the computation that leads to equation \eqref{eq:exp_UMAP_grad_no_push_tail}:
\begin{align}
    \mathbb{E}\left(g_i^t\right) &= \mathbb{E}\left( \sum_{j=1}^n X_{ij}^t \cdot \frac{\partial \mathcal{L}^a_{ij}}{\partial e_i}   +X_{ji}^t \cdot \frac{\partial \mathcal{L}^a_{ji}}{\partial e_i}  + X_{ij}^t \cdot \sum_{s=1}^n Y^t_{ij, s} \cdot \frac{\partial \mathcal{L}^r_{is}}{\partial e_i}\right) \notag\\
    &=  \sum_{j=1}^n \left(\mathbb{E}(X_{ij}^t) \cdot \frac{\partial \mathcal{L}^a_{ij}}{\partial e_i} +\mathbb{E}(X_{ji}^t) \cdot \frac{\partial \mathcal{L}^a_{ji}}{\partial e_i}  + \sum_{s=1}^n \mathbb{E}(X_{ij}^tY^t_{ij, s}) \cdot \frac{\partial \mathcal{L}^r_{is}}{\partial e_i} \right)\notag \\
    &=    \sum_{j=1}^n \mu_{ij} \cdot \frac{\partial \mathcal{L}^a_{ij}}{\partial e_i} +  \mu_{ji} \cdot \frac{\partial \mathcal{L}^a_{ji}}{\partial e_i} + \sum_{s=1}^n \sum_{j=1}^n\frac{\mu_{ij}m}{n} \cdot \frac{\partial \mathcal{L}^r_{is}}{\partial e_i} \notag\\
    &= 2 \sum_{j=1}^n  \mu_{ij} \cdot \frac{\partial \mathcal{L}^a_{ij}}{\partial e_i} + \frac{d_i m}{2n} \cdot \frac{\partial \mathcal{L}^r_{ij}}{\partial e_i}.
\end{align}
From line 2 to 3, we computed $\mathbb{E}(X_{ij}^tY^t_{ij, s}) = \mathbb{E}_{X_{ij}^t}\big(X_{ij}^t\cdot \mathbb{E}(Y_{ij,s}^t|X_{ij}^t)\big) = \frac{\mu_{ij}  m}{n}$ and from line 3 to 4 we used the symmetry of $\mu_{ij}$ and $\mathcal{L}^a_{ij}$ and collected the high-dimensional similarities $\sum_j \mu_{ij}$ into the degree $d_i$.

\section{UMAP's update rule has no objective function}\label{app:no_obj_function}
In this appendix, we show that the expected gradient update in UMAP's optimization scheme does not correspond to any objective function. Recall that the expected update of an embedding $e_i$ in UMAP's optimization scheme~\eqref{eq:exp_UMAP_grad_no_push_tail} is
\begin{equation}
    \mathbb{E}\left(g_i^t\right) = 2 \sum_{j=1}^n \mu_{ij} \cdot \frac{\partial \mathcal{L}^a_{ij}}{\partial e_i} + \frac{d_im}{2n} \cdot \frac{\partial \mathcal{L}^r_{ij}}{\partial e_i}
\end{equation}
It is continuously differentiable unless two embedding points coincide. Therefore, if it had an antiderivative, that would be twice continuously differentiable at configurations where all embeddings are pairwise distinct and thus needs to have a symmetric Hessian at these points. However, we have
\begin{align}
\label{eq:second_order_partial}
    \frac{\partial \mathbb{E}\left(\frac{\partial \mathcal{L}^t}{\partial e_i}\right)}{\partial e_j } &= 2 \mu_{ij} \cdot \frac{\partial^2 \mathcal{L}^a_{ij}}{\partial e_j \partial e_i} + \frac{d_im}{2n} \cdot \frac{\partial \mathcal{L}^r_{ij}}{\partial e_j \partial e_i} \notag \\
    \frac{\partial \mathbb{E}\left(\frac{\partial \mathcal{L}^t}{\partial e_j}\right)}{\partial e_i } &= 2 \mu_{ij} \cdot \frac{\partial^2 \mathcal{L}^a_{ij}}{\partial e_i \partial e_j} + \frac{d_jm}{2n} \cdot \frac{\partial \mathcal{L}^r_{ij}}{\partial e_i \partial e_j}.
\end{align}
Since $\mathcal{L}^a_{ij}$ and $\mathcal{L}^r_{ij}$ are themselves twice continuously differentiable, their second order partial derivatives are symmetric. But this makes the two expressions in equation~\eqref{eq:second_order_partial} unequal unless $d_i$ equals $d_j$. 

The problem is that negative samples themselves are not updated, see commented line~\ref{algline:push_tail} in Algorithm~\ref{alg:npUMAP_sample}. We suggest to remedy this by pushing the embedding of a negative sample $i$ away from the embedding node $e_j$, whenever $i$ was sampled as negative sample to some edge $jk$ This yields the gradient in equation~\eqref{eq:UMAP_grad_push_tail} at epoch $t$.

\section{Implementation Details}\label{app:implementation}
To deal with the quadratic complexity when computing all dense low-dimensional similarities $\nu_{ij}$, we used the Python package PyKeOps~\cite{charlier2020kernel} that parallelizes the computations on the GPU.

To guard us against numerical instabilities from $\log$, we always use \mbox{$\log(\min(x+0.0001, 1))$} instead of $\log(x)$.

When computing the various loss terms for UMAP, we always use the embeddings after each full epoch. The embeddings in UMAP are updated as soon as the an incident edge is sampled. Thus, an embedding might be updated several times during an epoch and gradient computations always use the current embedding, which might differ slightly from the embedding after the full epoch. Logging the loss given the embeddings at the time of each individual update yields as slightly lower attractive loss term, see Figure~\ref{fig:c_elegans_losses_during_epoch}.

Our description of UMAP's implementation is based on the original paper~\cite{mcinnes2018umap} and version 0.5.0 of the umap-learn package.\footnote{https://github.com/lmcinnes/umap}

 \begin{figure}[tb]
    \centering
    \includegraphics[width=0.5\linewidth]{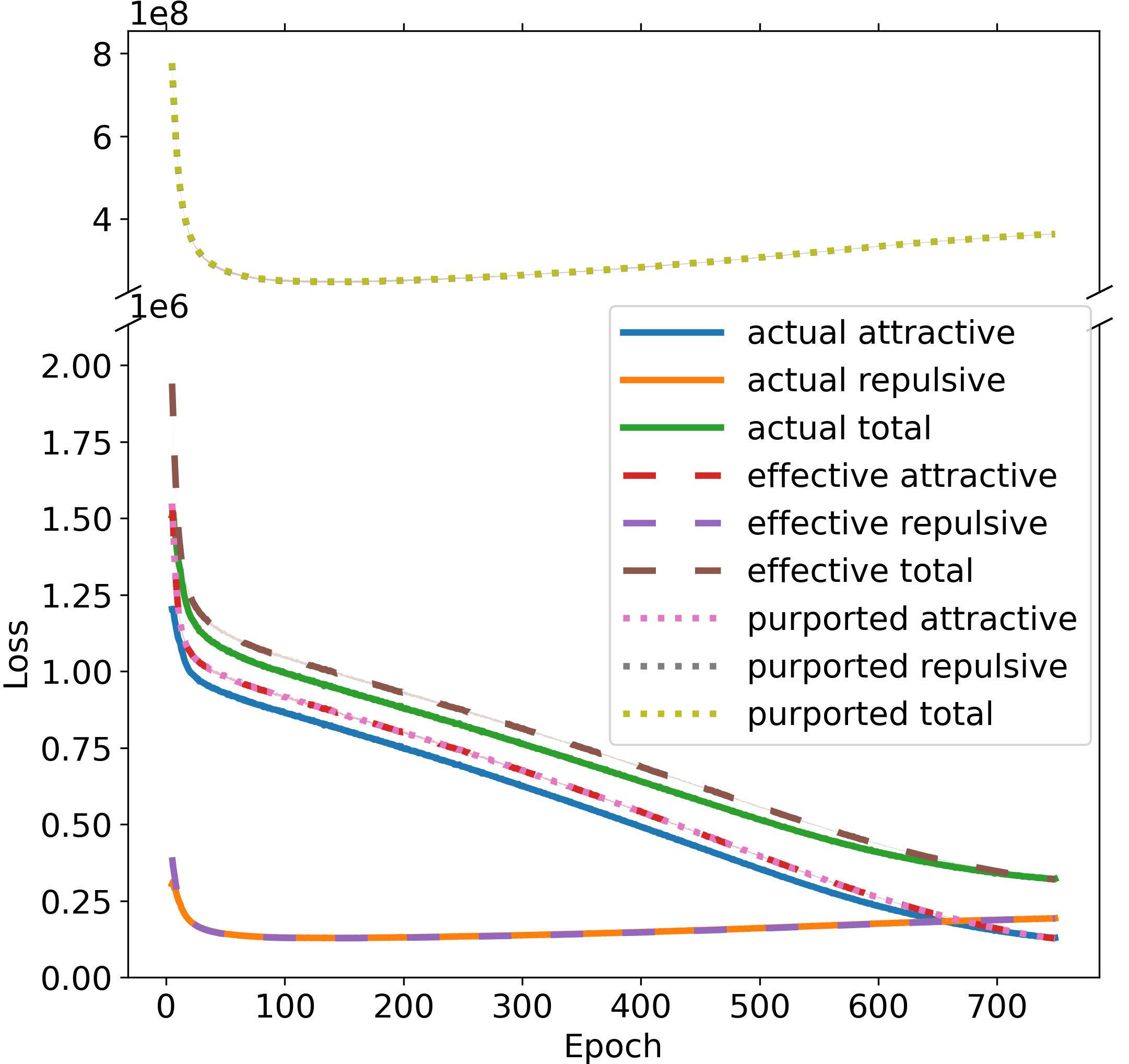}
    \caption{Same as Figure~\ref{fig:c_elegans_losses}, but actual losses are computed with the embeddings at the time of update not with the embeddings after the full epoch as all other losses.}
    \label{fig:c_elegans_losses_during_epoch}
\end{figure}

Our code is publicly available at \url{https://github.com/hci-unihd/UMAPs-true-loss}.

\subsection{Stability}\label{subapp:stability}
Whenever we report loss values, we computed the average over seven runs and give an uncertainty of one standard deviation. Sources of randomness are in the approximate $k$NN computation via nearest neighbor descent~\cite{dong2011efficient}, the Gaussian noise added to the Laplacian Eigenmap initialization and the sampling of the toy ring data itself. Note that the sampling based optimization procedure is implemented deterministically and includes an edge $ij$ every $\max_{ab} \mu_{ab} / \mu_{ij} $-th epoch. We find that the deviation in the loss values is very small across different runs. In fact, as the standard deviation is barely visible in Figure~\ref{fig:c_elegans_losses}, we include the same figure but with shaded areas corresponding to ten standard deviations in Figure~\ref{fig:c_elegans_after_losses_10_std_devs}. Nevertheless, the visual effect of different random seeds can be substantial as depicted in Figure~\ref{fig:c_elegans_random_seeds}.

\subsection{Compute}\label{subapp:compute}
We ran all our experiments on a machine with 20 ``Intel(R) Xeon(R) Silver 4114 CPU @ 2.20GHz" CPUs and six ``Nvidia GeForce GTX 1080 Ti'' GPUs. We only ever used a single GPU and solely for computing the effective and purported losses $\tilde{\mathcal{L}}$ (eq.~\eqref{eq:eff_loss}) and $\mathcal{L}$ (eq.~\eqref{eq:UMAP_obj_embd}). Table~\ref{tab:run_times} shows the run times for the main experiments averaged over 7 runs. Uncertainties indicate one standard deviation.
Logging the losses during optimization quadruples UMAP's run time on the C. elegans dataset. This is due to the quadratic complexity of evaluating the effective and purported loss functions. But with our GPU implementation this longer run time is still easily manageable for the reasonably large real world C. elegans dataset. The toy ring experiments with a dense and thus much larger input graph take about 25 times longer than with the normal, sparse input similarities.

We estimate the total compute by adding the run times of the experiments necessary to reproduce the paper. The number of comparable experiments needed to reproduce the paper is given in Table~\ref{tab:run_times}. The total run time amounts to about 17.5 hours. %

\begin{table}
  \caption{Run times of key experiments averaged over seven runs with standard deviation and number of runs of similar experiments needed to reproduce the paper}
  \label{tab:run_times}
  \centering
  \begin{tabular}{lcccc}
    \toprule
    Experiment & C. elegans w/o  & C. elegans with  & toy ring  & toy ring with  dense input\\
    &loss logging & loss logging (Fig.~\ref{fig:c_elegans_losses}) & (Fig.~\ref{subfig:toy_ring_UMAP})  &   similarities (Fig.~\ref{subfig:toy_ring_dense})\\
    \midrule
    Run time [s] & $511 \pm 6$ & $1995 \pm 5$ & $47.5 \pm 0.14$ &  $1236 \pm 4$\\
    Runs & $7$ %
                   & $23$ %
                   & $12$ %
                   & $10$ %
                   \\ 
    \bottomrule
  \end{tabular}
\end{table}

\begin{figure}
    \centering
    \includegraphics[width=0.5\textwidth]{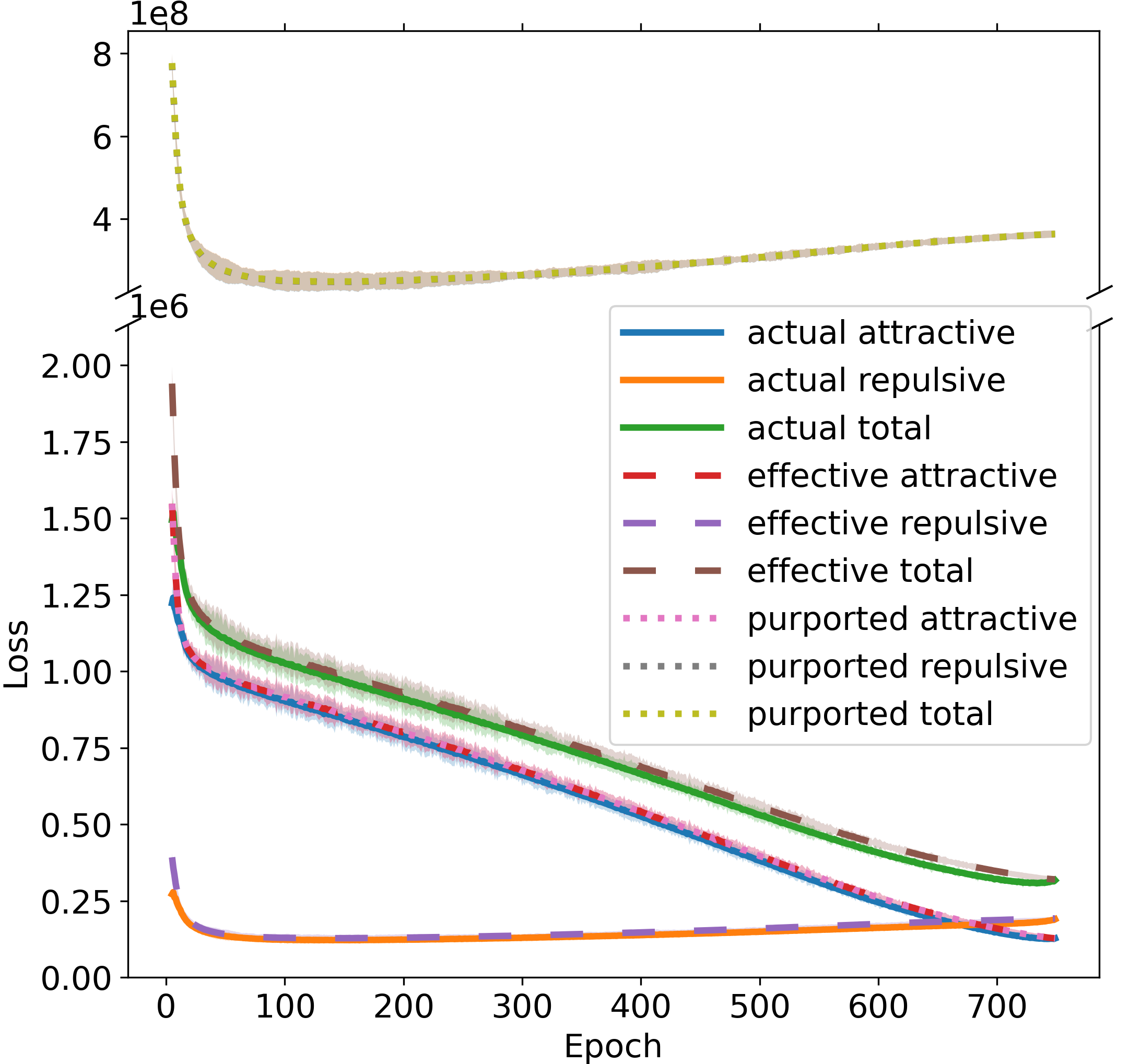}
    \caption{Same as Figure~\ref{fig:c_elegans_losses} but here the shaded region corresponds to ten standard deviations.}
    \label{fig:c_elegans_after_losses_10_std_devs}
\end{figure}

\begin{figure}
    \begin{subfigure}[t]{0.25\textwidth}
        \centering
        \includegraphics[width=.9\linewidth]{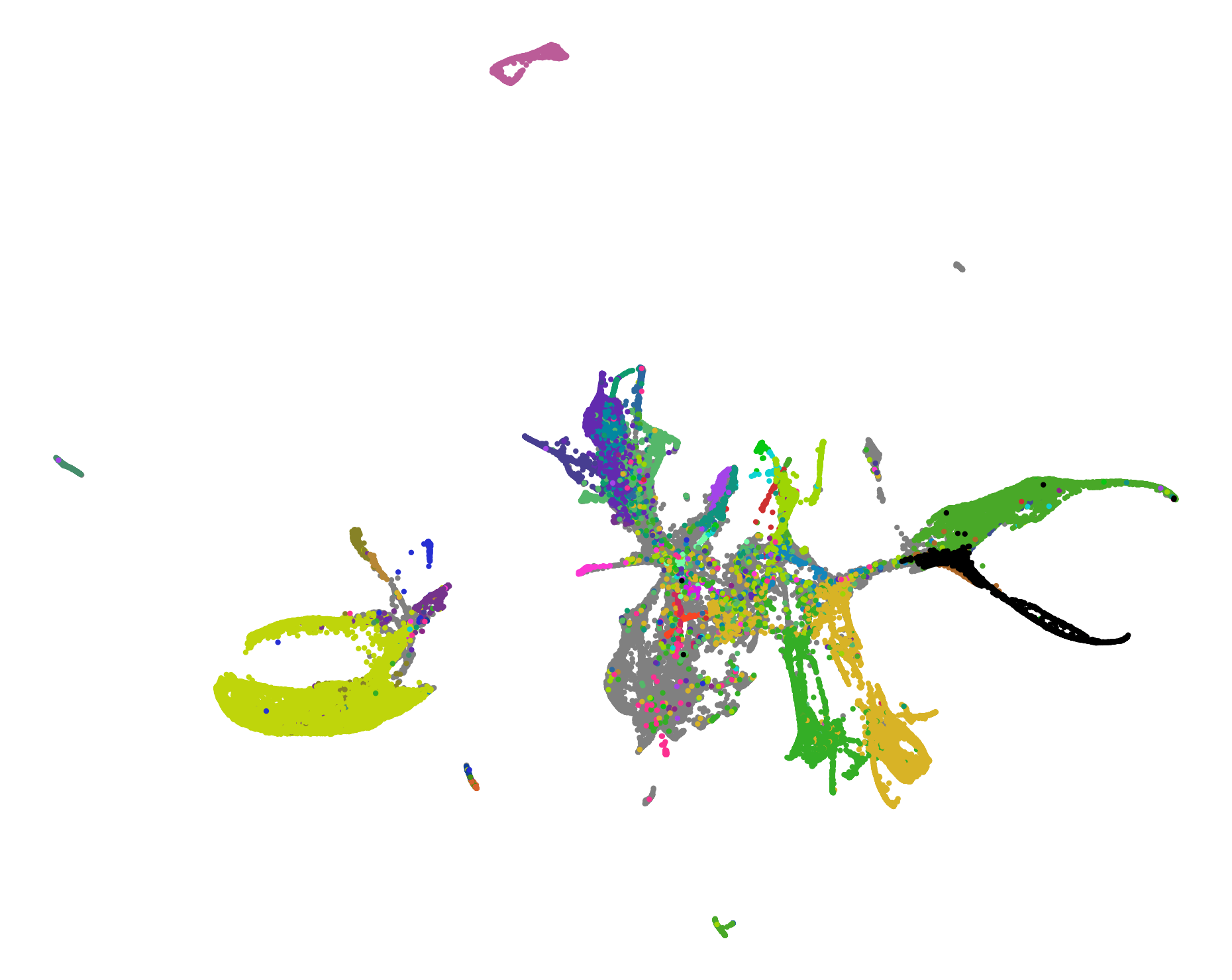}
        \caption{Seed 0}
        \label{fig:c_elegans_UMAP_seed_0}
    \end{subfigure}%
    \begin{subfigure}[t]{0.25\textwidth}
        \centering
        \includegraphics[width=.9\linewidth]{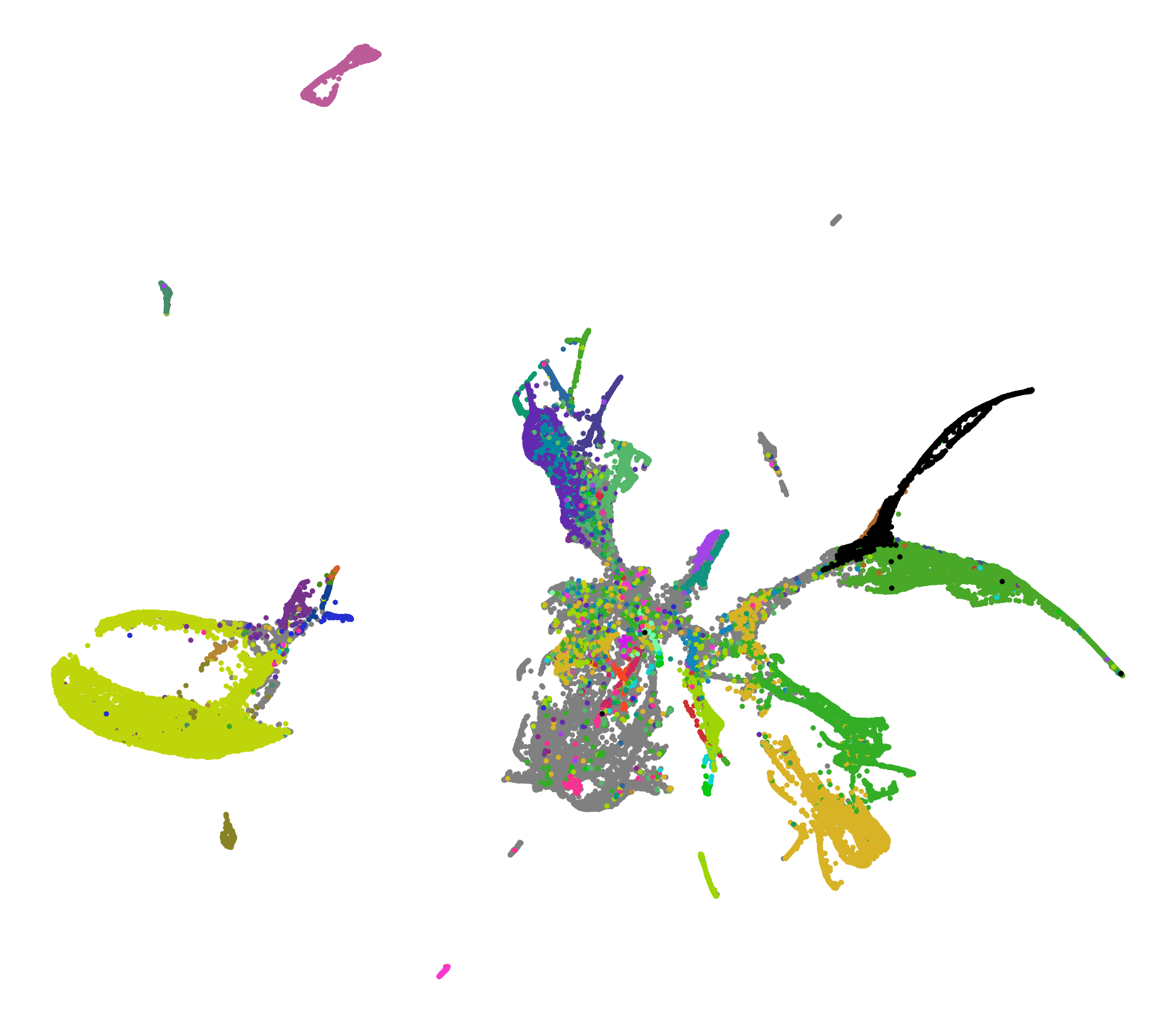}
        \caption{Seed 1}
        \label{fig:c_elegans_UMAP_seed_1}
    \end{subfigure}%
    \begin{subfigure}[t]{0.25\textwidth}
        \centering
        \includegraphics[width=.9\linewidth]{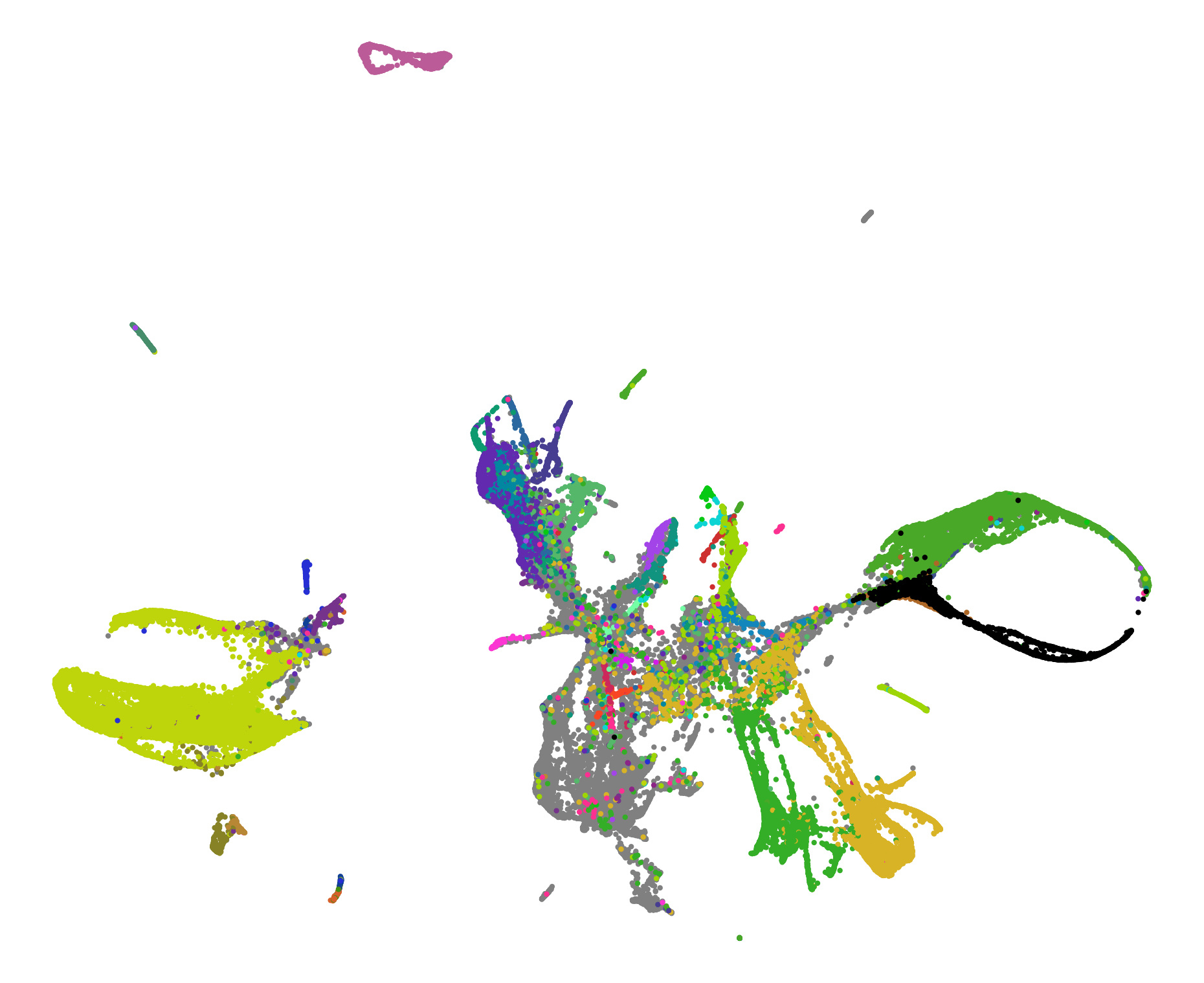}
        \caption{Seed 2}
        \label{fig:c_elegans_UMAP_seed_2}
    \end{subfigure}%
    \begin{subfigure}[t]{0.25\textwidth}
        \centering
        \includegraphics[width=.9\linewidth]{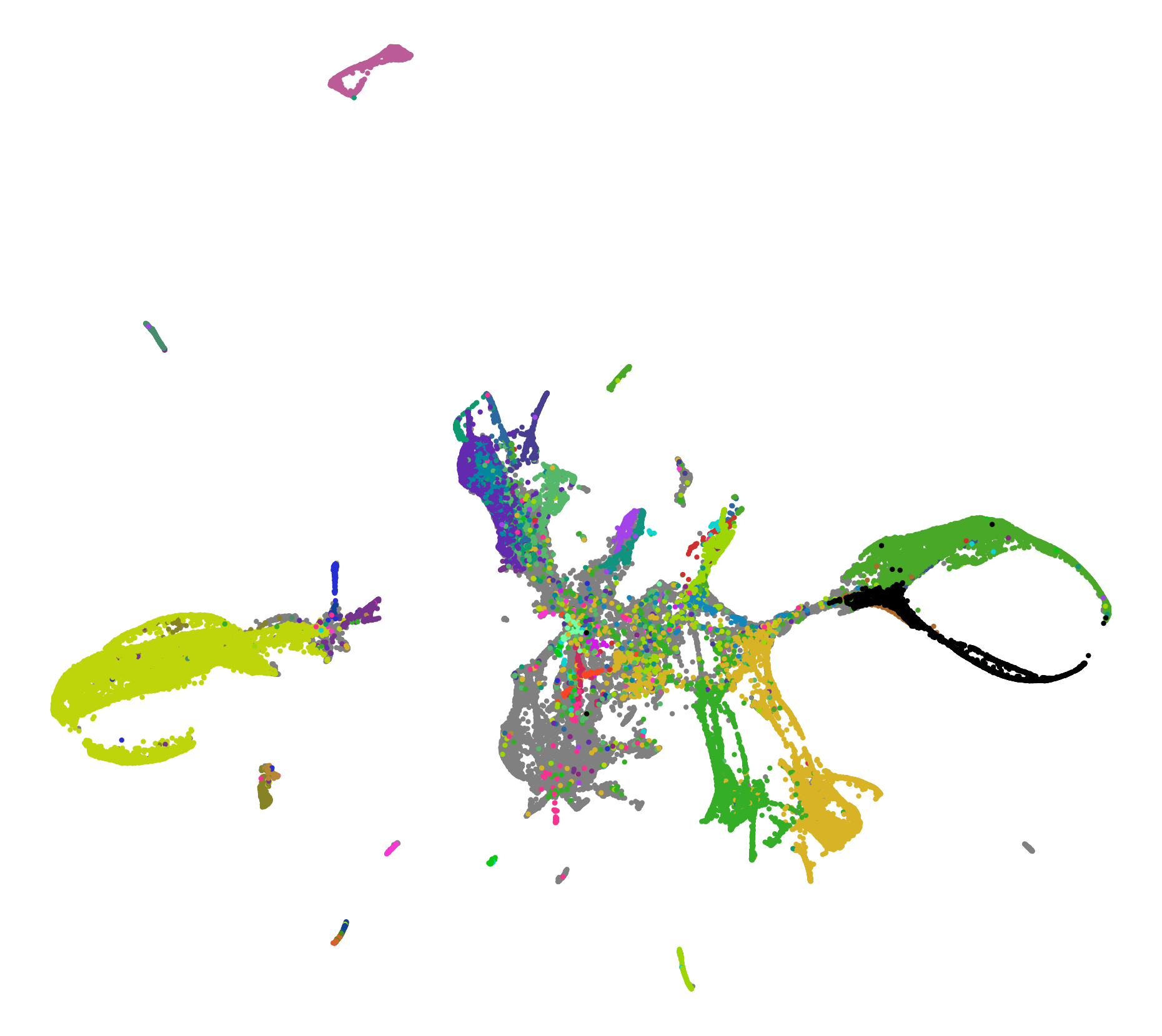}
        \caption{Seed 3}
        \label{fig:c_elegans_UMAP_seed_3}
    \end{subfigure}
    
    \begin{subfigure}[t]{0.25\textwidth}
        \centering
        \includegraphics[width=.9\linewidth]{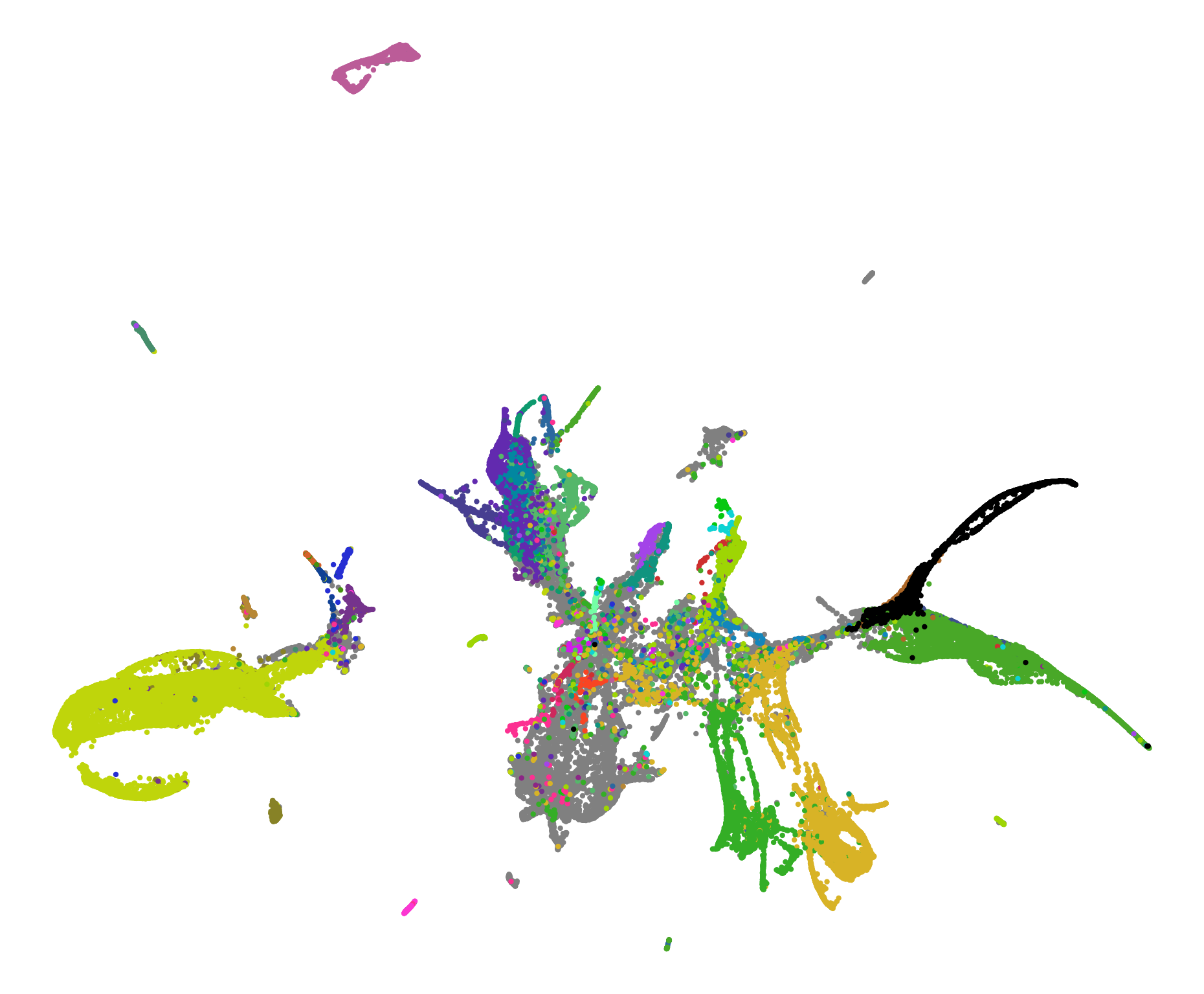}
        \caption{Seed 4}
        \label{fig:c_elegans_UMAP_seed_4}
    \end{subfigure}%
    \begin{subfigure}[t]{0.25\textwidth}
        \centering
        \includegraphics[width=.9\linewidth]{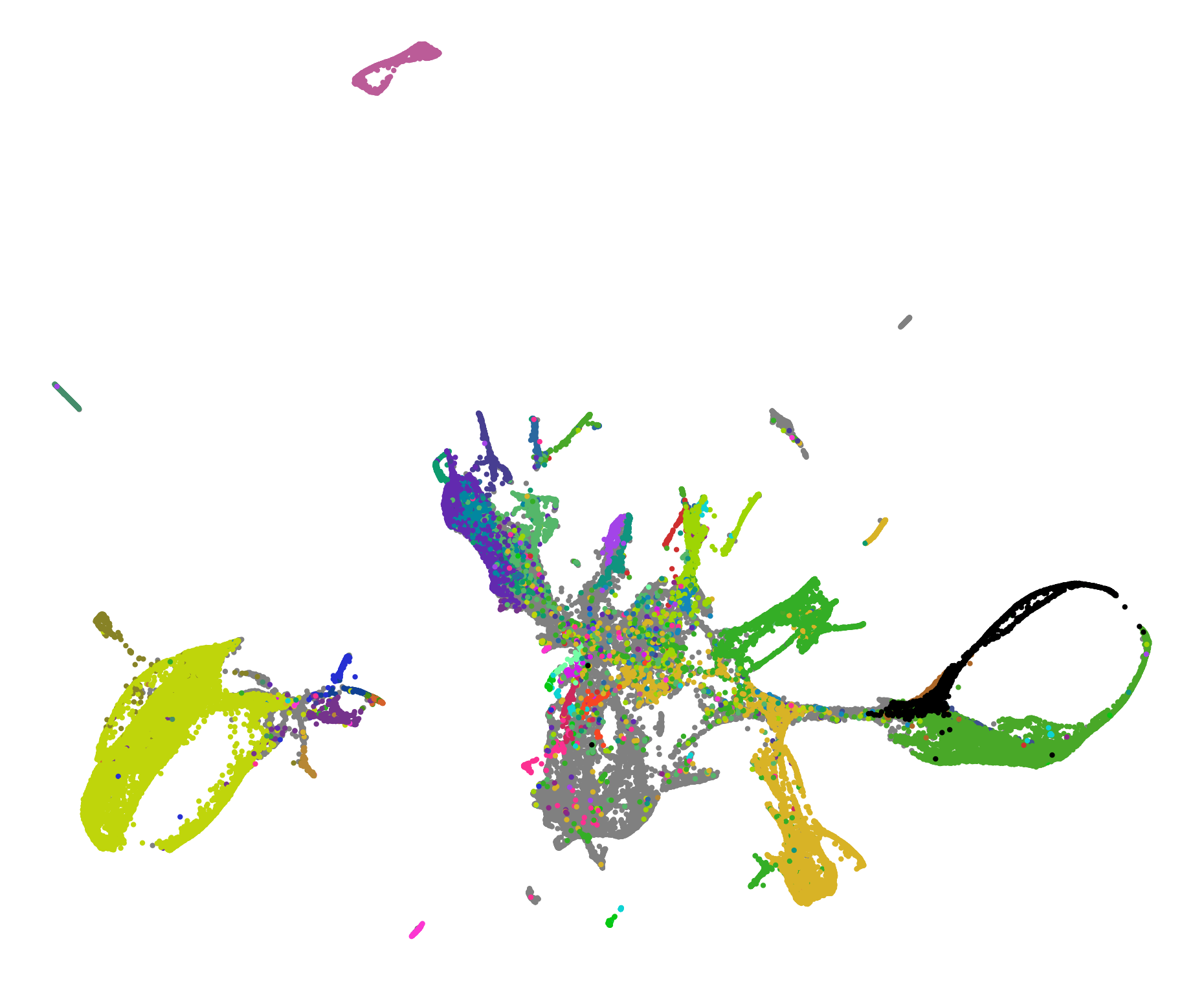}
        \caption{Seed 5}
        \label{fig:c_elegans_UMAP_seed_5}
    \end{subfigure}%
    \begin{subfigure}[t]{0.25\textwidth}
        \centering
        \includegraphics[width=.9\linewidth]{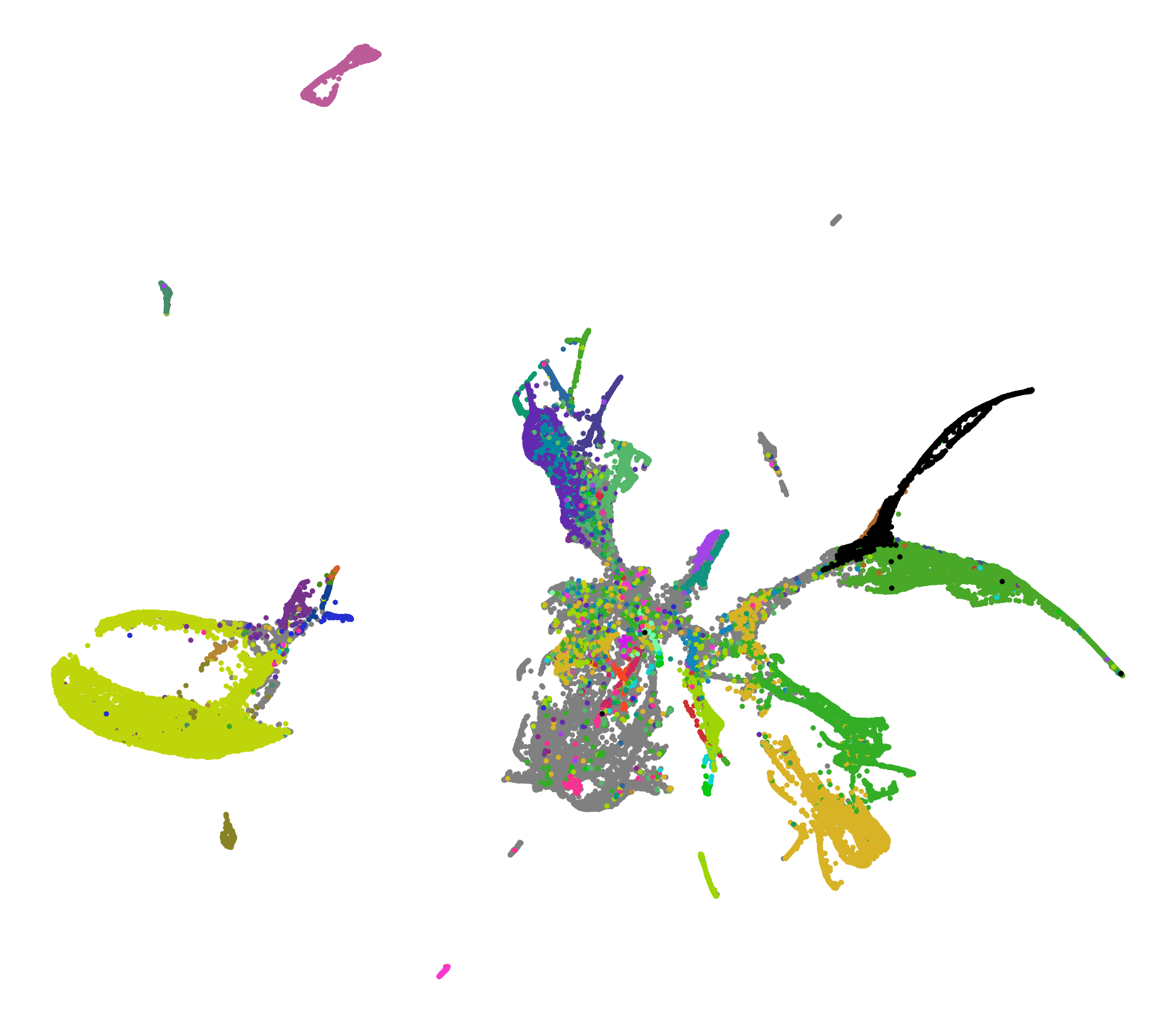}
        \caption{Seed 6}
        \label{fig:c_elegans_UMAP_seed_6}
    \end{subfigure}
    
    \caption{Visualizations of the C. elegans dataset with UMAP and hyperparameters from~\cite{narayan2021assessing} for seven different random seeds. While the losses vary very little, see Figures~\ref{fig:c_elegans_losses} and~\ref{fig:c_elegans_after_losses_10_std_devs}, the visualizations show significant differences, such as closed versus open loops and placement of subgroups. All plots were subjectively flipped and rotated by multiples of $\pi/2$ to ease a visual comparison.}
    \label{fig:c_elegans_random_seeds}
\end{figure}

\section{Societal impact}\label{app:societal_impact}
Together with $t$SNE, UMAP is state-of-the-art for visualizing high-dimensional datasets. It is particularly popular for gene expression data. Our work contributes to the deeper understanding of UMAP, which can benefit many applications, especially in biology. We hope that our contribution will both further theoretical research in non-linear dimension reduction techniques as well as help practitioners interpret their UMAP results more faithfully.

Nevertheless, we neither provide a holistic explanation for UMAP's behavior nor establish faithfulness guarantees. Hence, we caution against overconfidence in UMAP visualizations: Insights gained from exploratory data analysis with UMAP should never be take at face value but experimentally validated.

Exploratory data analysis is a general tool and the societal impact depends on the analyst's intention and the analyzed data. For instance, on data indirectly containing personally identifiable information, UMAP insights might constitute privacy violations. 

Logging UMAP's loss as done in this work increases the computational footprint noticeably. While instructive for validating our theoretical results, we believe that in a typical use case UMAP losses do not need to be logged. We therefore recommend to avoid the additional compute unless fine-grained analysis of UMAP's optimization procedure is needed, for instance to investigate unexpected results.

\section{Additional figures}\label{app:additional_figures}

\begin{figure}[!htb]
    \begin{subfigure}[t]{0.25\textwidth}
        \centering
        \includegraphics[width=.9\linewidth]{figures/toy_circle_1k_4_0_5_original_seed_3.png}
        \caption{Original data}
        \label{subfig:app_toy_ring_original}
    \end{subfigure}%
    \begin{subfigure}[t]{0.25\textwidth}
        \centering
        \includegraphics[width=.9\linewidth]{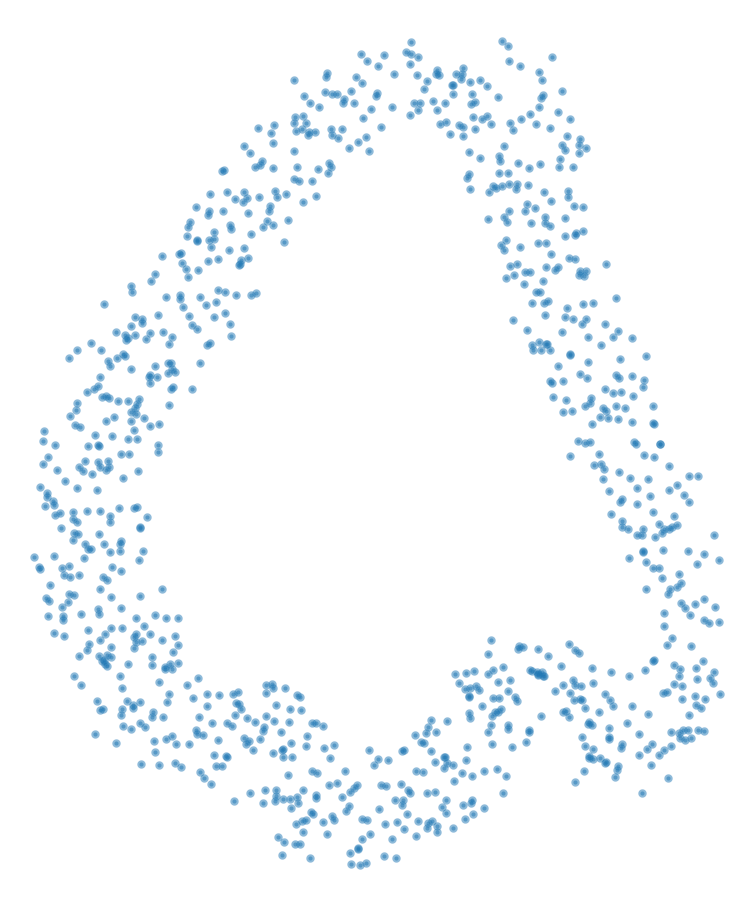}
        \caption{Init data,\\
                 dense similarities}
        \label{subfig:app_toy_ring_init_graph}
    \end{subfigure}%
    \begin{subfigure}[t]{0.25\textwidth}
        \centering
        \includegraphics[width=.9\linewidth]{figures/toy_circle_1k_4_0_5_init_graph_10000_seed_3.png}
        \caption{Init data,\\
                 dense similarities, \\
                 10000 epochs}
        \label{subfig:app_toy_ring_init_graph_10000}
    \end{subfigure}
    
    \begin{subfigure}[t]{0.25\textwidth}
        \centering
        \includegraphics[width=.9\linewidth]{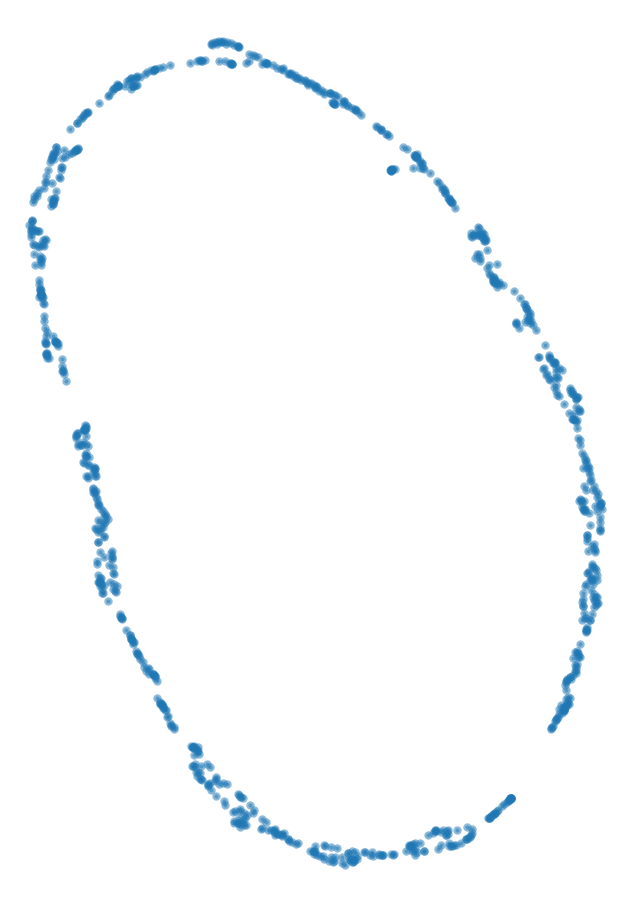}
        \caption{Default UMAP}
        \label{subfig:app_toy_ring_default}
    \end{subfigure}%
    \begin{subfigure}[t]{0.25\textwidth}
        \centering
        \includegraphics[width=.9\linewidth]{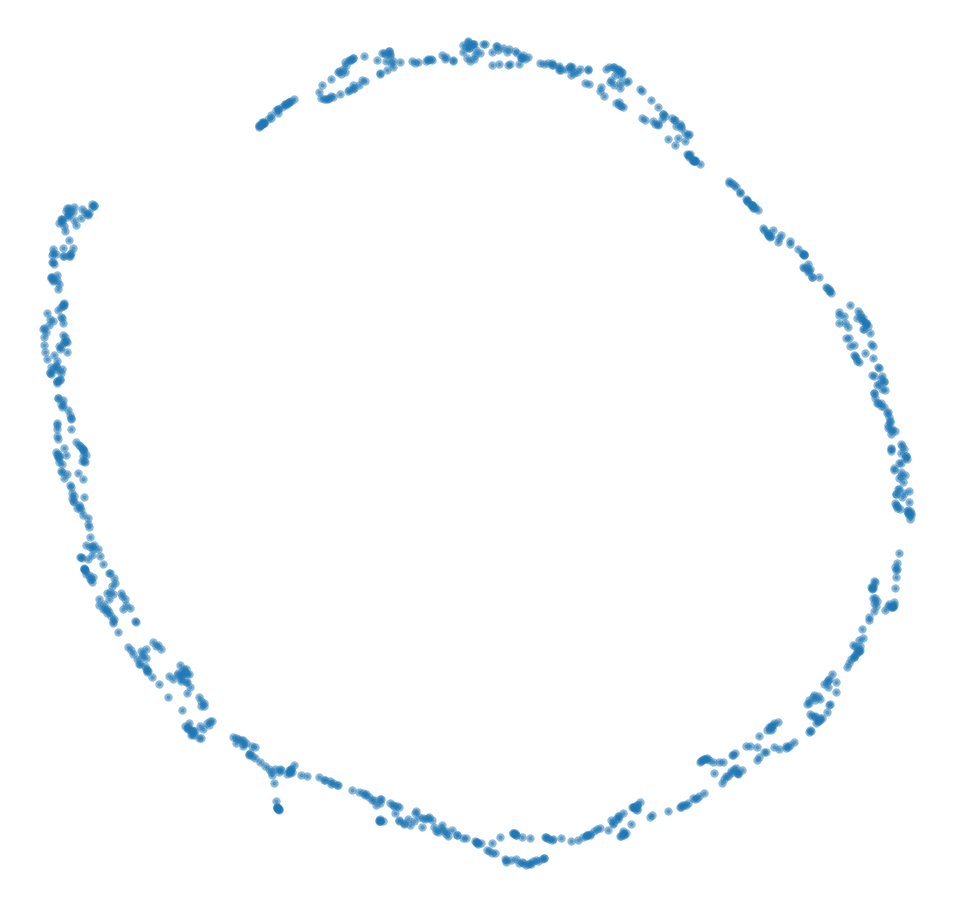}
        \caption{10000 epochs}
        \label{subfig:app_toy_ring_10000}
    \end{subfigure}%
    \begin{subfigure}[t]{0.25\textwidth}
        \centering
        \includegraphics[width=.9\linewidth]{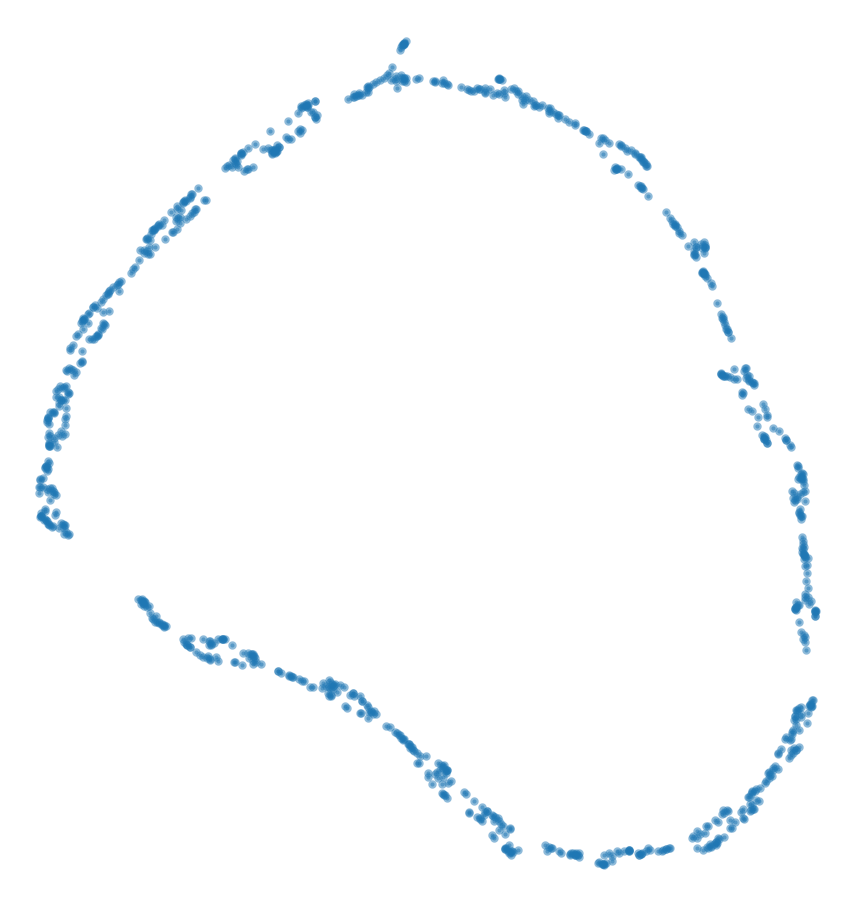}
        \caption{Init data}
        \label{subfig:app_toy_ring_init}
    \end{subfigure}%
    \begin{subfigure}[t]{0.25\textwidth}
        \centering
        \includegraphics[width=.9\linewidth]{figures/toy_circle_1k_4_0_5_init_10000_seed_3.png}
        \caption{Init data,\\
                 10000 epochs}
        \label{subfig:app_toy_ring_init_10000}
    \end{subfigure}%
    \caption{UMAP does not preserve the data even when embedding to the input dimension. Extension of Figure~\ref{fig:toy_ring}. \ref{subfig:app_toy_ring_original}~Original data: 1000 uniform samples from ring in 2D. \ref{subfig:app_toy_ring_init_graph}~Result of UMAP when initialized with the original data and using  dense input space similarities computed from the original data with $\phi$. \ref{subfig:app_toy_ring_init_graph_10000} Same as \ref{subfig:app_toy_ring_init_graph} but optimized for 10000 epochs. \ref{subfig:app_toy_ring_default} UMAP visualization with default hyperparameters. \ref{subfig:app_toy_ring_10000}~Same as \ref{subfig:app_toy_ring_default} but optimized for 10000 epochs. \ref{subfig:app_toy_ring_init} UMAP visualization initialized with the original data. \ref{subfig:app_toy_ring_init_10000} Same as \ref{subfig:app_toy_ring_init} but optimized for 10000 epochs.}
    \label{fig:add_toy_ring}
\end{figure}

\begin{figure}[tb]
    \begin{subfigure}[t]{0.33\textwidth}
        \centering
        \includegraphics[width=.9\linewidth]{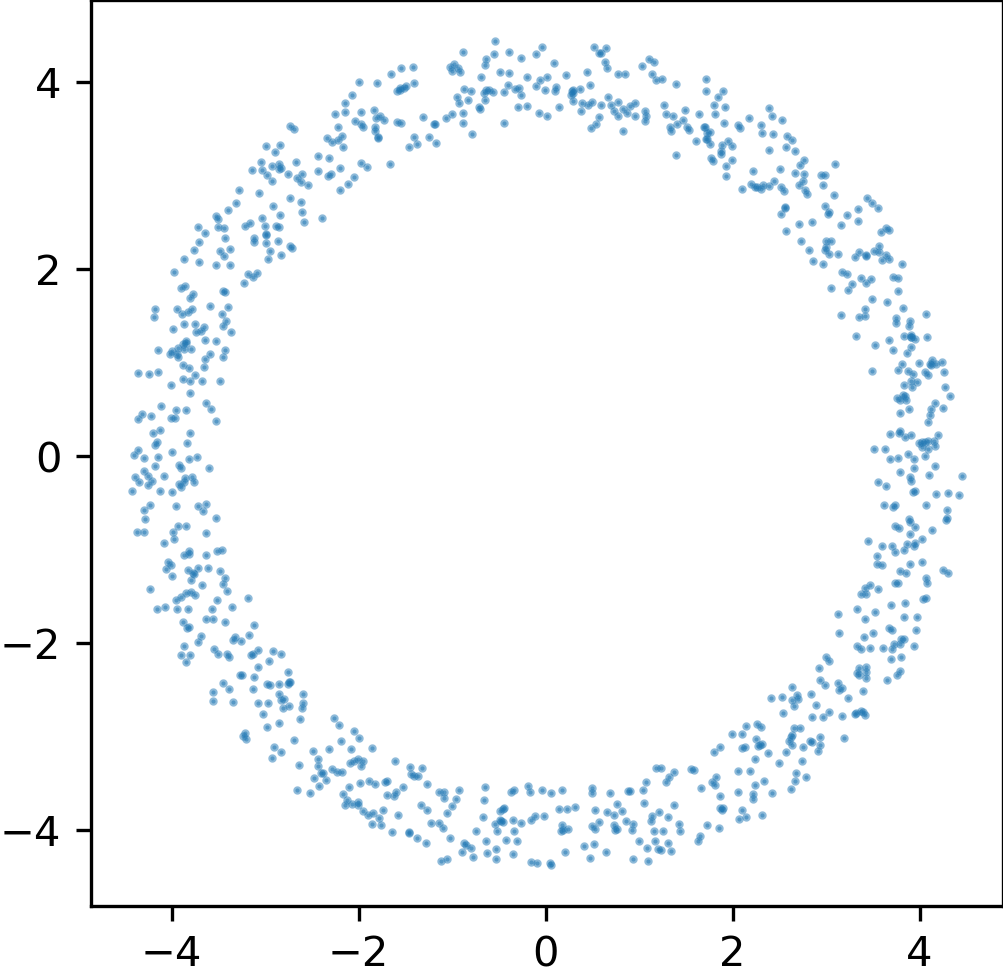}
        \caption{Original data}
        \label{subfig:toy_ring_original_push_tail}
    \end{subfigure}%
    \begin{subfigure}[t]{0.33\textwidth}
        \centering
        \includegraphics[width=.9\linewidth]{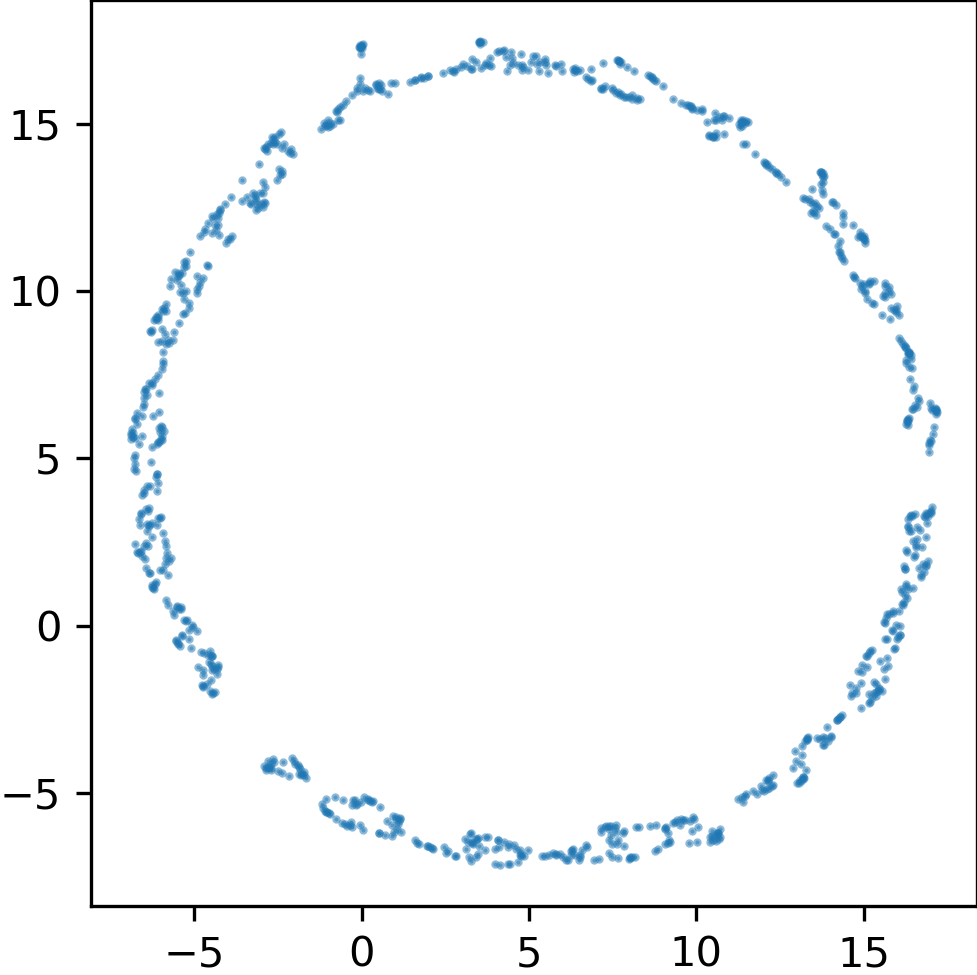}
        \caption{UMAP}
        \label{subfig:toy_ring_UMAP_push_tail}
    \end{subfigure}%
    \begin{subfigure}[t]{0.33\textwidth}
        \centering
        \includegraphics[width=.9\linewidth]{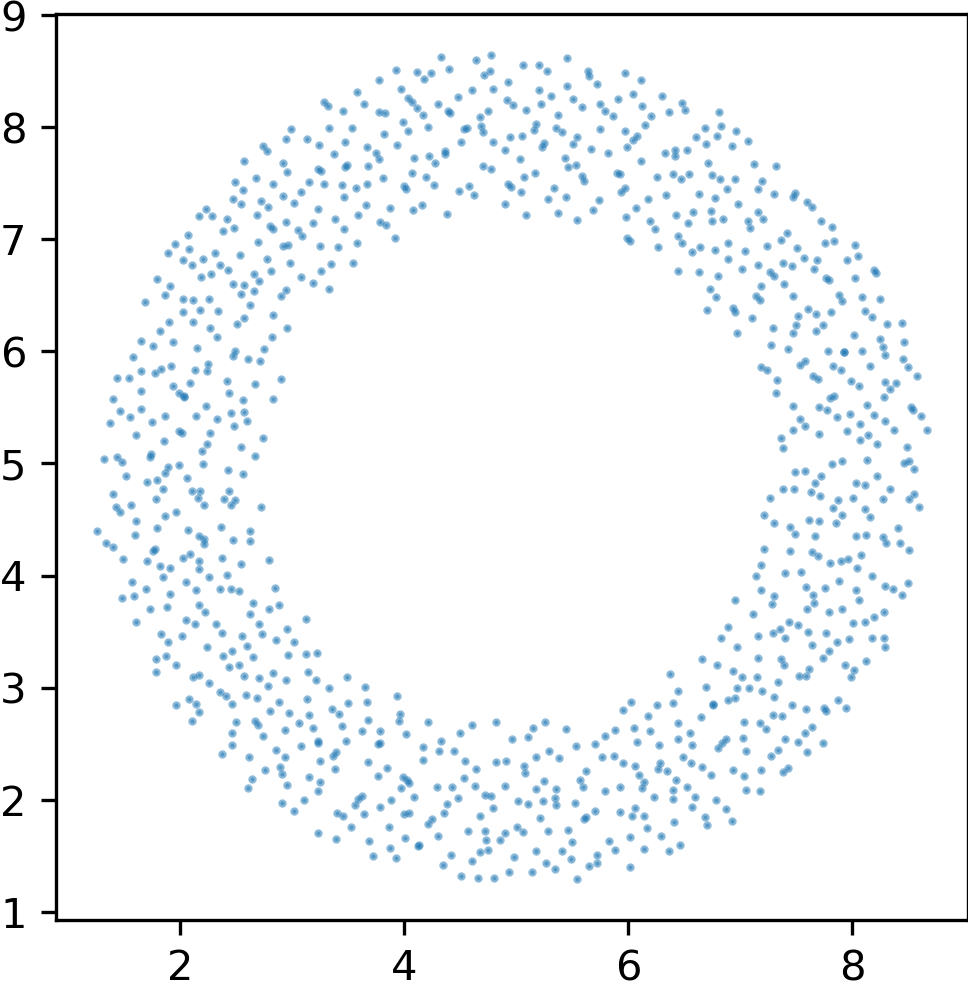}
        \caption{UMAP from dense similarities}
        \label{subfig:toy_ring_dense_push_tail}
    \end{subfigure}
    \caption{Same as figure~\ref{fig:toy_ring} but here the tail of a negative sample is repelled from its head. \ref{subfig:toy_ring_UMAP_push_tail} looks similarly over-contracted but slightly rounder than~\ref{subfig:toy_ring_UMAP}. \ref{subfig:toy_ring_dense_push_tail} shows wider than expected ring structure similar to~\ref{subfig:toy_ring_dense} but without the spurious curves. Instead the radius of the ring is smaller than in the original. Both the larger ring width and the smaller radius match the analysis in Section~\ref{subsec:interpret_toy_ring}.}
    \label{fig:toy_ring_push_tail}
\end{figure}

\begin{figure}[htb]
    \begin{subfigure}[t]{0.33\textwidth}
        \centering
        \includegraphics[width=.9\linewidth]{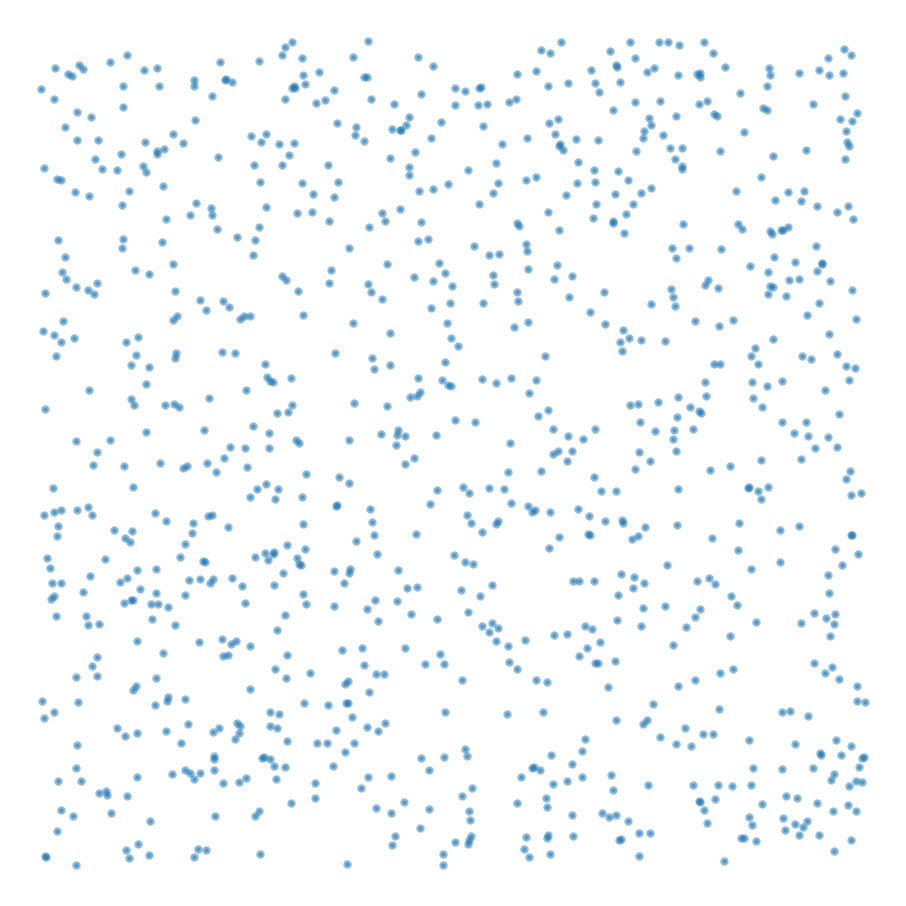}
        \caption{Original data}
        \label{subfig:toy_uniform_original}
    \end{subfigure}%
    \begin{subfigure}[t]{0.33\textwidth}
        \centering
        \includegraphics[width=.9\linewidth]{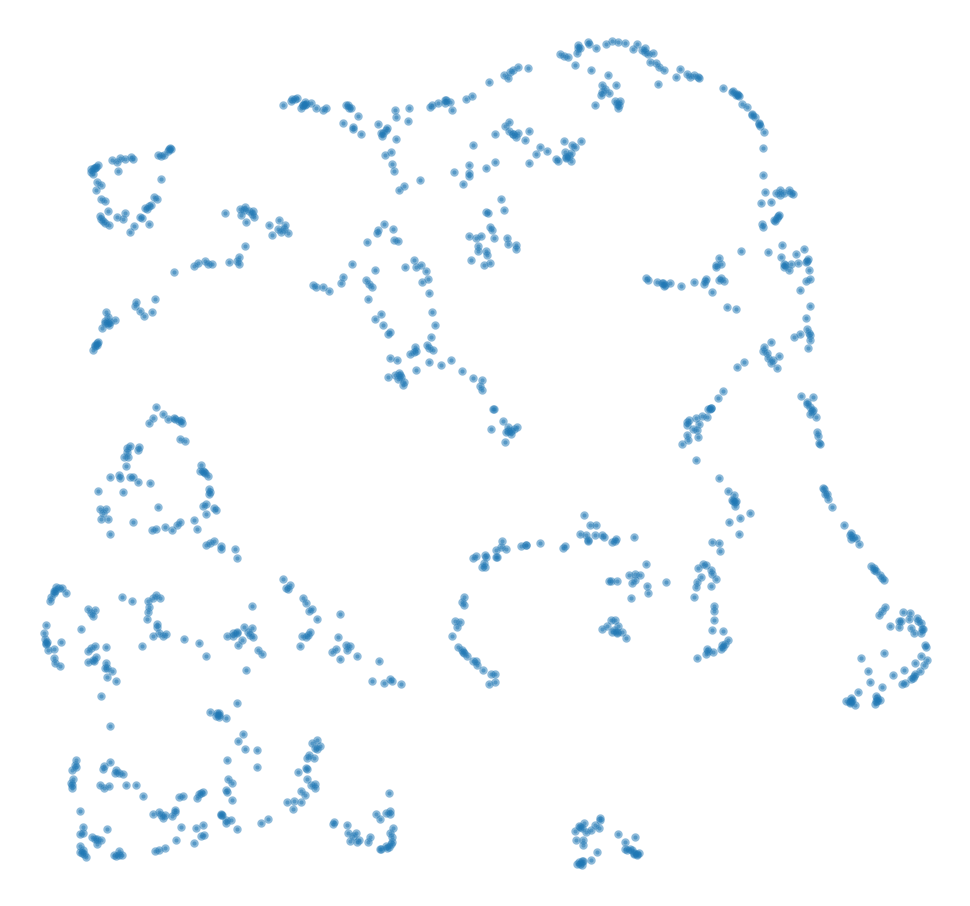}
        \caption{UMAP}
        \label{subfig:toy_uniform_UMAP}
    \end{subfigure}%
    \begin{subfigure}[t]{0.33\textwidth}
        \centering
        \includegraphics[width=.9\linewidth]{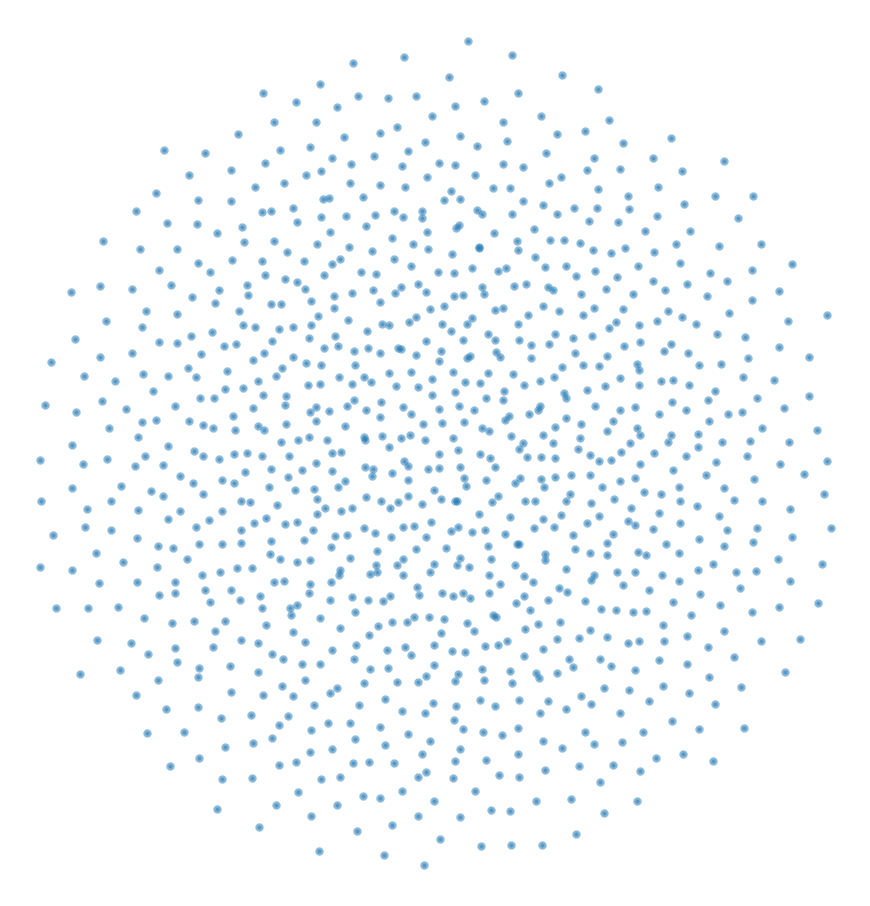}
        \caption{UMAP from dense similarities}
        \label{subfig:toy_uniform_dense}
    \end{subfigure}
    \caption{UMAP does not preserve the data even when no dimension reduction is required. \ref{subfig:toy_uniform_original}~Original data consisting of 1000 uniform samples from a unit square in 2D. \ref{subfig:toy_uniform_UMAP}~Result of UMAP after 10000 epochs, initialized with the original data. The embedding is much more clustered than the original data. \ref{subfig:toy_uniform_dense}~Result of UMAP after 10000 epochs for dense input space similarities computed from the original data with $\phi$, initialized with the original embedding. No change would be optimal in this setting. Instead the output is circular with slightly higher density in the middle. It appears even more regular than the original data.}
    \label{fig:toy_uniform}
\end{figure}

\begin{figure}
    \begin{subfigure}[b]{0.45\textwidth}
        \centering
        \includegraphics[width=0.9\linewidth]{figures/c_elegans_densmap_no_leg_seed_0.png}
        \caption{UMAP}
        \label{subfig:c_elegans_UMAP_with_leg}
    \end{subfigure}%
    \begin{subfigure}[b]{0.45\textwidth}
        \centering
        \includegraphics[width=\linewidth]{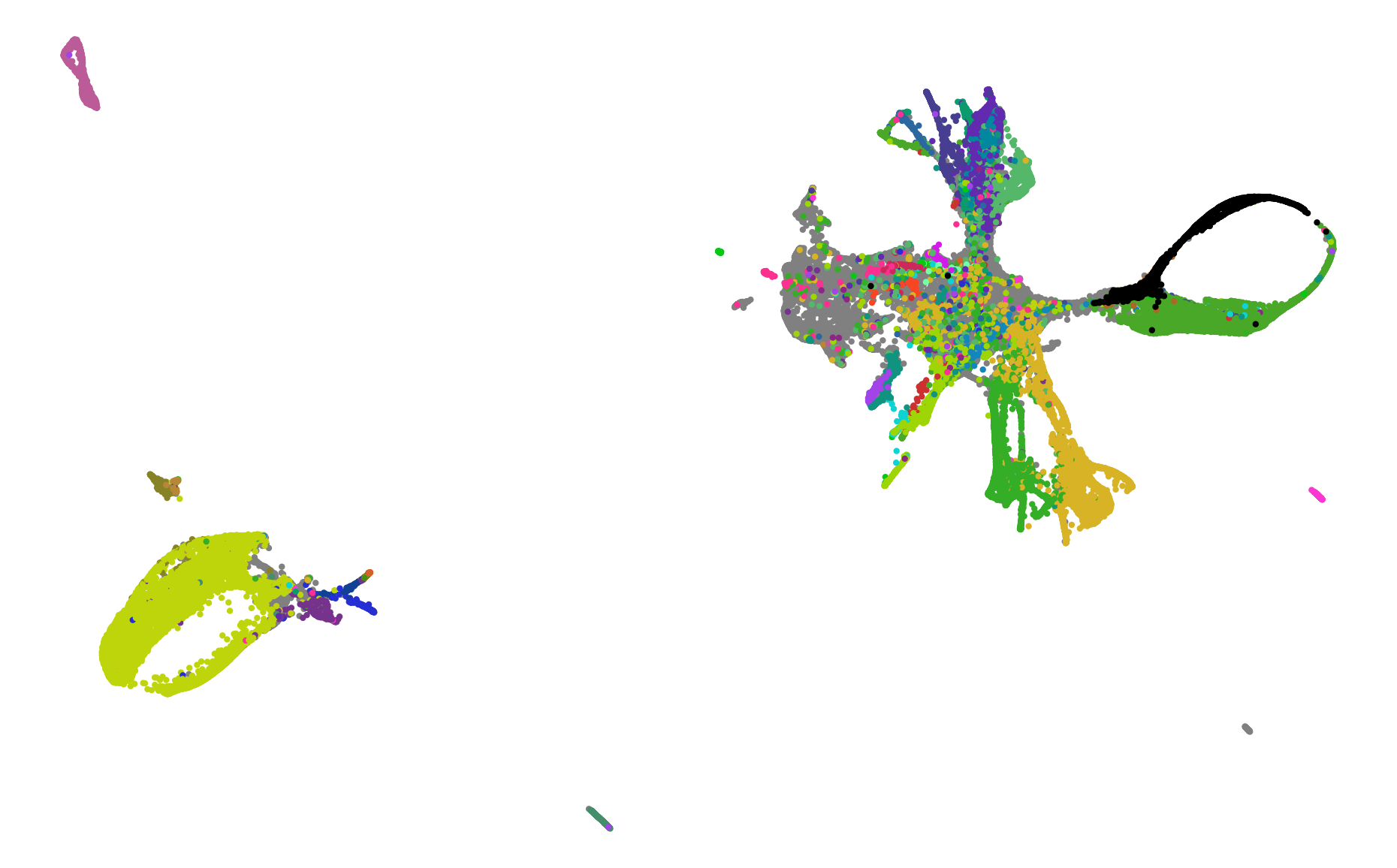}
        \caption{Shared kNN}
        \label{subfig:c_elegans_kNN}
    \end{subfigure}
    
    \begin{subfigure}[b]{0.33\textwidth}
        \centering
        \includegraphics[width=0.9\linewidth]{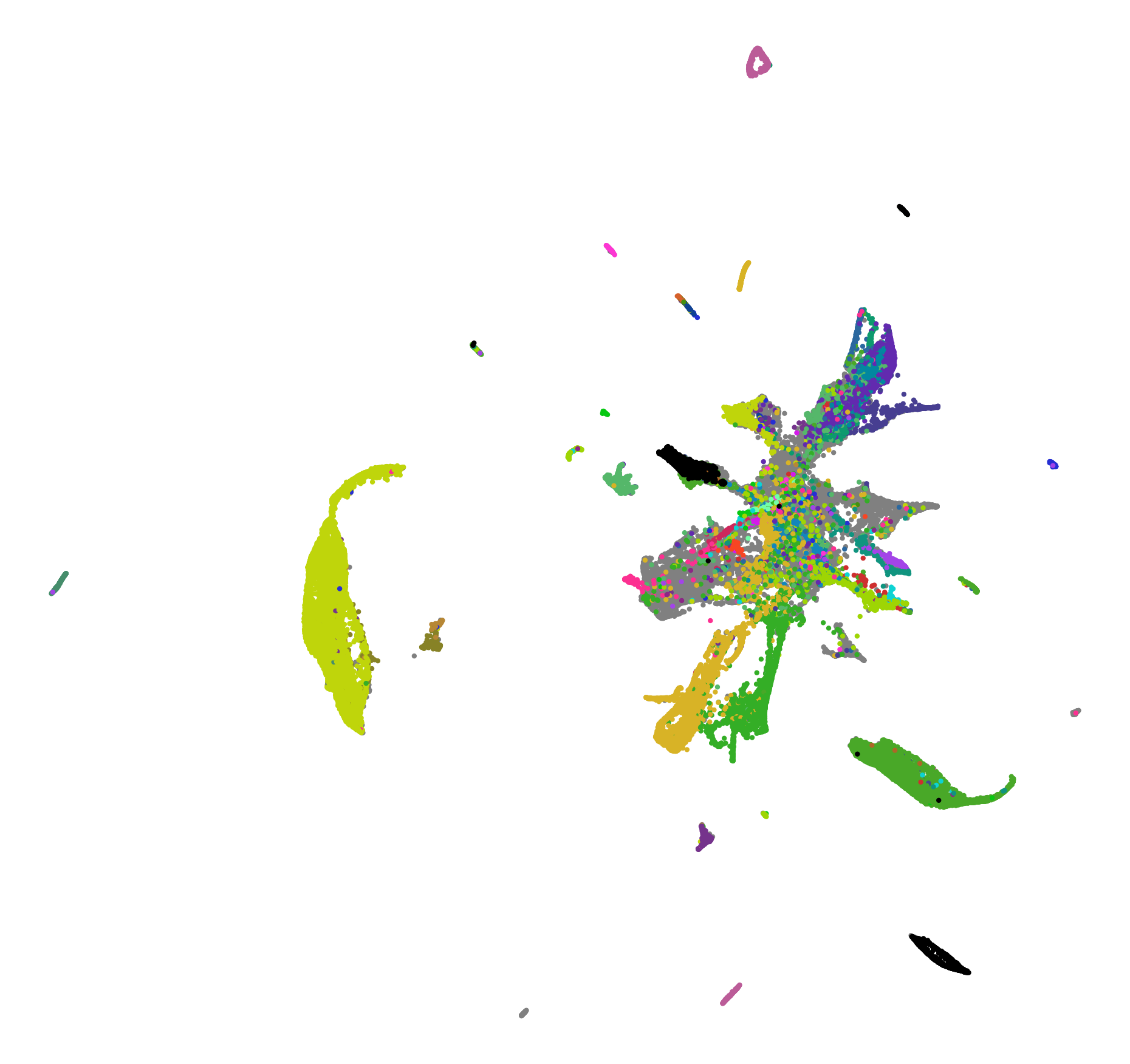}
        \caption{Permuted}
        \label{subfig:c_elegans_perm}
    \end{subfigure}%
    \begin{subfigure}[b]{0.3\textwidth}
        \centering
        \includegraphics[width=0.9\linewidth]{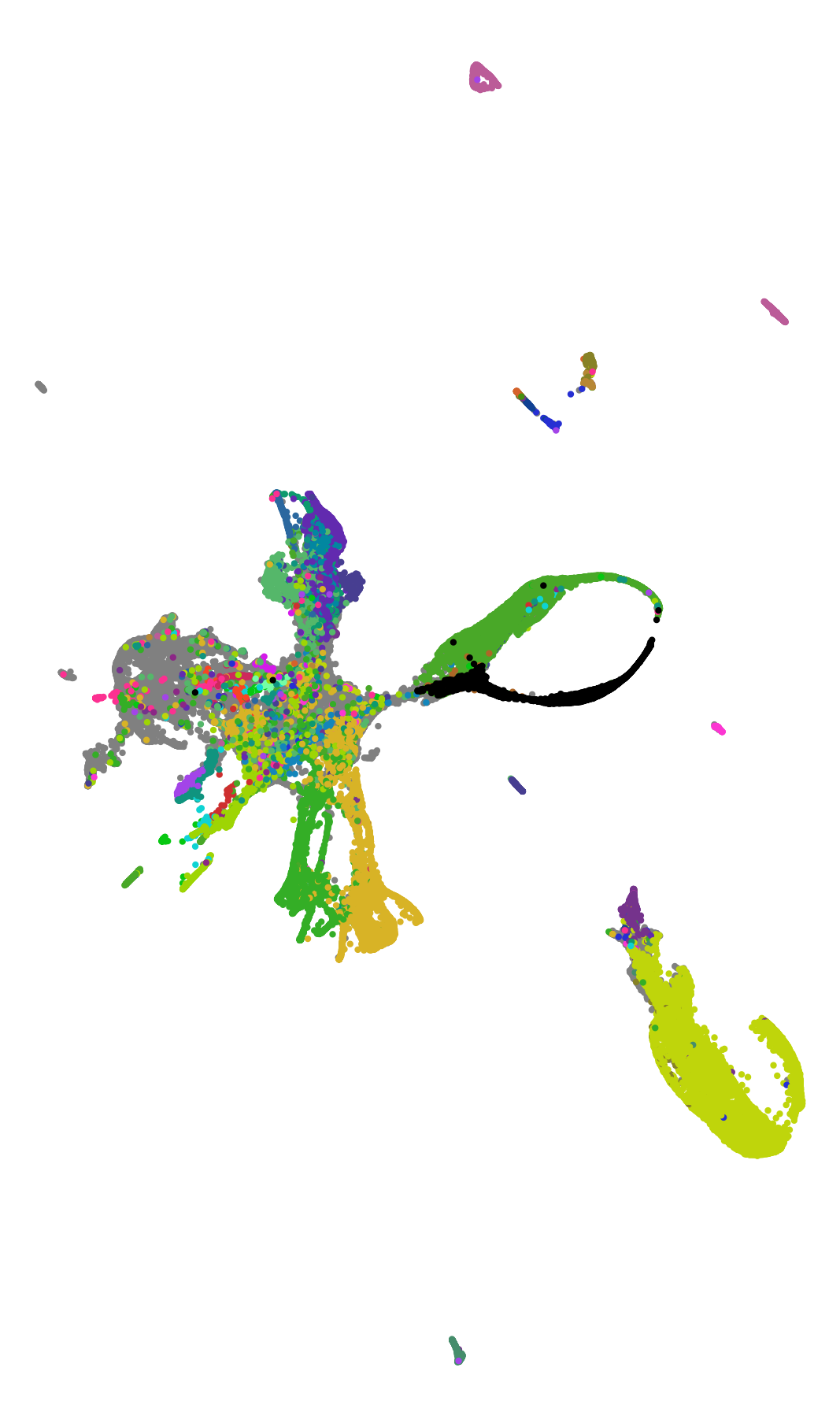}
        \caption{Uniformly random}
        \label{subfig:c_elegans_uni}
    \end{subfigure}%
    \begin{subfigure}[b]{0.33\textwidth}
        \centering
        \includegraphics[width=0.9\linewidth]{figures/c_elegans_densmap_inv_seed_0.png}
        \caption{Inverted}
        \label{subfig:c_elegans_inv_app}
    \end{subfigure}
    
    \begin{subfigure}[b]{\textwidth}
        \centering
        \includegraphics[width=0.9\linewidth]{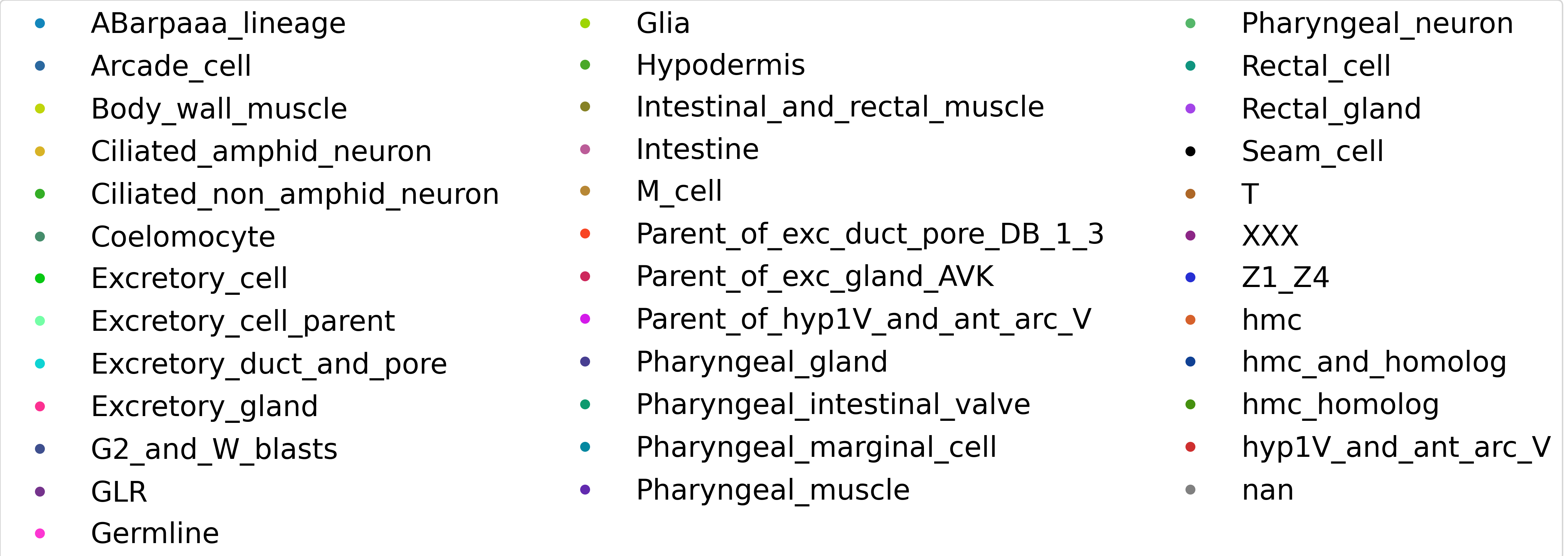}
    \end{subfigure}
\end{figure}
\clearpage
\begin{minipage}{\linewidth}
\captionsetup{type=figure}
    \caption[]{ The precise value of the positive $\mu_{ij}$'s matters little: UMAP produces qualitatively similar results even for severely perturbed $\mu_{ij}$. The panels depict UMAP visualizations based on the hyperparameters in~\cite{narayan2021assessing} but with disturbed positive high-dimensional similarities. \ref{subfig:c_elegans_UMAP_with_leg}: Usual UMAP $\mu_{ij}$'s. 
    \ref{subfig:c_elegans_kNN}: Positive $\mu_{ij}$ all set to one, so that the weights encode the shared $k$NN graph as done in~\cite{bohm2020unifying}.
    \ref{subfig:c_elegans_perm}: Positive $\mu_{ij}$ randomly permuted.
    \ref{subfig:c_elegans_uni}: Positive $\mu_{ij}$ overwritten by uniform random samples from $[0,1]$. 
    \ref{subfig:c_elegans_inv_app}: Positive $\mu_{ij}$ filtered as in UMAP's optimization procedure (set all weights to zero below $\max \mu_{ij} / \text{n\_epochs}$) and inverted at the minimal positive value $\mu_{ij} = \min_{ab} \mu_{ab} / \mu_{ij}$. 
    Amazingly, the visualizations still show the main structures identified by the unimpaired
    UMAP. While \ref{subfig:c_elegans_perm} tears up the seam cells, \ref{subfig:c_elegans_inv_app} even places the outliers conveniently compactly around the main structure. The level of similarity seems particularly high when compared Figure~\ref{fig:c_elegans_random_seeds} which shows that the global placement of subgroups as well as whether loops are open or closed (e.g. seam cells and hypdoermis cells) depends even on the random seed. We used random seed $0$ in this figure. All C. elegans UMAP embeddings were subjectively flipped and rotated by multiples of $\pi/2$ to ease a visual comparison.}
    \label{fig:c_elegans_perturbed}
\end{minipage}

\begin{figure}[ht]
    \begin{subfigure}[b]{0.33\textwidth}
        \centering
        \includegraphics[width=0.9\linewidth]{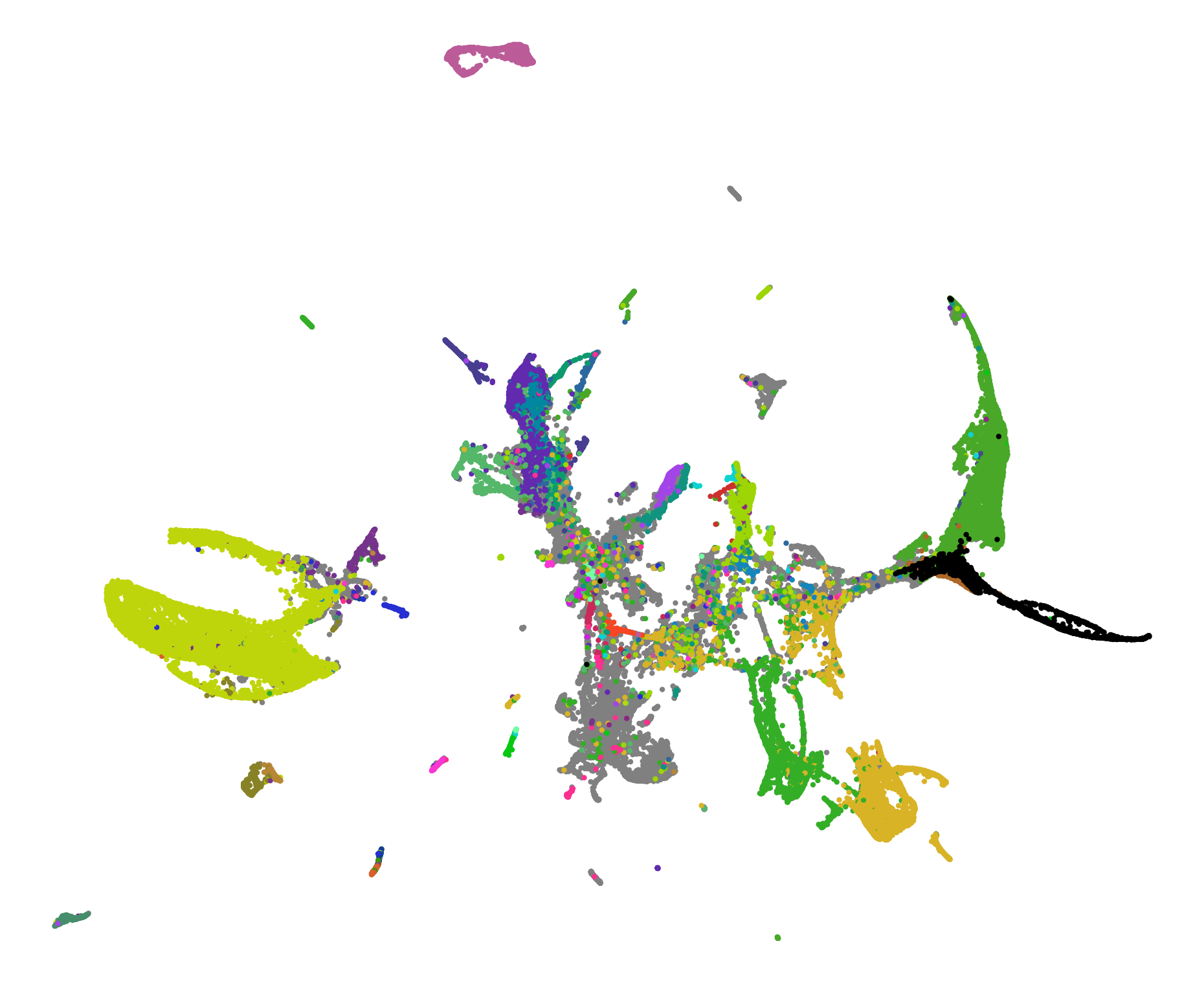}
        \caption{UMAP}
        \label{subfig:c_elegans_UMAP_push_tail}
    \end{subfigure}%
    \begin{subfigure}[b]{0.3\textwidth}
        \centering
        \includegraphics[width=0.9\linewidth]{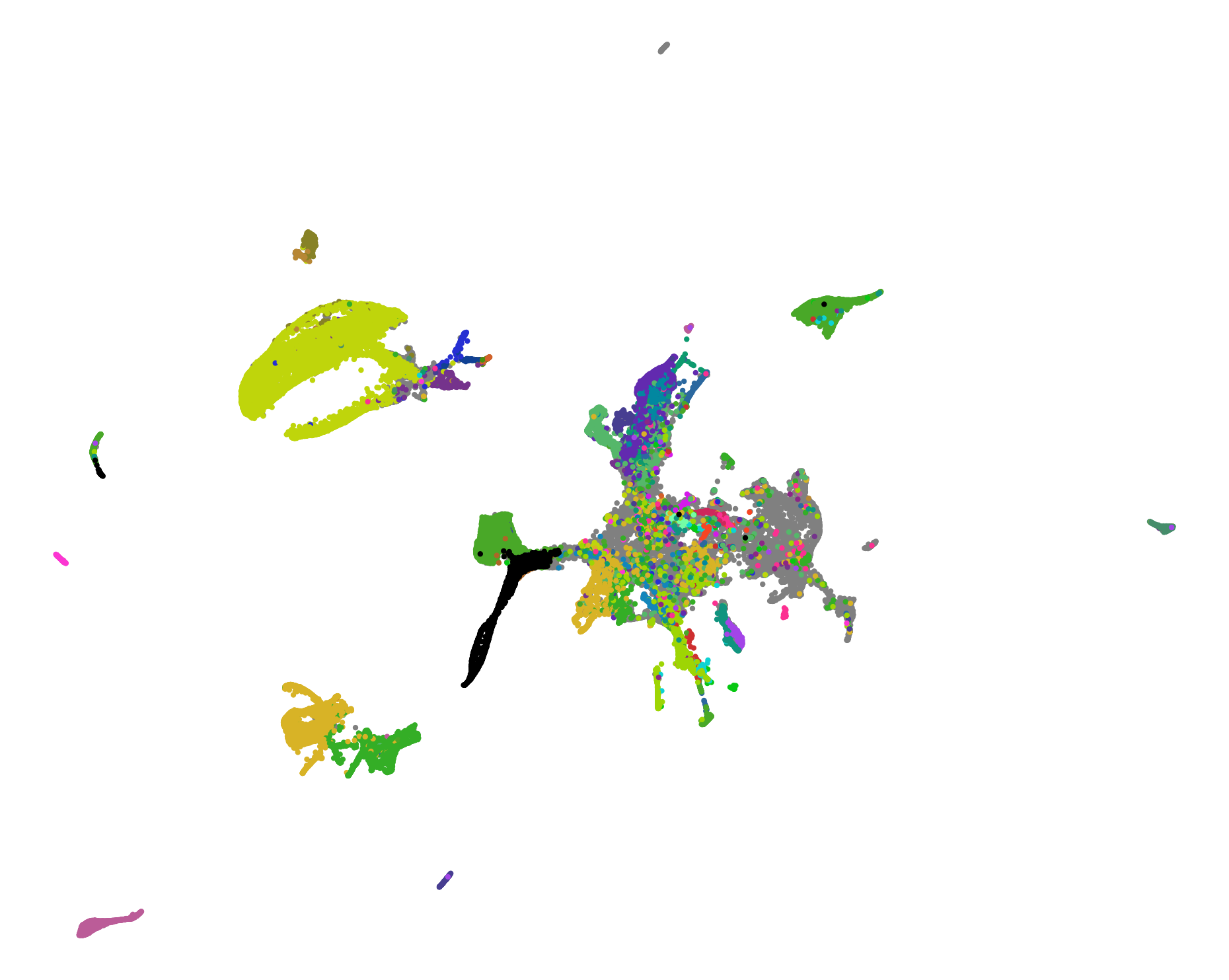}
        \caption{Uniformly random}
        \label{subfig:c_elegans_uni_push_tail}
    \end{subfigure}%
    \begin{subfigure}[b]{0.3\textwidth}
        \centering
        \includegraphics[width=\linewidth]{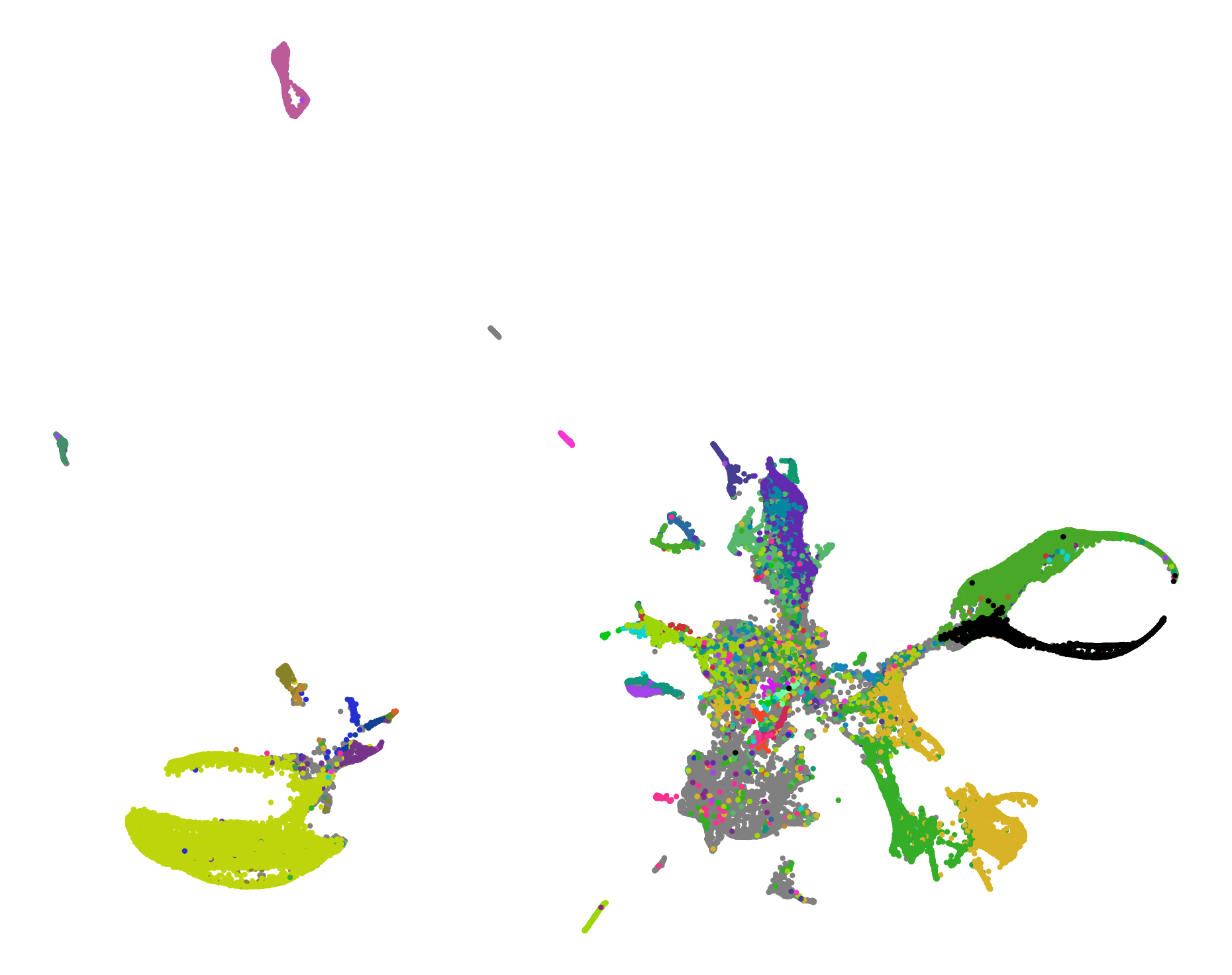}
        \caption{Shared kNN}
        \label{subfig:c_elegans_kNN_push_tail}
    \end{subfigure}
    
    \begin{subfigure}[b]{0.4\textwidth}
        \centering
        \includegraphics[width=0.9\linewidth]{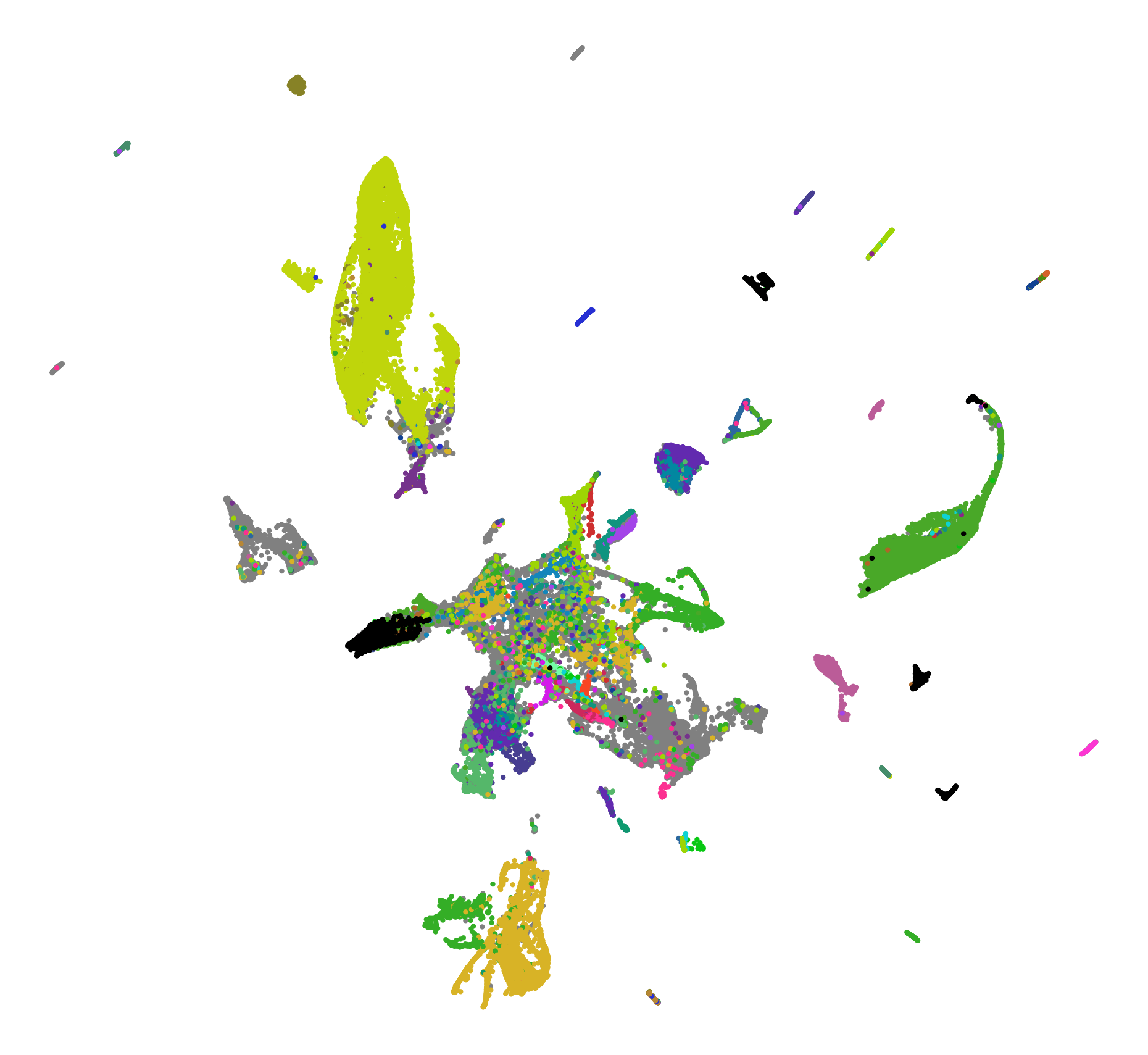}
        \caption{Permuted}
        \label{subfig:c_elegans_perm_push_tail}
    \end{subfigure}%
    \begin{subfigure}[b]{0.4\textwidth}
        \centering
        \includegraphics[width=0.9\linewidth]{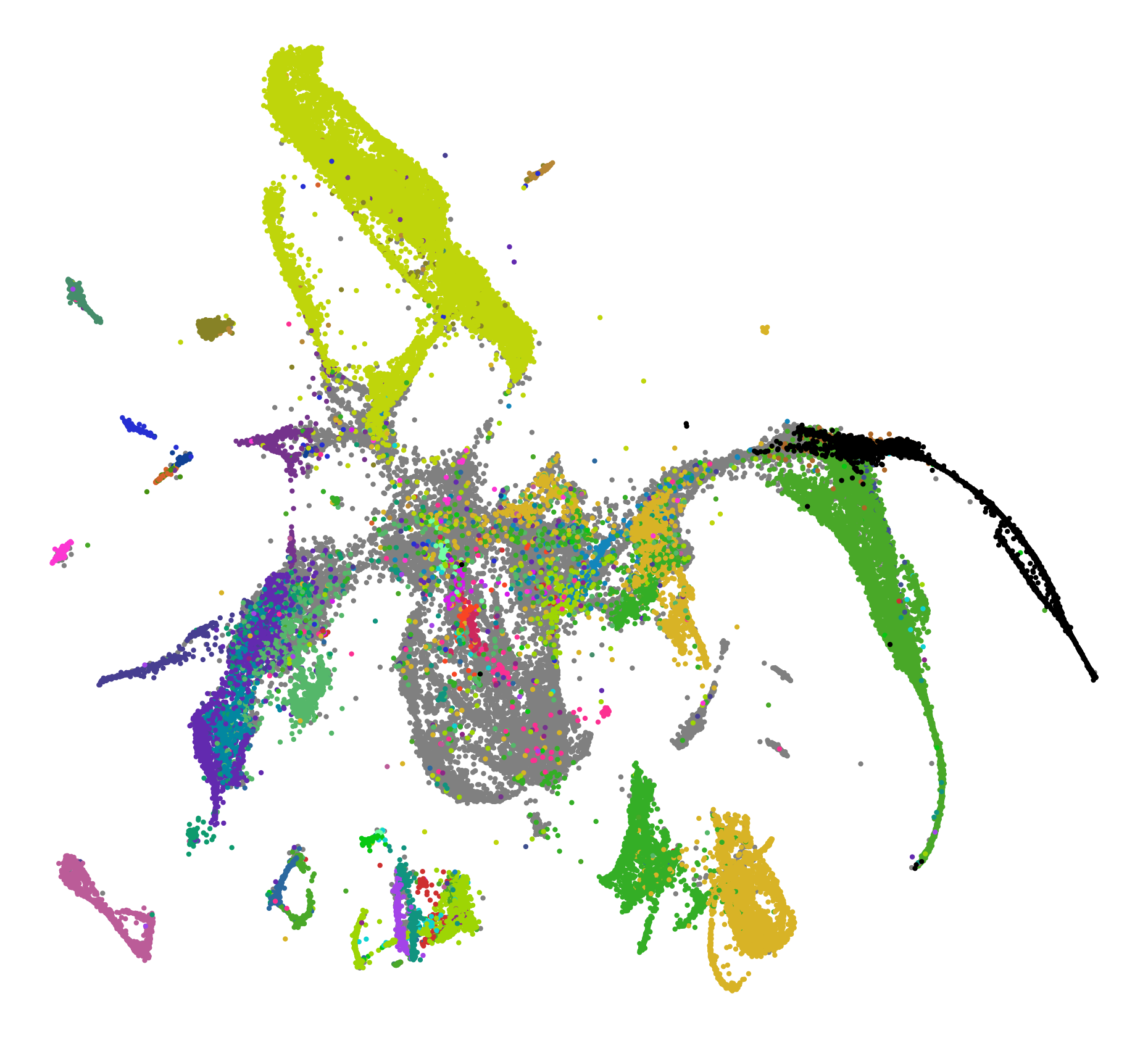}
        \caption{Inverted}
        \label{subfig:c_elegans_inv_push_tail}
    \end{subfigure}
    
    \begin{subfigure}[b]{\textwidth}
        \centering
        \includegraphics[width=0.9\linewidth]{figures/c_elegans_leg.png}
    \end{subfigure}
    \caption{Same as Figure~\ref{fig:c_elegans_perturbed} but here the tail of a negative sample is repelled from its head. There is little qualitative difference between Figure~\ref{fig:c_elegans_perturbed} and this figure overall.}
    \label{fig:c_elegans_perturbed_push_tail}
\end{figure}

\end{document}